\newlist{myEnumerate}{enumerate}{9}
\setlist[myEnumerate,1]{label=(\arabic*)}
\setlist[myEnumerate,2]{label=(\Roman*)}
\setlist[myEnumerate,3]{label=(\Alph*)}
\setlist[myEnumerate,4]{label=(\roman*)}
\setlist[myEnumerate,5]{label=(\alph*)}
\setlist[myEnumerate,6]{label=(\arabic*)}
\setlist[myEnumerate,7]{label=(\Roman*)}
\setlist[myEnumerate,8]{label=(\Alph*)}
\setlist[myEnumerate,9]{label=(\roman*)}
\definecolor{green}{rgb}{0.0, 0.5, 0.0}
\crefname{lemma}{lemma}{lemmata}
\crefname{claim}{claim}{claims}
\crefname{theorem}{theorem}{theorems}
\crefname{proposition}{proposition}{propositions}
\crefname{corollary}{corollary}{corollaries}
\crefname{claim}{claim}{claims}
\crefname{remark}{remark}{remarks}
\crefname{definition}{definition}{definitions}
\crefname{fact}{fact}{facts}
\crefname{question}{question}{questions}
\crefname{condition}{condition}{conditions}
\crefname{algorithm}{algorithm}{algorithms}
\crefname{assumption}{assumption}{assumptions}
\crefname{notation}{notation}{notation}
\crefname{problem}{Problem}{problem}
\crefname{cond}{Condition}{Conditions}
\crefname{contModel}{Contamination Model}{Contamination Models}
\crefname{ineq}{Inequality}{Inequality}
\crefname{sub}{Subsection}{Subsection}
\crefname{sdp}{SDP}{SDP}
\crefname{lp}{LP}{LP}
   \par\noindent{\bfseries\upshape Proof Sketch\ }%
\newtheorem{theorem}{Theorem}[section]
\newtheorem{lemma}[theorem]{Lemma}
\newtheorem{proposition}[theorem]{Proposition}
\newtheorem{definition}[theorem]{Definition}
\newtheorem{problem}{Problem}
\newtheorem{fact}[theorem]{Fact}
\theoremstyle{definition}
\newtheorem{condition}[theorem]{Condition}
\newcommand{\eps}{\epsilon}
\renewcommand{\tilde}{\widetilde}
\renewcommand{\Pr}{\operatorname*{\mathbf{Pr}}}
\newcommand{\E}{\operatorname*{\mathbf{E}}}
\newcommand{\poly}{\operatorname*{\mathrm{poly}}}
\renewcommand{\d}{\mathrm{d}}
\def\R{\mathbb R}
\def\Z{\mathbb Z}
\newcommand{\cD}{\mathcal{D}}
\newcommand{\cE}{\mathcal{E}}
\newcommand{\cN}{\mathcal{N}}
\newcommand{\cS}{\mathcal{S}}
\newcommand{\Paren}[1]{\left(#1\right)}
\newcommand{\abs}[1]{\lvert#1\rvert}
\newcommand{\Norm}[1]{\left\lVert#1\right\rVert}
\newcommand{\hide}[1]{}
\newcommand{\eqdef}{\stackrel{{\mathrm {\footnotesize def}}}{=}}
\newcommand{\fr}{\textnormal{F}}
\def\d{\mathrm{d}}
\newcommand{\STAT}{\mathrm{STAT}}
\def\colorful{0}
\newcommand{\jnote}[1]{\footnote{{\bf [Jasper: {#1}\bf ] }}}
\newcommand{\inote}[1]{\footnote{{\bf [Ilias: {#1}\bf ] }}}
\newcommand{\tnote}[1]{\footnote{{\bf [Thanasis: {#1}\bf ] }}}
\newcommand{\tnew}[1]{\textcolor{orange}{#1}}
\newcommand{\jnote}[1]{}
\newcommand{\inote}[1]{}
\newcommand{\tnote}[1]{}
\newcommand{\tnew}[1]{}
\title{On Learning Parallel Pancakes with Mostly Uniform Weights}
\author{
Ilias Diakonikolas\thanks{Supported by NSF Medium Award CCF-2107079 and an H.I. Romnes Faculty Fellowship.}\\
University of Wisconsin-Madison\\
{\tt ilias@cs.wisc.edu}\\
\and
Daniel M. Kane\thanks{Supported by NSF Medium Award CCF-2107547 and NSF Award CCF-1553288 (CAREER).}\\
University of California, San Diego\\
{\tt dakane@cs.ucsd.edu}
\and
Sushrut Karmalkar\thanks{Some of this work was done while the author was a postdoctoral researcher at UW-Madison.}\\
Microsoft Research, Cambridge\\
{\tt s.sushrut@gmail.com}\\
\and
Jasper C.H. Lee$^\ddag$\\
University of California, Davis\\
{\tt jasperlee@ucdavis.edu}\\
\and
Thanasis Pittas\thanks{Supported by NSF Medium Award CCF-2107079.}\\
University of Wisconsin-Madison\\
{\tt pittas@wisc.edu}\\
}
\begin{document}

\maketitle

\begin{abstract}
We study the complexity of learning 
$k$-mixtures of Gaussians ($k$-GMMs) on $\R^d$. 
This task is known to have complexity $d^{\Omega(k)}$ in full 
generality. To circumvent this exponential 
lower bound on the number of components, 
research has focused on learning 
families of GMMs satisfying additional structural properties. 
A natural assumption posits that 
the component weights are not exponentially 
small and that the components have the same 
unknown covariance. 
Recent work gave a $d^{O(\log(1/w_{\min}))}$-time 
algorithm for this class of GMMs, where $w_{\min}$ is 
the minimum weight. Our first main result is 
a Statistical Query (SQ) lower bound showing 
that this quasi-polynomial upper bound is essentially 
best possible, even for 
the special case of uniform weights. 
Specifically, we show that it is SQ-hard 
to distinguish between such a mixture and the standard Gaussian. 
We further explore how the distribution of weights
affects the complexity of this task. Our second main result
is a quasi-polynomial upper bound for the aforementioned 
testing task when most of the weights are uniform while 
a small fraction of the weights are potentially arbitrary.  
\end{abstract}

\thispagestyle{empty}

\newpage
\setcounter{page}{1}

\section{Introduction}\label{sec:intro}

Learning mixture models in high dimensions is a classic and fundamental task with applications in a plethora of domains, such as bioinformatics, astrophysics, and 
marketing~\cite{Lindsay:95, GGMM10}; see \cite{titterington_85} for an extensive list of applications.
The prototypical case is that of Gaussian Mixture Models (GMMs) which is one of the most studied problems in statistics and machine learning, with a large body of research over the past few decades,
e.g.,~\cite{VW02,KanSV05,AchMcs05} --- see \Cref{sec:relatedwork} for a detailed literature review.

The setup is as follows: the learning algorithm observes i.i.d.~samples from a $k$-component GMM model ($k$-GMM) in $\mathbb{R}^d$, $P = \sum_{i=1}^k w_i \mathcal{N}(\mu_i, \Sigma_i)$, and the goal is to either learn the mixture in total variation distance, learn its parameters, or cluster samples from the GMM correctly.
The first task is known to be information-theoretically feasible with $\poly(d, k)$ samples, as are the second and third, provided the components are sufficiently well-separated, however known algorithms often require more.
While the particular case of spherical mixtures (i.e., $\Sigma_i = I$) can be learned in $\poly(d,k)$ time and samples, \cite{LL22, diakonikolas2024implicit}, the best-known algorithms for learning arbitrary GMMs (i.e.~with arbitrary weights, and arbitrary and different component covariances) have sample complexity that scales with $d^{O(k)}$ \cite{BDJ+20}.
In this paper, we are concerned with an intermediate regime between the two extremes, where the components share an unknown but common covariance matrix.

\cite{DiaKS17} showed that for such mixtures, any sub-exponential time algorithm in the \emph{Statistical Query} (SQ) model requires a sample complexity of  $d^{\Omega(k)}$. The SQ model consists of algorithms that, instead of drawing samples from the data distribution, make queries to approximate expectations of bounded functions (formally defined in \Cref{def:stat}).

The hard instances they proposed are “parallel pancakes” GMMs—mixtures of pairwise-separated Gaussians whose component means are collinear along an unknown direction $v$, with arbitrary variance in the  $v$-direction and identity covariance in the orthogonal subspace. This will be formally defined in \Cref{def:pancakes}.
\cite{BRST21, GVV22} further extended the hardness result to general algorithms but under cryptographic assumptions; similar hardness results were also shown for sum-of-squares algorithms~\cite{diakonikolas2024sum}.%

While these results together might suggest that $k$-GMM learning is fully understood algorithmically, the current theory remains unsatisfactory, in the following sense: the hard instances developed in \cite{DiaKS17} have rather ill-conditioned mixing weights---some of the mixing weights are $1/\poly(k)$, but others can be as small as $2^{-k}$.
A natural question then is: is it possible to improve the complexity of learning algorithms when all weights are more naturally conditioned, i.e.,~$w_i \geq 1/\poly(k)$ for all $i$?\looseness=-1

This question was considered in \cite{buhai2023beyond, anderson2024dimension}, which study GMMs that have a minimum mixing weight $w_{\min} \geq 1/\poly(k)$ and unknown but common covariance across components.
Under the assumption that the mixture components are separated in total variation distance, they provide an algorithm that can correctly cluster 99\% of the points, using time and sample complexity $d^{\log(1/w_{\min})} \le d^{O(\log k)}$.
In particular, their results apply to parallel pancake instances, showing that it is possible to circumvent the $d^{\Omega(k)}$ (SQ) lower bound under mixing weight assumptions.

These prior results on learning mixtures with restricted mixing weights serve as motivation and the starting point of the present work.
In particular, the first question we study is:
\begin{center}
    \emph{Is it possible to substantially improve the algorithm of \cite{anderson2024dimension} to a $\poly(d,k)$ time algorithm for parallel pancakes when each $w_i \ge 1/\poly(k)$?}
\end{center}
 Our first main result rules out this possibility for SQ algorithms. Specifically, we show in \Cref{thm:main1} that even when the mixing weights are uniform, any SQ algorithm for such instances requires $d^{\Omega(\log k)}$ complexity.
 In fact, the lower bound holds even for the more basic task of distinguishing between a $k$-GMM from that family and $\cN(0,I)$.%

Our second question stems from the fact that the algorithm in \cite{anderson2024dimension} has complexity $d^{\log(1/w_{\min})}$, meaning that a single point with arbitrarily small weight (e.g.,~$2^{-k}$) can result in $d^{k}$ complexity. 
\begin{center}
    \emph{What is the correct complexity dependence on $w_{\min}$ for learning $k$-component parallel pancakes?}
\end{center}
Specifically, we consider again the testing problem of distinguishing between a $k$-parallel-pancake GMM and $\cN(0,I)$, but where $k' \le k$ components can have arbitrary weights while the remaining $k-k'$ points must have uniform weights.
We show that this mixing weight restriction implies that the testing problem can be solved with time and sample $(kd)^{O(k' + \log k)} + (\log k)/w_{\min}$ --- an inverse-linear dependence on $w_{\min}$ instead of quasi-polynomial as suggested by the~\cite{anderson2024dimension} result.
While this testing upper bound does not imply a general learning algorithm for $k$-GMM, it serves as a first step in understanding the nuances of the computational landscape of GMMs with respect to the assumptions on the mixing weights.%

The technical core for both our main results is to determine the maximum number $m$ of moments that $k$-parallel-pancake GMMs can match with $\cN(0,I)$.
Our SQ lower bound (\Cref{thm:main1}) comes from showing that $m = \Omega(\log k)$, by employing a result from design theory.
Our second, algorithmic result (\Cref{thm:main2}) critically builds on an impossibility-of-moment-matching argument (\Cref{prop:main2}), showing that if there are $k' \le k$ arbitrary weights in the $k$-GMM, then $m$ must be $O(\log(k) + k')$.
We show this through a novel proof strategy that bounds the ratio of expectations of appropriately chosen non-negative polynomials that vanish on the points with arbitrary weights.

\subsection{Our Results}

We first formally state the hypothesis testing problem which requires the algorithm to distinguish between a $k$-parallel pancake and the standard Gaussian $\cN(0,I)$.

\begin{problem}[Parallel Pancakes Testing Problem]\label{def:pancakes}
    One has (i.i.d. sample or SQ) access to a distribution $D$ where either:%
    \begin{itemize}[leftmargin=2em, itemsep=0.1em, topsep=7pt, partopsep=0em]
        \item (Null Hypothesis) $D = \cN(0,I)$.
        \item (Alternative Hypothesis) $D$ is a Gaussian mixture of the form $\sum_{i \in [k]}w_i \cN(v \mu_i, I - \delta vv^\top)$, for some unit vector $v \in \cS^{d-1}$, centers $\mu_i \in \R$, and weights $w_i \geq 0$ for $i \in [k]$, with $\sum_{i \in [k]} w_i = 1$. That is, $D$ is a $k$-GMM with collinear centers and variance $1-\delta$ along the direction of the centers and $1$ in every orthogonal direction. 
    \end{itemize}
    The goal is to distinguish between the two cases.
\end{problem}

Before presenting our first main result, we recall the definition of SQ algorithms. These algorithms, instead of directly accessing samples, query expectations of bounded functions of the distribution. The SQ model, introduced in~\cite{Kearns98}, has since been extensively studied in various contexts~\cite{Feldman16b}. Many supervised learning algorithms, and several known machine learning techniques are implementable using SQs~\cite{FelGRVX17, FelGV17}.%

\begin{definition}[STAT Oracle]\label{def:stat}
Let $D$ be a distribution on $\R^d$. A statistical query is a bounded function $f : \R^d {\to} [-1,1]$. Given $f$ and an accuracy parameter $\tau >0$, $\STAT(\tau)$ returns  a $v \in \R$ such that $|v - \E_{x \sim D}[f(x)]| \leq \tau$.%
\end{definition}

Since a call to $\STAT(\tau)$ can be simulated in the standard PAC model by averaging $1/\tau^2$ samples, $\tau$ serves as the SQ model’s analog to sample complexity. An \emph{information-computation} tradeoff in the SQ model states that any SQ algorithm for a given problem must either make a large number of queries or at least one query with very fine accuracy (which informally implies a tradeoff between sample complexity and runtime in the standard PAC model).
We are now ready to state our first main result.

\begin{restatable}[SQ Lower Bound for Uniform Weights]{theorem}{MAINRESULTI}\label{thm:main1}
    Let $C$ be a sufficiently large absolute constant, $k>C$ and $d \geq (\log k \log d)^{2}$ be integers. 
   If we further restrict the alternative hypothesis in \Cref{def:pancakes} to have $w_i = 1/k$ for all $i \in [k]$,
    any SQ algorithm requires either $2^{d^{\Omega(1)}}$ queries or at least one query of accuracy $d^{-\Omega(\log k)}$.
\end{restatable}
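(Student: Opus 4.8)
The plan is to reduce the testing problem to a statistical-query lower bound via the standard framework of Diakonikolas--Kane--Stewart, which converts a collection of distributions that are ``nearly uncorrelated'' with respect to a reference measure into an SQ lower bound. Concretely, for each unit vector $v\in\cS^{d-1}$ I would form the hidden-direction distribution $P_v = \sum_{i\in[k]} \tfrac1k\,\cN(v\mu_i, I-\delta vv^\top)$, whose $v$-marginal is a fixed one-dimensional mixture $A$ of $k$ equally weighted Gaussians of variance $1-\delta$, and whose marginal in the orthogonal $(d-1)$-dimensional subspace is standard Gaussian. The key quantity controlling the SQ complexity is the chi-square correlation $\chi(P_u, P_v; \cN(0,I))$ between two such instances with directions $u,v$; by the usual tensorization/rotation-invariance computation (see DKS), this correlation is bounded by a function of $\iprod{u,v}$ and, crucially, by $\abs{\iprod{u,v}}^{m}$ up to constants, where $m$ is the number of low-degree moments that the one-dimensional mixture $A$ matches with the standard Gaussian $\cN(0,1)$. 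Then a packing argument produces $2^{d^{\Omega(1)}}$ directions that are pairwise nearly orthogonal (inner products at most $d^{-\Omega(1)}$), and feeding this into the generic SQ-dimension lower bound yields that any SQ algorithm needs either $2^{d^{\Omega(1)}}$ queries or one query of accuracy roughly $d^{-\Omega(m)}$.

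The heart of the matter, then, is to show that the uniform-weight one-dimensional mixture can be made to match $m = \Omega(\log k)$ moments with $\cN(0,1)$: this is exactly the ``maximum number of matching moments'' statement flagged in the introduction, and it is where I expect the real work to lie. I would look for $k$ real points $\mu_1,\dots,\mu_k$ (after the overall variance rescaling by $1-\delta$) such that the empirical distribution putting mass $1/k$ on each $\mu_i$, convolved with $\cN(0,1-\delta)$, agrees with $\cN(0,1)$ on all moments up to degree $m$. Equivalently, using the fact that moments of a mixture are determined by the moments of the mixing measure and of the Gaussian kernel via the binomial/Hermite relation, one needs the uniform measure on $\{\mu_i\}$ to match the moments of a scaled Gaussian up to degree $m$; writing this out, the odd moments vanish by taking the point set symmetric about $0$, and the even-moment constraints become a small system of polynomial equations in the (squared) locations. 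The paper signals that this is achieved via ``a result from design theory'' --- the natural candidate is to take the $\mu_i$ to be (a scaling of) the points of a suitable combinatorial/Gaussian design, e.g. the $\pm 1$-weighted lattice points or roots arising from orthogonal-polynomial quadrature, where a $t$-design on $k$ points automatically kills all moments up to degree $\Theta(\log k)$ when $k$ is exponential in $t$. I would invoke such a construction as a black box to obtain a symmetric $k$-point multiset matching $m=c\log k$ moments of the appropriate Gaussian for an absolute constant $c>0$.

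With $m=\Omega(\log k)$ matched moments in hand, the remaining steps are routine bookkeeping: (i) verify that the chi-square correlation bound $\chi(P_u,P_v;\cN(0,I)) \le \mathrm{poly}(k)\cdot \abs{\iprod{u,v}}^{m} + (\text{exponentially small in }d)$ holds uniformly, using boundedness of the $v$-marginal density ratio $A(x)/\cN(0,1)(x)$ away from the bulk (here the hypothesis $d\ge(\log k\log d)^2$ and the separation induced by $\delta$ are used to control the tails and to ensure the error terms are negligible); (ii) construct, via a probabilistic/Gilbert--Varshamov-type argument on the sphere, a family of $N = 2^{d^{\Omega(1)}}$ unit vectors with pairwise inner products at most $d^{-1/2+o(1)}$; and (iii) apply the generic SQ lower bound of DKS (the ``SQ dimension'' lemma) to the family $\{P_v\}$ with reference distribution $\cN(0,I)$, obtaining that distinguishing from $\cN(0,I)$ requires $N$ queries or accuracy $\tau \le N^{-\Omega(1)} \cdot \mathrm{poly}(k) \le d^{-\Omega(m)} = d^{-\Omega(\log k)}$, after absorbing the $\mathrm{poly}(k)$ and $\mathrm{poly}(d)$ factors into the exponent using $k>C$ and the lower bound on $d$. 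The main obstacle, to reiterate, is pinning down the design-theoretic moment-matching construction with the correct constant and ensuring the resulting one-dimensional mixture has a well-behaved (bounded, not-too-spread-out) density so that the correlation estimate goes through; everything downstream is standard SQ-lower-bound machinery.
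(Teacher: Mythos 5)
Your overall architecture matches the paper's: reduce to the hidden-direction (NGCA) testing problem, observe that the SQ complexity is governed by the number $m$ of moments the one-dimensional mixture matches with $\cN(0,1)$, and note that convolving a discrete moment-matching measure with a Gaussian preserves the matching (the paper does this via the Ornstein--Uhlenbeck operator and \Cref{fact:eigenfunction}). The pairwise-correlation bound, sphere packing, and SQ-dimension argument you sketch in steps (i)--(iii) are exactly the standard machinery the paper invokes as a black box through \Cref{prop:SQhardness_NGCA}, so that downstream part is sound, if redundant to re-derive.

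The genuine gap is in the step you yourself flag as the heart of the matter: the existence of $k$ points such that the \emph{uniform} (equal-weight) measure on them matches $m=\Omega(\log k)$ Gaussian moments. You treat this as a citable black box (``a $t$-design on $k$ points automatically kills all moments up to degree $\Theta(\log k)$ when $k$ is exponential in $t$''), but no such off-the-shelf statement is available, and the concrete candidates you name do not work: Gauss--Hermite quadrature roots match many moments only with their highly non-uniform quadrature weights, and re-weighting them uniformly destroys the matching --- equal weights are the entire difficulty, since the theorem is specifically about uniform mixing weights. The paper's technical core (\Cref{prop:main1}) is precisely to make the design-theoretic route quantitative: it applies Kane's design theorem (\Cref{thm:design}), whose design size is controlled by $K_t=\sup_{p}\,\sup_{x\in I}p(x)/\abs{\inf_{x\in I}p(x)}$ over mean-zero degree-$t$ polynomials; this quantity is infinite if one targets $\cN(0,1)$ on all of $\R$, so the paper first replaces the Gaussian by a quadrature surrogate $Q$ supported on an interval of length $O(\sqrt t)$ that matches $2t-1$ moments (\Cref{lem:quadrature}), and then proves $K_t=2^{O(t)}$ on that interval (\Cref{lem:bound-K}) by combining a Hermite-expansion upper bound on $\sup_{x\in I}p(x)$ with a Carbery--Wright anti-concentration lower bound on $\abs{\inf_{x\in I}p(x)}$, both measured against $\norm{p}_1$. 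Without this (or a genuine substitute, e.g.\ actually solving your symmetric even-moment polynomial system and proving real, suitably bounded solutions exist for $m=c\log k$, which is not routine), the central quantitative claim $m=\Omega(\log k)$ --- and hence the theorem --- is unsupported. A smaller point the paper also handles: the design points must be made distinct so the mixture has weights exactly $1/k$, which is fixed by a tiny perturbation absorbed into the approximate-matching tolerance $\nu$ of \Cref{prop:SQhardness_NGCA}.
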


\paragraph{Remarks} \cite{buhai2023beyond,anderson2024dimension} presented an algorithm for solving \Cref{def:pancakes} using $d^{O(\log k)}$ time and samples (e.g., Theorem 1.1 in the first paper, which was the first to achieve this). Our \Cref{thm:main1} shows that this complexity is best possible.
Notably their work requires the components to be statistically separated, but this is something that we can also ensure by taking $\delta$ sufficiently small (since $\delta$ does not affect the complexity lower bound).

We now move to our second main result.

\begin{restatable}[Testing Algorithm for Parallel Pancakes]{theorem}{MAINTHEOREMII}\label{thm:main2}
    Consider the version of the parallel pancakes hypothesis testing problem (\Cref{def:pancakes}), where $k' \leq k$ of the weights $w_i$ in the Gaussian mixture are unconstrained and the remaining $k-k'$ are assumed to be equal to each other.
    There is an algorithm for that problem which draws $n = O\left((k d/\delta)^{O(k' + \log(k))} + \log(k)/w_{\min}\right)$ samples (where $\delta$ is as in \Cref{def:pancakes} and $w_{\min} = \min_{i \in [k]} w_{i}$ is the smallest weight), has runtime polynomial in $n,d$, and it outputs the correct hypothesis with probability at least $0.99$.%
\end{restatable}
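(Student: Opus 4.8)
The plan is to reduce the $d$-dimensional test to detecting a discrepancy in a single low-degree one-dimensional moment, to estimate that moment via Hermite tensors without knowing the hidden direction $v$, and to pair this with a crude ``far outlier'' test handling components whose mean is very far from the origin.

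\textbf{Reduction to one dimension and to moments.} In the alternative case, after rotating so that $v$ is a coordinate axis, $D=\sum_i w_i\cN(v\mu_i, I-\delta vv^\top)$ factors as the product of the one-dimensional mixture $Q:=\sum_i w_i\cN(\mu_i,1-\delta)$ along $v$ and a standard Gaussian in the orthogonal subspace, so $D$ agrees with $\cN(0,I)$ except in the $v$-marginal. If $\mathrm{He}_\ell$ denotes the degree-$\ell$ Hermite tensor on $\R^d$ (orthonormal w.r.t.\ $\cN(0,I)$), then $T_\ell:=\E_{x\sim D}[\mathrm{He}_\ell(x)]$ is the zero tensor under the null, while under the alternative it equals $c_\ell\, v^{\otimes\ell}$, where $c_\ell$ is the corresponding one-dimensional Hermite coefficient of $Q$; hence $\|T_\ell\|_F=|c_\ell|$, a quantity estimable from samples with no knowledge of $v$. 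By \Cref{prop:main2}, a $k$-parallel-pancake with $k'$ unconstrained weights matches at most $m:=O(\log k+k')$ moments of $\cN(0,1)$, so in the alternative case $c_1,\dots,c_m$ cannot all vanish. What I actually need is the \emph{quantitative} strengthening: if in addition $|\mu_i|\le\Lambda$ for all $i$, then $\max_{1\le\ell\le m}|c_\ell|\ge\tau$ for an explicit $\tau=(kd\Lambda/\delta)^{-O(m)}$.

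\textbf{The test.} Fix $\Lambda_{\mathrm{out}}=O(\sqrt d+\sqrt{\log(k/w_{\min})})$, chosen so that the maximum Euclidean norm over $\Theta(\log(k)/w_{\min})$ i.i.d.\ draws from $\cN(0,I)$ stays below $\Lambda_{\mathrm{out}}$ with probability $\ge 0.999$. Draw $n=\Theta\big((kd/\delta)^{O(m)}+\log(k)/w_{\min}\big)$ samples and run two sub-tests, outputting ``alternative'' if either fires: (i)~\emph{outlier test} --- using $\Theta(\log(k)/w_{\min})$ of the samples, output ``alternative'' if any has $\|x\|_2>\Lambda_{\mathrm{out}}$; (ii)~\emph{moment test} --- using the remaining $n'=(kd/\delta)^{O(m)}$ samples, form $\hat T_\ell:=\frac1{n'}\sum_j\1\{\|x^{(j)}\|_2\le 10(\Lambda_{\mathrm{out}}+\sqrt d)\}\,\mathrm{He}_\ell(x^{(j)})$ for $\ell=1,\dots,m$, and output ``alternative'' if some $\|\hat T_\ell\|_F>\tau/2$, where $\tau$ is as above with $\Lambda:=\Lambda_{\mathrm{out}}$. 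On the truncation event each summand has magnitude $\poly(\Lambda_{\mathrm{out}},d)^{O(m)}$, so $n'$ samples estimate every $\hat T_\ell$ to Frobenius error $\tau/4$ whp. For correctness: under the null $\cN(0,I)$ concentrates, so the outlier test does not fire, truncation removes negligible mass, and each $\hat T_\ell$ lies within $o(\tau)$ of $0$. Under the alternative, split on $\Lambda:=\max_i|\mu_i|$: if $\Lambda>\Lambda_{\mathrm{out}}$, some component has mean of norm $>\Lambda_{\mathrm{out}}$ and weight $\ge w_{\min}$, so among the $\Theta(\log(k)/w_{\min})$ outlier-test samples we draw one from it with probability $1-k^{-\Omega(1)}$, and a union bound over the $\le k$ such components (the source of the $\log k$ factor) makes the outlier test fire whp; if $\Lambda\le\Lambda_{\mathrm{out}}$, every component has covariance $\preceq I$ and bounded mean, so truncation at radius $10(\Lambda_{\mathrm{out}}+\sqrt d)$ is benign, $\|\hat T_\ell\|_F$ is within $\tau/4+o(\tau)$ of $|c_\ell|$, and $\max_\ell|c_\ell|\ge\tau$ forces the moment test to fire. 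The total count is $\Theta\big((kd/\delta)^{O(m)}+\log(k)/w_{\min}\big)$ with $m=O(\log k+k')$; one also checks that the $\Lambda_{\mathrm{out}}$-dependent factors $(\log(1/w_{\min}))^{O(m)}$ produced by truncation are absorbed into $(kd/\delta)^{O(m)}+\log(k)/w_{\min}$ by a short case analysis on the size of $1/w_{\min}$.

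\textbf{Main obstacle.} The crux is the quantitative moment-gap claim, since \Cref{prop:main2} as stated gives only the qualitative bound $m=O(\log k+k')$. I would obtain the explicit $\tau$ by revisiting the proof of \Cref{prop:main2}: that argument produces a non-negative polynomial $p$ of degree $O(\log k+k')$ vanishing at the $k'$ arbitrary-weight means, for which matching of the first $\deg p$ moments would force $\E_{X\sim Q}[p(X)]=\E_{X\sim\cN(0,1)}[p(X)]$, while $\E_{X\sim Q}[p(X)]$ equals $w_{\mathrm{unif}}\sum_{i:\text{uniform}}p(\mu_i)$, which is too small relative to the Gaussian expectation; carrying this comparison one degree further and tracking the coefficient sizes and point evaluations of $p$ in terms of $\Lambda,k,k',\delta$ yields the stated $\tau$. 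Calibrating $\Lambda_{\mathrm{out}}$ so that it simultaneously exceeds the null's maximal-norm fluctuation and keeps the moment test within budget is the other point requiring care, but it is routine once the quantitative gap is available.
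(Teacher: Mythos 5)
Your overall plan mirrors the paper's proof: reduce to a one-dimensional moment gap for $B=\sum_i w_i\cN(\mu_i,1-\delta)$ via \Cref{prop:main2}, detect the gap by estimating low-order tensors (your Hermite tensors vs.\ the paper's raw moment tensors via \Cref{lem:monomial_guarantee} is a cosmetic difference), and handle far-out components with a norm check on $\Theta(\log(k)/w_{\min})$ samples, exactly as in the paper's \Cref{alg:testing_single_moment}. The genuine gap is at what you yourself flag as the crux, and your sketch for filling it does not work as written. \Cref{prop:main2} is a statement about \emph{discrete} distributions, and your proposed derivation of the quantitative gap applies its vanishing-polynomial argument directly to the Gaussian mixture $Q$: the identity $\E_{X\sim Q}[p(X)]=w_{\mathrm{unif}}\sum_{i:\text{uniform}}p(\mu_i)$ is false for a mixture of Gaussians --- a component centered at a root of $p$ still contributes a strictly positive amount of order $\E_{z\sim\cN(0,1-\delta)}[p(\mu_i+z)]$, so ``$p$ vanishes at the arbitrary-weight means'' buys you nothing directly. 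The missing idea is the transfer between the GMM and the discrete distribution of its (rescaled) centers: the paper writes $B=U_{\sqrt{\delta}}A$ with $A$ supported on $\mu_i/\sqrt{\delta}$ and uses that the Ornstein--Uhlenbeck operator acts diagonally on Hermite polynomials (\Cref{def:ornstein}, \Cref{fact:eigenfunction}), so an approximate-matching violation of size $2^{-O(m)}$ for $A$ (which is exactly what \Cref{prop:main2} --- already quantitative, with tolerance $2^{-Cm}\|g\|_2$ --- supplies) becomes a gap of size at least $\delta^{m/2}2^{-O(m)}$ for $B$; this is \Cref{lem:moment-relationship}. Without some such deconvolution step your quantitative claim about $c_\ell$ is unsupported.

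A second, related point: the strengthening you say you need, with $\tau=(kd\Lambda/\delta)^{-O(m)}$ depending on $\Lambda=\max_i|\mu_i|$, is unnecessary. Because the tolerance in \Cref{prop:main2} is measured relative to the Gaussian $L_2$ norm $\|g\|_2$, the resulting population-level gap $(\delta/2)^{O(m)}$ (and hence $d^{-O(m)}(\delta/2)^{O(m)}$ per tensor entry, or a mere $\sqrt{m}$ loss if you pass to Hermite coefficients by Cauchy--Schwarz) is independent of where the means sit. Bounded means enter only where they do in the paper's Case 3: concentration of the empirical tensors after the norm check, where the $\Lambda_{\mathrm{out}}^{O(m)}$ (equivalently $(\log n)^{O(m)}$) factors sit in the sample-size bound and are absorbed into $(kd/\delta)^{O(k'+\log k)}+\log(k)/w_{\min}$. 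Your $\Lambda$-dependent threshold plus the case analysis on $1/w_{\min}$ could probably be pushed through, but it is an extra complication created by reading \Cref{prop:main2} as merely qualitative; combining it as stated with the Ornstein--Uhlenbeck transfer gives the explicit detection threshold directly and completes the argument along the lines you intend.
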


The algorithm is based on estimating the first $O(k' + \log k)$ moment tensors through the empirical tensors, and thus it is also naturally expressible in the SQ model.

\paragraph{Remarks} A single component with arbitrarily small weight can make the complexity in \cite{buhai2023beyond,anderson2024dimension} blow up quasipolynomially.
By contrast, our algorithm can handle any number of such points, and the complexity interpolates smoothly between the all-uniform and the fully general weights cases.

\subsection{Overview of Techniques}\label{sec:techniques}

For \Cref{thm:main1}, it suffices to show existence of a one-dimensional distribution (corresponding to the projection along the hidden direction $v$ in the parallel pancakes mixture in \Cref{def:pancakes}) that matches a lot of moments with $\cN(0,1)$ and is thus hard to distinguish.
Concretely, the goal is to show the existence of a set $S \subset \R$ of size $k$ such that $\E_{x \sim S}[x^i] = \E_{x \sim \cN(0,1)}[x^i]$ for all $i = 1, \ldots, t$, where $t = \Omega(\log k)$ and $x \sim S$ denotes the uniform distribution on $S$. Once established, the theorem follows from standard SQ theory: convolving this discrete distribution with a narrow Gaussian yields a $k$-GMM $B$ that still matches the first $t$ moments with $\cN(0,1)$. %
A standard result from \cite{DiaKS17,diakonikolas2024sq} then shows that hiding $B$ along an unknown direction is hard to distinguish from $\cN(0,I)$. %

Fortunately, the desired moment-matching construction, known as a $t$-design, has been well-studied.
\cite{kane2015small} shows that designs of small size to match the moments of a distribution $Q$ exist when the support of $Q$ is ``path-connected''.
The design's size is upper bounded by the number $K$, which is defined to be the supremum of the ratio $\frac{\sup_{x \in X} p(x)}{| \inf_{x \in X} p(x) | }$ taken over all degree-$t$ zero-mean polynomials $p$.
Thus, it suffices to show $K = 2^{O(t)}$ to prove \Cref{thm:main1}.
However, since the Gaussian distribution has unbounded support, there are (many) polynomials $p$ where $\sup_{x \in \R} p(x)$ is infinite while the infimum is clearly finite.

To address this, we can instead consider another distribution $Q$ supported on an interval $I$ of length $O(\sqrt{t})$, which also matches the first $t$ moments with $\cN(0,1)$ (\Cref{lem:quadrature} from \cite{DiaKS17}).
Thus, by creating a design to match $Q$ we only need to bound $K$ with $X=I$, which is now possible (\Cref{lem:bound-K}). Specifically, by expressing $p$ in the Hermite basis, we can show that $\sup_{x \in I} p(x) \leq \E_{x \sim \cN(0,1)}[|p(x)|] 2^{O(t)}$. Additionally, by Gaussian anti-concentration, we can show that any zero-mean polynomial $p$ of degree $t$ has a $2^{-O(t)}$ probability of being less than $-2^{-O(t)} \E_{x \sim \cN(0,1)}[|p(x)|]$. This shows that $\abs{\inf_{x \in I} p(x)} \geq\E_{x \sim \cN(0,1)}[|p(x)|] 2^{-O(t)}$. %

We can also show that this bound is tight: if $A$ is the uniform distribution on $k$ points, it is impossible for more than the first $O(\log k)$ moments to match with $\cN(0,1)$. The argument is that for any non-negative function $f$, $\frac{\E_{x \sim A}[f^2(x)]}{\E_{x \sim A}[f(x)]^2} \leq k$. If $A$ matches the first $4t$ moments with $\cN(0,1)$, setting $f(x) = x^{2t}$ makes the ratio $2^{\Omega(t)}$, implying $t = O(\log k)$. %

In fact, we can extend the result to non-uniform distributions where all but $k'$ points in the support have equal weight, showing that such distributions cannot match more than $O(\log(k) + k')$ moments with $\cN(0,1)$  (\Cref{prop:main2}).
As we will explain later, this will lead to the testing algorithm in \Cref{thm:main2}.

The first step towards \Cref{prop:main2} is to show that, if all but $k'$ points have weight at least $w_0$, then it is impossible to match more than $O(\log( 1/w_0) + k')$ moments with $\cN(0,1)$ (\Cref{prop:main}).
The proof relies on extending the idea of the previous paragraph that any non-negative function $f$ which vanishes (i.e.~gives value zero) on the $k'$ points in question satisfies $\frac{\E_{x \sim A}[f^2(x)]}{\E_{x \sim A}[f(x)]^2} \leq 1/w_0^2$.
We specifically use $f(x) = x^t p(x)$, where $p(x) = (x-\mu_1)^2\ldots(x-\mu_{k'})^2$ and $\mu_1,\ldots, \mu_{k'}$ are the points in the support of $A$ with unrestricted weights.
The goal then is to show that the ratio $r := \frac{\E_{x \sim A}[f^2(x)]}{\E_{x \sim A}[f(x)]^2}$ is at least $2^{\Omega(t-k')}$ --- combining this with our earlier lower bound $r \leq 1/w_0^2$ implies $t = O(\log(1/w_0) + k')$).
To bound $r$, we assume $A$ matches $\Theta(t + k')$ moments, allowing us to replace $\E_{x \sim A}[\cdot]$ with $\E_{x \sim \cN(0,1)}[\cdot]$ in the definition of $r$.
Since $p(x)$ has degree $2k'$, we show $p(x) \geq 2^{-O(k')} \E_{x \sim \cN(0,1)}[p^2(x)]^{1/2}$ near $x=\sqrt{2t}$ (\Cref{cor:gm}).
The contribution to $\E_{x \sim \cN(0,1)}[f^2(x)]$ from $x \in [0.9 \sqrt{2t}, 1.1 \sqrt{2t}]$  will then be at least $(3.6 t)^t \E_{x \sim \cN(0,1)}[p^2(x)]2^{-O(k')}$ (see \Cref{eq:tmp132,eq:bound_numerator} for the full calculations).
Meanwhile, by H\"older's inequality, $\E_{x \sim \cN(0,1)}[f(x)] \leq \E_{x \sim \cN(0,1)}[x^{3t/2}]^{2/3} \E_{x \sim \cN(0,1)}[ p^3(x) ]^{1/3}$, which by hypercontractivity is at most $(\tfrac{3t}{2e})^{t/2} \E_{x \sim \cN(0,1)}[p(x)^2]^{1/2} 2^{O(k')}$.
Combining these bounds establishes $r \geq 2^{\Omega(t-k')}$ and proves \Cref{prop:main}.

We can also argue that the previous paragraph's result can always be used with $w_0 \geq 2^{-O(k’)}/k$ (\Cref{prop:main2}).
By considering the same polynomial $p$ which vanishes at the points with unconstrained weights, we can combine the hypercontractivity of $p$ with the Cauchy-Schwarz inequality to derive a lower bound on the total weight of the equal-weight points: $\sum_{i \ge k'+1} w_i \ge 2^{-O(k')}$.
This immediately implies $w_0 \ge 2^{-O(k')}/k$.

We have shown that any discrete distribution with $k’$ arbitrary weights and $k-k’$ equal weights cannot match more than $O(\log(k) + k’)$ moments with $\cN(0,1)$.
This result extends to approximate moment matching within error $2^{-O(t)}$, and holds even after convolving the distribution with a Gaussian (cf.~\Cref{lem:moment-relationship}).
For the parallel pancakes testing problem, this implies that for some $i \leq O(\log(k) + k’)$ the $i$-th order tensor of the GMM in the alternative hypothesis differs significantly from that of $\cN(0,I)$.
This gap can be detected by estimating the tensor via averaging samples (an operation that has complexity $d^{\Theta(m)}$), leading to the testing algorithm of \Cref{thm:main2}.%

\section{Preliminaries}\label{sec:prelim}

\subsection{Notation}
We use $\Z_+$ to denote positive integers. For $n \in \Z_+$ we  denote $[n] \eqdef \{1,\ldots,n\}$. We denote by $\R_0^+$ the set of all non-negative positive real numbers. 
We denote by $\exp(x) = e^x$ the exponential function, and by $\ln(x)$ the natural logarithm (the logarithm with base $e$).
We write $x \otimes y$ to denote the tensor product between two vectors $x,y$. The tensor product of two vectors  $u \in \mathbb{R}^m$  and  $v \in \mathbb{R}^n$  is a matrix  $u \otimes v \in \mathbb{R}^{m \times n}$  defined such that $(u \otimes v)_{ij} = u_i v_j$, where each entry is the product of the corresponding components of $u$ and $v$. This can be extended to product between more than two vectors. We denote $x^{\otimes k}$ the $k$-fold tensor product of the vector $x$ with itself.
If $A$ is a tensor, $\|A\|_\fr$ denotes its Frobenius norm, which is the Euclidean norm of the vector obtained by stacking all entries of the tensor into a single vector.
We write $x\sim {\cD}$ for a random variable $x$ following the distribution $\cD$ and use $\E[x]$ for its expectation. We use $\cN(\mu, \Sigma)$ to denote the Gaussian distribution with mean $\mu$ and covariance matrix $\Sigma$. We write $\Pr(\cE)$ for the probability of an event $\cE$.  
We denote by $\mathbf{1}(\cE)$ the indicator function of the event $\cE$. 
The $L_p$ norm of a ($\R$-valued) random variable $x$ is defined to be $\| x \|_p = \E[|x|^p]^{1/p}$.
The $L_p$ norm of a function $f: \R^d \to \R$ is defined to be the $L_p$ norm of the random variable $f(x)$, i.e., $\|f\|_p = \E_{x \sim \cN(0,I)}[|f(x)|^p]^{1/p}$.

We use $a\lesssim b$ to denote that there exists an absolute universal constant $C>0$ (independent of the variables or parameters on which $a$ and $b$ depend) such that $a\le Cb$.
Sometimes, we will also use the $O(\cdot),
\Omega(\cdot),
\Theta(\cdot)$ notation with the standard meaning.

\subsection{Hermite Analysis}
In this paper, we use the \emph{normalized probabilist's} Hermite polynomials, which form an orthonormal basis of $L^2: = \{ f:\E_{x \sim \cN(0,1)}[f^2(x)] < \infty  \}$ with respect to the Gaussian measure, i.e., $\int_\R h_k(x) h_{m}(x) e^{-x^2/2} \, dx = \sqrt{2\pi} \mathbf{1}(k=m)$. Every function $f \in L^2$ can be uniquely expressed as $f(x) = \sum_{i=0}^\infty a_i h_i(x)$. See \Cref{sec:hermite} for more details and references on Hermite polynomials.

\subsection{Probability Facts}  
In this section we record some facts about Gaussians and polynomials that will be useful later on.
The first fact bounds moments of Gaussian random variables.
\begin{restatable}[Gaussian Moments]{fact}{GAUSSIANMOMENTS} \label{fact:Gaussian_moments}
    For every positive integer $t$, it holds $\E_{x \sim \cN(0,1)}[x^t] \lesssim (t/e)^{t/2}$. %
\end{restatable}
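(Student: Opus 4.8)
The statement to prove is the bound $\E_{x\sim\cN(0,1)}[x^t]\lesssim (t/e)^{t/2}$ for positive integers $t$. This is a completely standard fact, so I'll sketch the clean proof.

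\textbf{Proof proposal.} The plan is to reduce to the explicit formula for Gaussian moments and then apply a crude bound on double factorials. First, I note that for odd $t$ the moment $\E_{x\sim\cN(0,1)}[x^t]$ vanishes by symmetry, so the inequality holds trivially; hence assume $t=2s$ is even, in which case $\E_{x\sim\cN(0,1)}[x^{2s}] = (2s-1)!! = \prod_{j=1}^{s}(2j-1)$. The goal is then to show $(2s-1)!! \lesssim (2s/e)^{s} = (s/(e/2))^s$, i.e.\ that $(2s-1)!!$ is at most a constant times $(2s/e)^s$.

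Next, I would bound the double factorial: $(2s-1)!! = \frac{(2s)!}{2^s s!}$. Using the standard inequalities $n! \le e\, n^{n+1/2} e^{-n}$ (upper Stirling bound) and $n! \ge n^n e^{-n}$ (lower bound, from $e^n = \sum_k n^k/k! \ge n^n/n!$), I get
\[
(2s-1)!! = \frac{(2s)!}{2^s\, s!} \le \frac{e\,(2s)^{2s+1/2} e^{-2s}}{2^s\, s^{s} e^{-s}} = e\,\sqrt{2s}\cdot \frac{2^{2s} s^{2s} e^{-2s}}{2^s s^s e^{-s}} \cdot \frac{1}{1}= e\sqrt{2s}\,\bigl(2s/e\bigr)^{s}.
\]
Wait — this gives an extra $\sqrt{2s}$ factor, which is not absorbed by the implied constant since it grows with $t$. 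So I would instead use the sharper route: apply the upper Stirling bound to $(2s)!$ and the lower Stirling-type bound $s! \ge \sqrt{2\pi s}\,(s/e)^s$ (the genuine Stirling lower bound, valid for all $s\ge 1$). Then
\[
(2s-1)!! = \frac{(2s)!}{2^s s!} \le \frac{e\,(2s/e)^{2s}\sqrt{2s}}{2^s\,\sqrt{2\pi s}\,(s/e)^s} = \frac{e}{\sqrt{\pi}}\cdot \frac{(2s/e)^{2s}}{2^s (s/e)^s} = \frac{e}{\sqrt\pi}\,(2s/e)^s,
\]
using $(2s/e)^{2s}/\bigl(2^s(s/e)^s\bigr) = 2^{2s}(s/e)^{2s}/\bigl(2^s(s/e)^s\bigr) = 2^s (s/e)^s = (2s/e)^s$. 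Since $t=2s$, the right-hand side is $\frac{e}{\sqrt\pi}(t/e)^{t/2}$, giving the claim with absolute constant $e/\sqrt\pi$.

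\textbf{Main obstacle.} There is essentially no real obstacle — the only subtlety is bookkeeping with the Stirling bounds so that the multiplicative constant comes out \emph{absolute} rather than growing like $\sqrt{t}$; one must pair an upper bound on $(2s)!$ with the true Stirling \emph{lower} bound on $s!$ (with its $\sqrt{2\pi s}$ factor) rather than the weaker $s!\ge (s/e)^s$, since the latter loses a $\sqrt{s}$. Alternatively, one can avoid Stirling entirely: write $(2s-1)!! = \prod_{j=1}^s (2j-1) \le \prod_{j=1}^s 2j = 2^s s!$ and then $2^s s! \le 2^s e \sqrt{s}(s/e)^s = e\sqrt s\,(2s/e)^s$, which again carries a spurious $\sqrt{s}$; so the Stirling-pairing approach above is the cleanest way to get a clean absolute constant, and I would present that one.
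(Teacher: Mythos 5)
Your proof is correct: reducing to the even case, writing $\E_{x\sim\cN(0,1)}[x^{2s}]=(2s-1)!!=(2s)!/(2^s s!)$, and pairing the Stirling upper bound on $(2s)!$ with the genuine lower bound $s!\ge\sqrt{2\pi s}\,(s/e)^s$ gives exactly $(2s-1)!!\le \frac{e}{\sqrt{\pi}}(t/e)^{t/2}$ with an absolute constant, and your attention to the spurious $\sqrt{t}$ factor is the right care to take since $\lesssim$ here means up to an absolute constant. The paper states this fact without proof, treating it as standard, and your argument is precisely the canonical derivation one would supply.
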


\noindent The following is a Gaussian anti-concentration inequality.
\begin{fact}[Carbery-Wright Inequality \cite{CarWri01}] There is an absolute constant $C$ such that the following holds. Let $q,\gamma \in \R_0^+$, $\mu \in \R^d, \Sigma \in \R^{d \times d}$ such that $\Sigma$ is symmetric PSD and $p : \R^d \to \R$ be a multivariate polynomial of degree at most $r$. Then
\begin{align*}
    \Pr_{x \sim \cN(\mu,\Sigma)}\left( |p(x)| \leq \gamma \right) \leq \frac{C q \gamma^{1/r}}{\left( \E_{z \sim \cN(\mu,\Sigma)}\left[ |p(z)|^{q/r}\right]  \right)^{1/q}} \;.
\end{align*}
\end{fact}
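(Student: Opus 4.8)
Since this Fact is quoted verbatim from \cite{CarWri01}, in the paper we simply cite it; the sketch below records how one would prove it. The plan is first to reduce to a standard Gaussian: writing $x = \mu + \Sigma^{1/2}z$ with $z\sim\cN(0,I_d)$, the map $z\mapsto p(\mu+\Sigma^{1/2}z)=:\tilde p(z)$ is again a polynomial of degree at most $r$, and both sides of the claimed inequality are unchanged, so it suffices to treat $z\sim\cN(0,I_d)$. Since the inequality is invariant under rescaling $p$, one may further normalize $\left(\E_z[\,|\tilde p(z)|^{q/r}\,]\right)^{1/q}=1$, reducing the goal to $\Pr_z(|\tilde p(z)|\le\gamma)\le Cq\,\gamma^{1/r}$.

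The main step is to deduce this from Markov's inequality applied to the negative moment $|\tilde p|^{-1/r}$, so that the crux becomes the estimate $\E_z[\,|\tilde p(z)|^{-1/r}\,]\le Cq$. The key input is Gaussian hypercontractivity: for a degree-$r$ polynomial of a standard Gaussian, all positive $L^s$-norms are comparable with constants depending only on $s$ and $r$ (crucially, not on $d$), which under the above normalization pins $\|\tilde p\|_s=e^{\pm O(r)}$ for each fixed $s>0$. Decomposing $\E_z[\,|\tilde p|^{-1/r}\,]$ dyadically over the level sets $\{2^{-j-1}<|\tilde p|\le 2^{-j}\}$ and bounding each $\Pr_z(|\tilde p|\le 2^{-j})$ by reduction to one dimension, the resulting series sums to $O(q)$. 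The one-dimensional estimate is elementary: restricting $\tilde p$ to a line yields a univariate degree-$\le r$ polynomial which over $\C$ factors as $q_\ell(s)=c\prod_i(s-z_i)$, and from $|s-z_i|\ge|s-\operatorname{Re}z_i|$ its sublevel set $\{|q_\ell|\le\gamma\}$ lies in a union of at most $r$ intervals of total length $O(r\gamma^{1/r}/|c|^{1/r})$; integrating this over the Gaussian on the line, then over the choice of line (polar coordinates for $\cN(0,I_d)$), and combining with a lower bound on the typical line-restricted norm of $\tilde p$ (itself a hypercontractivity consequence) reassembles the multivariate bound.

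The main obstacle is obtaining constants depending only on $r$ and $q$ and not on the ambient dimension $d$: the space of degree-$r$ polynomials in $d$ variables is finite-dimensional, but its norm-equivalence constants naively grow with $d$, and controlling them uniformly is exactly the content of Gaussian hypercontractivity. The remaining delicate point is the passage from positive to negative moments of $|\tilde p|$, where hypercontractivity does not apply directly and one needs the layer-cake/stopping-time argument sketched above. For our purposes none of this must be reproduced---we invoke \cite{CarWri01} as a black box.
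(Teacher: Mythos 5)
The paper does not prove this statement at all: it is imported verbatim as a known result with a citation to \cite{CarWri01}, which is precisely what you do, so your treatment matches the paper's. The accompanying sketch of the Carbery--Wright argument is extra material not present in (or needed by) the paper and is not checked here in detail.
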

\noindent We will apply this is later on in the paper as follows
with  $d=1$, $\mu=0,\Sigma=1,q=r$ and $\gamma = \eps \|p\|_1$, where $\eps$ is a new parameter. This gives:

\begin{restatable}{fact}{CWAPPLIED} \label{eq:CW_applied}
For every polynomial of degree $r$ and every $\eps > 0$, 
    $\Pr_{x \sim \cN(0,1)}\left( |p(x)| \leq \eps \|p\|_1 \right) \leq O( r \eps^{1/r})$.
\end{restatable}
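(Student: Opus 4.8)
The plan is to simply instantiate the Carbery--Wright inequality with the substitution already indicated in the text: take $d=1$, $\mu=0$, $\Sigma=1$, set the polynomial degree bound to $r$, choose $q=r$, and set $\gamma=\eps\|p\|_1$. With these choices the left-hand side of Carbery--Wright becomes exactly $\Pr_{x\sim\cN(0,1)}(|p(x)|\le \eps\|p\|_1)$, which is the quantity we want to bound.

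The only computation is to simplify the right-hand side $\frac{Cq\gamma^{1/r}}{(\E_{z\sim\cN(0,1)}[|p(z)|^{q/r}])^{1/q}}$. First, $q\gamma^{1/r} = r\,(\eps\|p\|_1)^{1/r} = r\,\eps^{1/r}\,\|p\|_1^{1/r}$. Second, since $q=r$ we have $q/r=1$, so the denominator is $(\E_{z\sim\cN(0,1)}[|p(z)|])^{1/r} = \|p\|_1^{1/r}$, using the definition of the $L_1$ norm from the Notation subsection. The two factors of $\|p\|_1^{1/r}$ cancel, leaving the bound $Cr\eps^{1/r} = O(r\eps^{1/r})$, as claimed.

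There is essentially no obstacle here; the statement is a direct specialization. The only edge cases worth a one-line remark are: (i) if $p\equiv 0$ the claim is vacuous (or trivially true), so we may assume $\|p\|_1>0$, which is exactly what is needed to divide by $\|p\|_1^{1/r}$; and (ii) the bound is only informative when $r\eps^{1/r}<1$, and is otherwise trivially true since probabilities are at most $1$ — so no case analysis on the size of $\eps$ is required. Thus the proof is a two-line derivation: quote Carbery--Wright with the stated parameters, then cancel the $\|p\|_1^{1/r}$ terms.
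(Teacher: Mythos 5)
Your proof is correct and is exactly the paper's own derivation: the paper obtains this fact by applying Carbery--Wright with $d=1$, $\mu=0$, $\Sigma=1$, $q=r$, $\gamma=\eps\|p\|_1$, and the cancellation of the $\|p\|_1^{1/r}$ factors is the same computation you carry out. Your added remarks on the degenerate cases ($p\equiv 0$, large $\eps$) are harmless extras and do not change the argument.
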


\noindent The following inequality can be easily derived using H\"older's inequality.
\begin{restatable}{fact}{ONETHIRDTWOTHIRDS} \label{fact:one-third-two-thirds}
    $\|x\|_2 \leq \|x\|_1^{1/3} \|x\|_4^{2/3}$ for any random variable.
\end{restatable}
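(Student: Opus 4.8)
\textbf{Proof proposal for Fact~\ref{fact:one-third-two-thirds}.} The plan is to realize the $L_2$ norm as a multiplicative interpolation between the $L_1$ and $L_4$ norms via a single application of H\"older's inequality. Concretely, I would write $x^2 = |x|^{2/3} \cdot |x|^{4/3}$ and apply H\"older with conjugate exponents $3/2$ and $3$ (note $\tfrac{2}{3} + \tfrac{1}{3} = 1$), which gives
\[
\E[x^2] \;=\; \E\!\left[|x|^{2/3}\,|x|^{4/3}\right] \;\le\; \E\!\left[\,|x|^{(2/3)\cdot(3/2)}\right]^{2/3}\,\E\!\left[\,|x|^{(4/3)\cdot 3}\right]^{1/3} \;=\; \E[|x|]^{2/3}\,\E[|x|^4]^{1/3}\,.
\]
In terms of the norm notation this reads $\|x\|_2^2 \le \|x\|_1^{2/3}\,\|x\|_4^{4/3}$, and taking square roots yields the claimed bound $\|x\|_2 \le \|x\|_1^{1/3}\,\|x\|_4^{2/3}$.

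The only thing to pin down is how the exponents $2/3$ and $4/3$ are found, which I would either present as the above verification or derive: splitting $|x|^2 = |x|^a|x|^b$ with $a+b=2$ and asking that H\"older with exponents $1/a$ and $4/b$ be valid forces $a + \tfrac{b}{4} = 1$, hence $a = 2/3$, $b = 4/3$. Alternatively one can just invoke log-convexity of $p \mapsto \log\|x\|_p$ (equivalently, of $p\mapsto \log \E[|x|^p]$, which is Lyapunov's inequality) at the interpolation point $\tfrac{1}{2} = \tfrac{1}{3}\cdot\tfrac{1}{1} + \tfrac{2}{3}\cdot\tfrac{1}{4}$.

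There is essentially no obstacle here; the statement is a routine consequence of H\"older's inequality. The one minor point worth a remark is degeneracy: the inequality is trivially true when $\|x\|_4 = \infty$ (the right-hand side is then $+\infty$, unless $\|x\|_1 = 0$, in which case $x = 0$ almost surely and both sides vanish), so the argument above is stated for finite norms and the edge cases are immediate.
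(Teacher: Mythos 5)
Your proof is correct and matches the paper's (implicit) argument: the paper states this fact with the remark that it follows easily from H\"older's inequality, which is exactly the interpolation $\E[x^2]=\E[|x|^{2/3}|x|^{4/3}]\le \E[|x|]^{2/3}\E[|x|^4]^{1/3}$ you carry out. Nothing further is needed.
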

The following inequality is the Gaussian Hypercontractivity property (see, e.g., \cite{Bog:98,nelson1973free})
\begin{restatable}[Gaussian Hypercontractivity]{fact}{HYPERCONTRACTIVITY} \label{fact:hypercontractivity}
    If $p$ is a degree $r$ polynomial and $k>2$, then it holds $\|p\|_k \leq (k-1)^{r/2}\|p\|_2$.
\end{restatable}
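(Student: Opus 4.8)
This is the classical $(2,k)$-hypercontractive inequality for low-degree Gaussian polynomials, and the plan is to derive it from Nelson's hypercontractivity theorem for the Gaussian noise (Ornstein--Uhlenbeck) operator. For $\rho \in [0,1]$, let $T_\rho$ denote the noise operator on $L^2(\cN(0,I))$, which acts diagonally in the Hermite basis: if $f = \sum_{\alpha} \hat f_\alpha H_\alpha$ (the sum running over multi-indices $\alpha$, with $H_\alpha$ the corresponding product of normalized Hermite polynomials), then $T_\rho f = \sum_\alpha \rho^{|\alpha|} \hat f_\alpha H_\alpha$. Nelson's theorem states that for $q \ge 2$ and $\rho \le 1/\sqrt{q-1}$ one has $\norm{T_\rho f}_q \le \norm{f}_2$ for all $f \in L^2$. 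We invoke this as a black box, citing \cite{Bog:98,nelson1973free}.

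First I would fix $\rho = 1/\sqrt{k-1}$, which is exactly the admissible threshold for the pair $(2,k)$. Write $p = \sum_{d=0}^r p_d$, where $p_d$ is the orthogonal projection of $p$ onto the span of Hermite polynomials of total degree $d$; since $p$ has degree $r$ this sum is finite. Because $T_\rho$ multiplies $p_d$ by $\rho^d$, the function $q \deff \sum_{d=0}^r \rho^{-d} p_d$ satisfies $T_\rho q = p$. Applying Nelson's inequality to $q$ then gives $\norm{p}_k = \norm{T_\rho q}_k \le \norm{q}_2$.

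It remains to bound $\norm{q}_2$ in terms of $\norm{p}_2$. By Parseval for the Hermite expansion (the $p_d$ are pairwise orthogonal), $\norm{q}_2^2 = \sum_{d=0}^r \rho^{-2d}\norm{p_d}_2^2$. Since $\rho \le 1$, we have $\rho^{-2d} \le \rho^{-2r}$ for every $d \le r$, hence $\norm{q}_2^2 \le \rho^{-2r}\sum_{d=0}^r \norm{p_d}_2^2 = \rho^{-2r}\norm{p}_2^2$, i.e.\ $\norm{q}_2 \le \rho^{-r}\norm{p}_2 = (k-1)^{r/2}\norm{p}_2$. Combining with the previous display yields $\norm{p}_k \le (k-1)^{r/2}\norm{p}_2$, as claimed. (The argument is dimension-free, consistent with the definition of $\norm{\cdot}_p$ relative to $\cN(0,I)$ on $\R^d$, although in this paper the fact is only ever used for $d=1$.)

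The only non-elementary input is Nelson's theorem; the rest is bookkeeping with the Hermite decomposition, so I do not expect a genuine obstacle. If a self-contained proof were desired, one could instead establish the two-point (Bonami--Beckner) inequality on $\{-1,1\}$, tensorize it across coordinates to get hypercontractivity on the Boolean hypercube, and pass to the Gaussian measure via the central limit theorem; this is standard but considerably longer, so citing \cite{Bog:98,nelson1973free} is preferable.
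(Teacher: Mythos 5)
Your derivation is correct: decomposing $p$ into its Hermite-degree components, inverting the noise operator at $\rho = 1/\sqrt{k-1}$, and applying Nelson's theorem together with Parseval gives exactly the claimed bound $\|p\|_k \le (k-1)^{r/2}\|p\|_2$. The paper does not prove this fact at all---it invokes it as a black box with a citation to \cite{Bog:98,nelson1973free}---and your argument is precisely the standard derivation contained in those references, so there is nothing to reconcile.
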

\noindent In particular we will use the above in the following way:
\begin{restatable}{fact}{NORMRELATION}\label{cl:norms-relation}
    For any polynomial $p$ of degree $r$, it holds $\|p\|_1/ \|p\|_2 \geq 3^{-r}$. %
\end{restatable}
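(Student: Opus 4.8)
The statement follows by combining the reverse H\"older-type inequality of \Cref{fact:one-third-two-thirds} with Gaussian hypercontractivity (\Cref{fact:hypercontractivity}) at the exponent $k=4$. The point is that $\|p\|_2$ is sandwiched between $\|p\|_1$ and $\|p\|_4$, and for low-degree polynomials the top norm $\|p\|_4$ cannot be much larger than $\|p\|_2$, so $\|p\|_2$ cannot be much larger than $\|p\|_1$ either.

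\textbf{Steps.} First, if $\|p\|_2 = 0$ the inequality is trivial (interpreting $0/0$ as $1$, or simply noting $p \equiv 0$ a.e.), so assume $\|p\|_2 > 0$. Apply \Cref{fact:one-third-two-thirds} to the random variable $p(x)$ with $x \sim \cN(0,1)$:
\[
\|p\|_2 \;\le\; \|p\|_1^{1/3}\,\|p\|_4^{2/3}.
\]
Next, apply \Cref{fact:hypercontractivity} with $k = 4$ to the degree-$r$ polynomial $p$, giving $\|p\|_4 \le 3^{r/2}\|p\|_2$. Substituting this into the previous display yields
\[
\|p\|_2 \;\le\; \|p\|_1^{1/3}\,\big(3^{r/2}\|p\|_2\big)^{2/3} \;=\; 3^{r/3}\,\|p\|_1^{1/3}\,\|p\|_2^{2/3}.
\]
Since $\|p\|_2 \in (0,\infty)$, divide both sides by $\|p\|_2^{2/3}$ to obtain $\|p\|_2^{1/3} \le 3^{r/3}\|p\|_1^{1/3}$, and cube to get $\|p\|_2 \le 3^r \|p\|_1$, i.e.\ $\|p\|_1/\|p\|_2 \ge 3^{-r}$.

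\textbf{Obstacle.} There is essentially no obstacle: the only things to be careful about are the degenerate case $p\equiv 0$ and the (standard) fact that $\|p\|_4 < \infty$ so that all divisions are legitimate, which holds because polynomials lie in every $L^q$ space under the Gaussian measure. Everything else is a one-line manipulation of the two cited facts.
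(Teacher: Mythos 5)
Your proposal is correct and follows essentially the same route as the paper: both combine \Cref{fact:one-third-two-thirds} with Gaussian hypercontractivity (\Cref{fact:hypercontractivity}) at $k=4$, differing only in whether one first rearranges to $\|p\|_1 \geq \|p\|_2^3/\|p\|_4^2$ or substitutes the bound on $\|p\|_4$ and then divides. Your extra care about the degenerate case $p \equiv 0$ is fine but not essential.
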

\begin{proof}
$ \|p\|_1 \geq \frac{\|p\|_2^3}{\|p\|_4^2} = \|p\|_2 \left(\frac{\|p\|_2}{\|p\|_4}\right)^2 \geq 3^{-t} \|p\|_2$, 
 where the first step used \Cref{fact:one-third-two-thirds} and the last step used Gaussian Hypercontractivtiy (\Cref{fact:hypercontractivity}) with $k=4$. Rearranging completes the proof.
\end{proof}

\noindent Finally, we record, the standard Gaussian norm concentration inequality.
\begin{fact}[Gaussian Norm Concentration]\label{fact:normConcetration}
    If $x\sim \cN\left(\mu,I\right)$, with probability $1-\tau$ we have that 
    \begin{align*}
        \left| \Norm{x}^2 -\left(\Norm{\mu}^2 +d\right) \right| \lesssim \log \frac{1}{\tau}+ \left(\sqrt{d}+\Norm{\mu}\right) \sqrt{\log \frac{1}{\tau}} \;.
    \end{align*}
\end{fact}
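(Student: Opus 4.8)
The plan is to decompose $\Norm{x}^2$ into a deterministic part, a linear Gaussian fluctuation, and a centered chi-squared fluctuation, and then control the two random parts by a one-dimensional Gaussian tail bound and a standard $\chi^2$ concentration bound, respectively. Write $x = \mu + g$ with $g \sim \cN(0,I)$. Expanding,
\[
\Norm{x}^2 - \bigparen{\Norm{\mu}^2 + d} = 2\iprod{\mu, g} + \bigparen{\Norm{g}^2 - d}.
\]
The first term $\iprod{\mu, g}$ is a mean-zero Gaussian with variance $\Norm{\mu}^2$, so a standard Gaussian tail bound gives $\abs{\iprod{\mu,g}} \lesssim \Norm{\mu}\sqrt{\log(1/\tau)}$ except with probability $\tau/2$. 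The second term $\Norm{g}^2 - d$ is a centered $\chi^2_d$ random variable; by the Laurent--Massart bounds (or any Bernstein-type inequality for sums of i.i.d.\ sub-exponential variables $g_i^2 - 1$), $\abs{\Norm{g}^2 - d} \lesssim \sqrt{d\log(1/\tau)} + \log(1/\tau)$ except with probability $\tau/2$.

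A union bound over these two events, followed by the triangle inequality, then yields
\[
\Bigabs{\Norm{x}^2 - \bigparen{\Norm{\mu}^2 + d}} \lesssim \Norm{\mu}\sqrt{\log(1/\tau)} + \sqrt{d\log(1/\tau)} + \log(1/\tau) \lesssim \log\tfrac1\tau + \bigparen{\sqrt{d} + \Norm{\mu}}\sqrt{\log\tfrac1\tau}
\]
with probability at least $1-\tau$, which is exactly the claimed estimate.

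There is essentially no genuine obstacle here: the only substantive ingredient is invoking a sharp $\chi^2$ tail bound, and even that can be avoided. If one wants a fully self-contained argument, one can instead bound the moment generating function of $\Norm{x}^2$ directly: each coordinate satisfies $\E[e^{\lambda x_i^2}] = (1-2\lambda)^{-1/2}\exp\!\bigparen{\lambda \mu_i^2/(1-2\lambda)}$ for $\lambda < 1/2$, so by independence $\E[e^{\lambda\Norm{x}^2}]$ factors, and a Chernoff bound optimized over $\lambda$ reproduces the same two-regime behavior (sub-Gaussian $\sqrt{(d+\Norm{\mu}^2)\log(1/\tau)}$ term for moderate deviations, sub-exponential $\log(1/\tau)$ term for large ones) without citing any external concentration theorem. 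Either route gives the stated bound after absorbing constants into $\lesssim$.
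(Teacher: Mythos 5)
Your proof is correct: the decomposition $\Norm{x}^2 - (\Norm{\mu}^2 + d) = 2\iprod{\mu,g} + (\Norm{g}^2 - d)$ with a Gaussian tail bound for the cross term and a Laurent--Massart (or Bernstein/Chernoff) bound for the centered $\chi^2_d$ term gives exactly the stated two-regime estimate after a union bound. The paper records this as a standard fact without proof, and your argument is precisely the standard derivation it implicitly relies on, so there is nothing to reconcile.
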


\subsection{Arithmetic Mean-Geometric Mean Inequality}  

In this paper, we will use a continuous analog of the \emph{Arithmetic Mean-Geometric Mean} (AM-GM) inequality. The continuous analog for the arithmetic mean of a sequence $\frac{1}{n} \sum_{i=1}^n x_i$ is what one obtains by replacing the summation with its continuous counterpart. Specifically, the arithmetic mean of a function $f:\R \to \R$ over an interval $I$ is defined as: $(1/|I|) \int_{I} f(x) \d x$.

The geometric mean of a discrete sequence is $\prod_{i=1}^n x_i^{1/n}$. Its generalization relies on the property that $\ln \prod_{i=1}^n x_i^{1/n} = \frac{1}{n} \sum_{i=1}^n \ln x_i$ (assuming $x_i > 0$). By replacing the summation with an integral, the generalization of the geometric mean of a function $f$ over an interval $I$ is: $\exp\left( \tfrac{1}{|I|} \int_{I} \ln f(x) \d x\right)$.
The continuous analog of the AM-GM inequality thus is the following statement. The proof follows directly from Jensen's inequality:

\begin{restatable}[Continuous AM-GM Inequality]{fact}{AMGM} \label{fact:AM-GM}
    Let $f: \R \to \R_0^+$ be a function, and let $I \subseteq \R$ be a finite interval. If $f(x)$ and $\ln f(x)$ are integrable on $I$, then the following holds:
    $\frac{1}{\abs{I}} \int_{I} f(x) \d x \geq \exp\Paren{\frac{1}{\abs{I}} \int_{I} \ln f(x) \d x}$.
\end{restatable}

\subsection{Non-Gaussian Component Analysis} The parallel pancakes \Cref{def:pancakes} is a special case of the following problem.
\begin{restatable}[Non-Gaussian Component Analysis (NGCA)]{problem}{NGCA}\label{prob:generic_hypothesis_testing}
Let $B$ be a distribution on $\R$. For a unit vector $v$, we denote by $P_{B,v}$ the distribution with the density $P_{B,v}(x) := B(v^\top x) \phi_{\perp v}(x)$, where $\phi_{\perp v}(x) = \exp\left(-\|x - (v^\top x)v\|_2^2/2\right)/(2\pi)^{(d-1)/2}$, 
i.e., the distribution that coincides with $B$ on the direction $v$ and is standard Gaussian in every orthogonal direction.  We define the following hypothesis testing problem:
\begin{itemize} 
    \item $H_0$: The data distribution is $\cN(0,I_d)$.
    \item $H_1$: The data distribution is $P_{B,v}$, for some vector $v \in \cS^{d-1}$ in the unit sphere.
\end{itemize}
\end{restatable}
It is known that solving \Cref{prob:generic_hypothesis_testing} when $B$ matches the first $m$ moments with $\cN(0,1)$ requires at least $d^{\Omega(m)}$ complexity in the statistical query model (\Cref{prop:SQhardness_NGCA}).

A known result is that the NGCA problem of \Cref{prob:generic_hypothesis_testing} is hard in the SQ model if $B$ matches a lot of moments with the standard Gaussian. This was shown in \cite{DiaKS17} and was later strengthened in \cite{diakonikolas2024sq}.

\begin{condition}[Approximate moment matching]\label{cond:moment_matching}
    Let $m \in \Z_+$. The distribution $B$ on $\R$ is such that $\E_{x \sim B}[x^i] - \E_{x \sim \cN(0,1)}[x^i]| \leq \nu$ for all $i \in [m]$.
\end{condition}
\noindent The following is Theorem 1.5 in \cite{diakonikolas2024sq} using $\lambda = 1/2$ and $c = (1-\lambda)/8 = 1/16$.
\begin{proposition}[Theorem 1.5 in \cite{diakonikolas2024sq}]\label{prop:SQhardness_NGCA}
    Let $d,m$ be positive integers with $d \geq (m\log d)^{2}$.
    Any SQ algorithm that solves \Cref{prob:generic_hypothesis_testing} for a distribution $B$ satisfying \Cref{cond:moment_matching}
    requires either $2^{d^{\Omega(1)}}$ many queries or at least one query with accuracy $d^{-m/16} + (1 + o(1))\nu$.
\end{proposition}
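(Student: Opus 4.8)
Since this proposition is just Theorem~1.5 of \cite{diakonikolas2024sq} specialized to $\lambda=1/2$ (so that the exponent is $c=(1-\lambda)/8=1/16$), the natural ``proof'' is a pointer to that reference, and the plan is to sketch the argument it runs --- the by-now-standard SQ lower bound for non-Gaussian component analysis originating in \cite{DiaKS17}. The strategy is to invoke the generic ``statistical-query dimension'' dichotomy: if one produces $N$ alternative distributions whose pairwise correlations relative to the reference $\cN(0,I_d)$ are at most $\gamma$ and whose self-correlations are bounded, then any SQ algorithm distinguishing the reference from a uniformly random alternative must either make $N^{\Omega(1)}$ queries or issue one query of tolerance $O(\sqrt\gamma)$. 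So the task reduces to (a) computing the correlations within the family $\{P_{B,v}\}_{v\in\cS^{d-1}}$, and (b) exhibiting $N=2^{d^{\Omega(1)}}$ near-orthogonal directions $v$.

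For (a), expand the one-dimensional density ratio in the normalized Hermite basis, $B(y)/\phi(y)=\sum_{j\ge 0}\hat B_j h_j(y)$ with $\hat B_0=1$ and $\hat B_j=\E_{y\sim B}[h_j(y)]$ for $j\ge1$. Since $(v^\top x,v'^\top x)$ is bivariate standard normal with correlation $\rho=\langle v,v'\rangle$ and $\E[h_j(Z_1)h_k(Z_2)]=\rho^{j}\mathbf{1}(j=k)$ for such a pair (Mehler's identity), one obtains the classical formula
\[
  \E_{x\sim\cN(0,I_d)}\!\left[\tfrac{P_{B,v}(x)}{\cN(0,I_d)(x)}\cdot\tfrac{P_{B,v'}(x)}{\cN(0,I_d)(x)}\right]-1 \;=\; \sum_{j\ge 1}\hat B_j^2\,\rho^{\,j}.
\]
Invoking \Cref{cond:moment_matching}, rewriting $h_j$ in the monomial basis converts ``raw moments of $B$ match $\cN(0,1)$ up to degree $m$ within $\nu$'' into $|\hat B_j|\le c_j\nu$ for $1\le j\le m$ ($c_1=1$, each $c_j$ an absolute constant). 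Splitting the sum at $j=m$: the low-degree part is at most $(1+o(1))\nu^2$ once $|\rho|$ is small, and the high-degree part satisfies $\sum_{j>m}\hat B_j^2\rho^j\le|\rho|^{m+1}(1+\chi^2(B,\cN(0,1)))\le d^{-m/8}$ as soon as $|\rho|\le d^{-1/2+o(1)}$. Hence $\gamma\le d^{-m/8}+(1+o(1))\nu^2$ over any family of pairwise near-orthogonal directions.

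For (b), a standard volume/probabilistic argument yields $N=2^{d^{\Omega(1)}}$ unit vectors in $\cS^{d-1}$ with pairwise inner products at most $d^{-1/2+o(1)}$; here the hypothesis $d\ge(m\log d)^2$ is what keeps $m$ small enough that this exponent is meaningful while $N$ remains $2^{d^{\Omega(1)}}$. Feeding $N$, the pairwise bound $\gamma$, and the bounded self-correlation $\sum_{j\ge1}\hat B_j^2=\chi^2(B,\cN(0,1))$ into the generic SQ lower bound (in the form of \cite{DiaKS17,diakonikolas2024sq}) gives: any SQ algorithm solving \Cref{prob:generic_hypothesis_testing} either makes $2^{d^{\Omega(1)}}$ queries or poses one query of accuracy at most $\sqrt\gamma\le d^{-m/16}+(1+o(1))\nu$, which is the claim.

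The part that actually takes care is the bookkeeping rather than any single idea: tracking the approximate-matching slack so it enters only additively as $(1+o(1))\nu$ --- which forces one to control the monomial-to-Hermite change of basis (combinatorial in $m$) against the saving powers of $|\rho|$, and is cleanest when the hypothesis is phrased directly in Hermite coordinates as in \cite{diakonikolas2024sq} --- together with ensuring the high-degree tail is negligible, i.e.\ that $\chi^2(B,\cN(0,1))$ (finite, and subexponential in $m$ for the $B$'s of interest) does not swamp $|\rho|^{m+1}$, precisely where $d\ge(m\log d)^2$ is used. The Mehler computation and the sphere packing are routine and already packaged in the cited references, which is why we are content to quote their Theorem~1.5 directly.
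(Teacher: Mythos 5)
Your pointer to the reference is exactly what the paper does: \Cref{prop:SQhardness_NGCA} is imported verbatim as Theorem~1.5 of \cite{diakonikolas2024sq} with $\lambda=1/2$, and no proof is given in the paper, so deferring to the citation is the ``same approach.''

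The caveat is that the argument you sketch is not actually a proof of the proposition as stated; it is the older correlation argument of \cite{DiaKS17}. That argument bounds the pairwise correlation by splitting at degree $m$ and controlling the tail by $|\rho|^{m+1}\bigl(1+\chi^2(B,\cN(0,1))\bigr)$, and it feeds the self-correlation $\chi^2(B,\cN(0,1))$ into the SQ-dimension machinery, so the resulting query-accuracy bound carries a $\chi^2(B,\cN(0,1))$ factor and the whole argument collapses when that divergence is huge or infinite. But \Cref{cond:moment_matching} imposes only approximate moment matching --- it gives no control whatsoever on $\chi^2(B,\cN(0,1))$ (the $B$ in \Cref{prob:generic_hypothesis_testing} may even be discrete, with infinite $\chi^2$), and the stated accuracy $d^{-m/16}+(1+o(1))\nu$ contains no such factor. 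Removing the chi-squared hypothesis is precisely the strengthening of \cite{diakonikolas2024sq} over \cite{DiaKS17}, and it is the reason this paper cites the newer theorem: the mixtures $B$ constructed here have finite but uncontrolled $\chi^2$, and the perturbation argument in the appendix leans on the $(1+o(1))\nu$ form of the bound. So your parenthetical assumption that $\chi^2(B,\cN(0,1))$ is ``finite, and subexponential in $m$ for the $B$'s of interest'' is an extra hypothesis not available from the statement, and with it your sketch proves a weaker theorem than the one quoted. As a citation your proposal is fine; as a sketch of the cited theorem's proof it substitutes the predecessor's argument for the one that actually yields the stated bound.
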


\section{The Uniform Weights Case}\label{sec:main1}

In this section we prove the following proposition, which is sufficient for showing our first result, \Cref{thm:main1}.%
\begin{restatable}{proposition}{PROPMAIN}\label{prop:main1}
    For each $k$ that is larger than a sufficiently large absolute constant, there exists a set $S$ of $k$ points in $\R$ such that the uniform distribution over $S$ matches the first $\Omega(\log k)$ moments with $\cN(0,1)$.
\end{restatable}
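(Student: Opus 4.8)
The plan is to realize $S$ as a small equal-weight \emph{combinatorial $t$-design}: a set of $k$ equally weighted real points whose uniform distribution matches the first $t$ moments of $\cN(0,1)$, for $t := \lfloor c\log k\rfloor$ with a small absolute constant $c>0$. The main tool would be the small-design existence theorem of \cite{kane2015small}: for a distribution $Q$ supported on a path-connected set $X$, equal-weight $t$-designs for $Q$ exist of every size above a threshold $N_0 = \poly(t)\cdot K$, where
\[
K \;:=\; \sup_{p}\ \frac{\sup_{x\in X}p(x)}{\bigl|\inf_{x\in X}p(x)\bigr|},
\]
the supremum ranging over nonzero polynomials $p$ of degree at most $t$ with $\E_{x\sim Q}[p(x)]=0$. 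Taking $Q=\cN(0,1)$ with $X=\R$ fails at once, since a zero-mean polynomial can be unbounded above while bounded below, so $K=\infty$; getting around this is the crux.

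The first step is to replace $\cN(0,1)$ by an auxiliary distribution $Q$ supported on a bounded interval $I=[-R,R]$ with $R = O(\sqrt{t})$ that still matches the first $t$ moments of $\cN(0,1)$ — a Gaussian quadrature rule, supplied by \cite{DiaKS17}. Since $I$ is path-connected and every polynomial of degree at most $t$ has $\E_{Q}[p]=\E_{\cN(0,1)}[p]$, a $t$-design for $Q$ is automatically a $t$-design for $\cN(0,1)$. So it suffices to bound $K$ with $X=I$ and ``zero-mean'' understood with respect to $\cN(0,1)$.

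The heart of the argument is then to show $K = 2^{O(t)}$. Fix a degree-$\le t$ polynomial $p$ with $\E_{\cN(0,1)}[p]=0$, normalized so $\|p\|_1 = 1$. For the numerator, I would expand $p$ in the normalized Hermite basis, bound its coefficients via $\|p\|_2 \le 3^{t}\|p\|_1$ (\Cref{cl:norms-relation}, which is hypercontractivity \Cref{fact:hypercontractivity}), and use the crude pointwise bound $|h_i(x)| \le 2^{O(t)}$ for $i\le t$ and $|x| = O(\sqrt t)$, to get $\sup_{x\in I}|p(x)| \le 2^{O(t)}$. For the denominator I need $p$ to dip to $-2^{-O(t)}$ somewhere inside $I$. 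Since $\E[p]=0$ and $\E[|p|]=1$, the negative part satisfies $\E[-p\,\mathbf{1}(p<0)] = \tfrac12$, so Cauchy--Schwarz together with $\|p\|_2\le 3^t$ gives $\Pr_{\cN(0,1)}(p<0)\ge 2^{-O(t)}$; the Carbery--Wright bound (\Cref{eq:CW_applied}) gives $\Pr_{\cN(0,1)}(|p|\le \eps)\le O(t\eps^{1/t})$, which drops below $\tfrac13\Pr(p<0)$ once $\eps = 2^{-\Theta(t)}$ is small enough; and $\Pr_{\cN(0,1)}(x\notin I)\le 2^{-\Omega(t)}$ by Gaussian concentration (\Cref{fact:normConcetration}), which can be pushed below $\tfrac13\Pr(p<0)$ by enlarging the constant in $R=O(\sqrt t)$. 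Hence the event $\{x\in I,\ p(x)<-\eps\}$ has positive probability, so $|\inf_I p| \ge \eps = 2^{-\Theta(t)}$. Combining gives $K = 2^{O(t)}$, so $N_0 = 2^{O(t)}$, and choosing $c$ small enough that $N_0\le k$ for all large $k$ produces a $t$-design of size exactly $k$ — the desired set $S$, with $t = \Omega(\log k)$.

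I expect the denominator estimate in $K$ to be the main obstacle: it forces one to show simultaneously that a zero-mean degree-$t$ polynomial (i) becomes genuinely negative somewhere (hypercontractivity controlling $\|p\|_2/\|p\|_1$), (ii) is not merely hugging zero at that point (Carbery--Wright), and (iii) does so at a point inside the truncation window $I$ rather than out in the Gaussian tail; the quantitative $2^{-\Theta(t)}$ slacks from (i) and (ii) must survive being weighed against the $2^{-\Omega(t)}$ tail mass from (iii), which is precisely what dictates taking $R=\Theta(\sqrt t)$ with a sufficiently large constant. The numerator bound and the reduction to the bounded interval should be comparatively routine.
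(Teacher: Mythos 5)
Your overall route is the same as the paper's: apply the small-design theorem of \cite{kane2015small} to the bounded-support Gaussian quadrature distribution of \cite{DiaKS17}, bound the numerator of $K$ via a Hermite expansion plus hypercontractivity, and bound the denominator via anti-concentration. However, your denominator step has a genuine quantitative gap. You require the Carbery--Wright probability $\Pr(|p|\le\eps)\le O(t\eps^{1/t})$ to drop below $\tfrac13\Pr(p<0)$, and your lower bound $\Pr(p<0)\ge 2^{-O(t)}$ is essentially tight: for $p(x)=\E_{x\sim\cN(0,1)}[x^t]-x^t$ (with $t$ even, zero mean) one has $\Pr(p<0)=\Pr(|x|\gtrsim\sqrt{t/e})=2^{-\Theta(t)}$. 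Forcing $O(t\eps^{1/t})\le \tfrac13\cdot 2^{-\Theta(t)}$ then requires $\eps^{1/t}\lesssim 2^{-\Theta(t)}$, i.e.\ $\eps=2^{-\Theta(t^2)}$, not $2^{-\Theta(t)}$ as you claim. This only yields $\bigl|\inf_I p\bigr|\ge 2^{-\Theta(t^2)}\|p\|_1$, hence $K=2^{O(t^2)}$, a design-size threshold of $2^{O(t^2)}$, and therefore only $t=\Omega(\sqrt{\log k})$ matched moments --- short of the stated $\Omega(\log k)$.

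The fix, which is what the paper's \Cref{lem:anti-conc} does, is to not weigh the near-zero event against $\Pr(p<0)$ at all, but against the integral: since $\E[p]=0$ and $\|p\|_1=1$, the negative part integrates to $\tfrac12$, the region $\{-\eps< p<0\}$ contributes at most $\eps$ to it, so the region $\{p\le-\eps\}$ contributes at least $\tfrac12-\eps$, and Cauchy--Schwarz on that region gives $\Pr(p\le-\eps)\ge \bigl((\tfrac12-\eps)/\|p\|_2\bigr)^2\ge 2^{-O(t)}$ already for $\eps=2^{-t}$ (Carbery--Wright, as used in \Cref{eq:CW_applied}, only sharpens constants here). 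This beats the tail mass $\Pr(x\notin I)=2^{-\Omega(t)}$ once the constant in $I=[-C\sqrt t,C\sqrt t]$ is large, giving $\bigl|\inf_I p\bigr|\ge 2^{-t}\|p\|_1$ and hence $K=2^{O(t)}$ as needed. A second, smaller issue: your ``crude pointwise bound'' $|h_i(x)|\le 2^{O(t)}$ for $|x|=O(\sqrt t)$, $i\le t$ is not crude --- coefficient bounds alone (cf.\ \Cref{fact:max_coeff}) give only $2^{O(t\log t)}$, which would again degrade the conclusion to $t=\Omega(\log k/\log\log k)$; the $2^{O(t)}$ bound is true but needs a Hermite-function estimate such as Cram\'er's inequality or the Krasikov bound the paper invokes (\Cref{fact:krasikov}), and should be cited or proved rather than asserted.
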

Given the above, \Cref{thm:main1} follows directly from standard SQ theory. The details are provided in \Cref{sec:omitted_from_lower_bound}, but the steps are summarized as follows: Let $A$ be the uniform distribution on the set $S$ from \Cref{prop:main1}. We can define the distribution $B$ to be what one obtains by first drawing a sample from $A$, rescaling it by $1/\sqrt{\delta}$ and adding Gaussian noise $\cN(0,1 - \delta)$. This operation preserves moment matching and makes $B$ a GMM. The NGCA \Cref{prob:generic_hypothesis_testing} with that $B$ then becomes equivalent to the parallel pancakes \Cref{def:pancakes}. Since $B$ matches $m = \Omega(\log k)$ moments with $\cN(0,1)$, its standard SQ hardness state that its complexity is $d^{\Omega(m)} = d^{\Omega(\log k)}$ (\Cref{prop:SQhardness_NGCA}). We refer to \Cref{sec:omitted_from_lower_bound} for the details of this paragraph.

In the remainder, we focus on proving \Cref{prop:main1} by leveraging a result on designs theory from \cite{kane2015small}. 
The original result in \cite{kane2015small} is highly general and applies to a wide range of topological, path-connected design problems. However, as we will only use the theorem for intervals, we present here a specialized version tailored to this case.%

\begin{fact}[see Theorem 4 in \cite{kane2015small}]\label{thm:design}
    Let $t \in Z_+$ be an integer, $I \subset \R$ be an interval and $Q$ be a distribution on $I$.
    Let $W_t$ be the vector space of all polynomials of degree at most $t$, and $V_t$ be the vector space of polynomials $p$ of degree at most $t$ with $\E_{x \sim Q}[p(x)] = 0$.
    Define $K_t = \sup_{p \in V_t \setminus \{ 0\}} \frac{\sup_{x \in I} p(x)}{| \inf_{x \in I} p(x) | }$.
    Then for every integer $n>(t-1)(K_t+1)$ there exists a set $S \subset I$ of $n$ points such that
    $\tfrac{1}{|S|} \sum_{x \in S} p(x) = \E_{x \sim Q}[p(x)]$ for all $p \in W_t$.%
\end{fact}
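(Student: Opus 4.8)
\textit{Proof plan.}
The plan is to recast the statement as a geometric fact about a convex body and then to exploit that $K_t$ is precisely an \emph{asymmetry constant} of that body about the origin. Fix a basis $p_1,\dots,p_t$ of $V_t$ and set $\gamma=(p_1,\dots,p_t)\colon I\to\R^t$. Since matching $\E_{x\sim Q}[c(x)]$ for a constant $c$ is automatic, the conclusion is equivalent to: for every integer $n>(t-1)(K_t+1)$ there exist $x_1,\dots,x_n\in I$ with $\sum_{i=1}^n\gamma(x_i)=0$. Let $B=\operatorname{conv}(\gamma(I))$. Then $0=\E_{x\sim Q}[\gamma(x)]\in B$, and for a direction $c\in\R^t$ the support function of $B$ is $h_B(c)=\sup_{x\in I}\langle c,\gamma(x)\rangle=\sup_{x\in I}p(x)$ with $p=\sum_j c_jp_j\in V_t$, while $h_B(-c)=|\inf_{x\in I}p(x)|$. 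Hence $K_t=\sup_c h_B(c)/h_B(-c)$ is exactly the asymmetry of $B$ about the origin, i.e. $-B/K_t\subseteq B$. (If $K_t=\infty$ the statement is vacuous; concretely this happens precisely when some nonzero $p\in V_t$ is nonnegative on $I$ and vanishes on $\supp(Q)$ — e.g. $I$ unbounded, or $\supp(Q)$ too small. When $K_t<\infty$ the same computation gives $0\in\operatorname{int}B$.)

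Next I would pass to a finite problem by rounding a low-support representation of the barycenter. Because $\gamma(I)$ is the continuous image of an interval it is path-connected, so by Carathéodory's theorem for \emph{connected} sets (Fenchel) one can write $0=\sum_{j=1}^{r}\lambda_j\gamma(y_j)$ with $r\le t$, $y_j\in I$, $\lambda_j>0$, $\sum_j\lambda_j=1$; this saving of one atom (from $t+1$ to $t$) is what ultimately produces the factor $t-1$ rather than $t$. Given $n$, choose integers $m_j\ge 0$ with $\sum_j m_j=n$ and $|m_j-n\lambda_j|<1$, and take the size-$n$ multiset with $m_j$ copies of each $y_j$; its $\gamma$-sum is the bounded error vector $e=\sum_j(m_j-n\lambda_j)\gamma(y_j)$. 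The crucial bookkeeping point is that the coefficients of $e$ sum to zero ($\sum_j(m_j-n\lambda_j)=n-n=0$), so $e$ lies in the span of the $r-1\le t-1$ difference vectors $\gamma(y_j)-\gamma(y_1)$: a space of dimension at most $t-1$.

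The heart of the argument — and the step I expect to be the main obstacle — is to cancel $e$ \emph{exactly} while disturbing only $O\!\big((t-1)(K_t+1)\big)$ of the $n$ points. Reassign: $\sum_i\gamma(x_i)=e+\sum_i\big(\gamma(x_i)-\gamma(y_{j(i)})\big)$, so it suffices to choose $\ell=(t-1)(K_t+1)$ of the $x_i$'s freely (keeping the rest equal to their $y_{j(i)}$) so that $\sum_{i\in\text{free}}\big(\gamma(x_i)-\gamma(y_{j(i)})\big)=-e$; this is available whenever $n>\ell$. One now argues that the $\ell$-fold Minkowski sum of translated copies of the curve $\gamma(I)$ contains a $0$-centered neighborhood of $\R^t$ large enough to contain $-e$: each copy of the curve contributes ``width'' in its tangent direction (here one uses that $\gamma$ is a polynomial map, so $\gamma(I)$ is real-analytic with non-degenerate tangents), and the $K_t$-balance of $B$ forces these directions to surround the origin, so that $K_t+1$ copies buy one dimension's worth of the required $\rho$-ball (with $\rho=\sum_j\|\gamma(y_j)\|$ the bound on $\|e\|$), and $(t-1)(K_t+1)$ copies buy all $t-1$ dimensions in which $e$ can lie. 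Turning this heuristic into the exact constant $(t-1)(K_t+1)$ — essentially Kane's quantitative refinement of the Seymour–Zaslavsky averaging-set existence theorem, via a degree/intermediate-value argument on the free coordinates — is the delicate part.

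Finally, the construction yields a multiset; to obtain a genuine set of $n$ distinct points I would use that since $n>t$ the solution locus $\{(x_1,\dots,x_n)\in I^n:\sum_i\gamma(x_i)=0\}$ has positive dimension at a regular point, while the diagonals $\{x_i=x_j\}$ are proper subvarieties, so an arbitrarily small perturbation within the locus separates any coincident nodes.
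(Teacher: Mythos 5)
First, note that the paper itself does not prove this statement: it is imported verbatim as a citation to Theorem 4 of \cite{kane2015small}, so your sketch has to stand on its own as a proof, and as it stands it does not. Your setup is fine and is in the spirit of Kane's argument: reducing the design condition to $\sum_{i}\gamma(x_i)=0$ for $\gamma=(p_1,\dots,p_t):I\to\R^t$ a basis of $V_t$, recognizing $K_t$ as the asymmetry constant of $B=\mathrm{conv}(\gamma(I))$ about the barycenter, invoking the connected-sets Carath\'eodory theorem to write $0$ as a convex combination of at most $t$ curve points, and rounding to get an error vector $e$ with coefficients summing to zero. But the entire content of the fact is the quantitative threshold $n>(t-1)(K_t+1)$, and that is exactly the step you do not prove: you assert that $(K_t+1)$ freely movable points ``buy one dimension's worth'' of a $\rho$-ball in the Minkowski sum of translated copies of $\gamma(I)$ and then defer the rest to ``Kane's quantitative refinement of the Seymour--Zaslavsky theorem'' --- i.e., to the very theorem being proved. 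Nothing in the sketch connects the asymmetry constant $K_t$ of the convex hull to a quantitative covering statement for Minkowski sums of the curve; establishing such a link (and with the precise constant) is the heart of the matter, not a finishing detail.

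There is also a concrete accounting problem in the cancellation step. The error $e$ lies in the span of the $r-1\le t-1$ differences $\gamma(y_j)-\gamma(y_1)$, but the corrections $\gamma(x_i)-\gamma(y_{j(i)})$ you propose to use live in all of $\R^t$; requiring their sum to equal $-e$ is a system of $t$ constraints, and the fact that $e$ sits in a $(t-1)$-dimensional subspace does not by itself reduce the number of needed degrees of freedom unless you can force the corrections to stay in that subspace, which you do not argue. So the claim that $(t-1)(K_t+1)$ free points suffice ``because $e$ can only lie in $t-1$ dimensions'' does not follow from what is written. Likewise, the tangent-direction/real-analyticity remark gives no lower bound on how much any single moved point can contribute, so no bound of the form ``so many points per unit of error'' is actually available. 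To repair the proof you would need the genuine mechanism of \cite{kane2015small}: an iterative or intermediate-value/degree argument along paths in $I$ in which the overshoot needed to correct one coordinate without spoiling positivity of the weights is controlled by $K_t$, and that argument has to be carried out, not cited.
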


Our goal is to show that $K_t = 2^{O(t)}$ for $Q = \cN(0,1)$, which would directly imply \Cref{thm:main1}. However, as noted in \Cref{sec:techniques}, $K_t$ may be infinite when $I = \R$. To address this, we use a distribution $Q$ supported on a bounded interval of $\R$ that matches the first $t$ moments of $\cN(0,1)$. Applying \Cref{thm:design} with this $Q$ also suffices for establish \Cref{thm:main1}. %

\begin{lemma}[Gaussian Quadrature (Lemma 4.3 in \cite{DiaKS17})]\label{lem:quadrature}
    There is a discrete distribution $Q$ on the real line, supported on $t$ points, that agrees with
$\cN(0,1)$ on the first $2t-1$ moments. All points $x$ in the support of $Q$ have $|x| = O(\sqrt{t})$.
\end{lemma}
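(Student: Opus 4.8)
The plan is to invoke the classical Gaussian quadrature construction specialized to the standard Gaussian measure $\gamma := \cN(0,1)$: the $t$-node Gauss rule, whose nodes are the zeros of the degree-$t$ Hermite polynomial, is exact on all polynomials of degree at most $2t-1$ (which is exactly the moment matching we need), and its nodes turn out to lie in an interval of width $O(\sqrt t)$. Concretely, let $h_t$ denote the probabilist's degree-$t$ Hermite polynomial. It has $t$ distinct real zeros $x_1 < \dots < x_t$: if $h_t$ changed sign at fewer than $t$ points, then multiplying it by the monic polynomial vanishing precisely at those sign-change points would yield a polynomial of degree $<t$ whose $\gamma$-integral against $h_t$ is nonzero, contradicting that $h_t$ is $\gamma$-orthogonal to all lower-degree polynomials. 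Let $\ell_1,\dots,\ell_t$ be the associated Lagrange basis polynomials (degree $t-1$, with $\ell_i(x_j)=\mathbf{1}(i=j)$), set $w_i := \E_{x\sim\gamma}[\ell_i(x)]$, and let $Q := \sum_{i\in[t]} w_i\,\delta_{x_i}$.

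\emph{Exactness and positivity.} For any polynomial $p$ of degree at most $2t-1$, divide by $h_t$: $p = q\,h_t + r$ with $\deg q,\deg r \le t-1$. Then $\E_{x\sim\gamma}[p(x)] = \E_{x\sim\gamma}[q(x)h_t(x)] + \E_{x\sim\gamma}[r(x)] = \E_{x\sim\gamma}[r(x)]$ by orthogonality, while $\E_{x\sim Q}[p(x)] = \sum_i w_i p(x_i) = \sum_i w_i r(x_i) = \sum_i r(x_i)\,\E_{x\sim\gamma}[\ell_i(x)] = \E_{x\sim\gamma}\big[\sum_i r(x_i)\ell_i(x)\big] = \E_{x\sim\gamma}[r(x)]$, using $h_t(x_i)=0$ and that $r$ equals its own Lagrange interpolant through the nodes $x_i$. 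Hence $\E_{x\sim Q}[x^j] = \E_{x\sim\gamma}[x^j]$ for $j=1,\dots,2t-1$; taking $p\equiv 1$ gives $\sum_i w_i = 1$, and taking $p=\ell_i^2$ (degree $2t-2$) gives $w_i = \E_{x\sim\gamma}[\ell_i(x)^2] > 0$, so $Q$ is a genuine probability distribution supported on $t$ points.

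\emph{Support bound.} The only step beyond textbook Gauss quadrature is showing $|x_i| = O(\sqrt t)$. I would argue through the Jacobi (three-term recurrence) matrix: the recurrence for the orthonormal probabilist's Hermite polynomials reads $x\,\tilde h_n(x) = \sqrt{n+1}\,\tilde h_{n+1}(x) + \sqrt n\,\tilde h_{n-1}(x)$, so the zeros of $h_t$ are exactly the eigenvalues of the $t\times t$ symmetric tridiagonal matrix $J_t$ with zero diagonal and off-diagonal entries $\sqrt 1,\sqrt 2,\dots,\sqrt{t-1}$. By Gershgorin's circle theorem every eigenvalue of $J_t$ lies within distance at most $\sqrt{t-1}+\sqrt{t-2} \le 2\sqrt t$ of $0$, so $|x_i| \le 2\sqrt t$ for all $i$. (One could equally just cite the standard estimate on the extreme zeros of Hermite polynomials.) I expect this last step — pinning down the recurrence coefficients and the matrix estimate — to require the most care, though it is entirely standard; everything else is classical Gaussian quadrature.
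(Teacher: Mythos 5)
Your proposal is correct and is exactly the standard Gauss--Hermite quadrature construction that underlies the cited result: the paper does not prove this lemma but imports it from \cite{DiaKS17}, whose Lemma 4.3 is likewise the $t$-node Gauss rule for the Gaussian measure (nodes at the zeros of the degree-$t$ Hermite polynomial, positive weights, exactness up to degree $2t-1$, and the classical $O(\sqrt{t})$ bound on the extreme Hermite zeros, which your Jacobi-matrix/Gershgorin argument establishes self-containedly). No gaps; your write-up is essentially a complete proof of the cited fact rather than a different route.
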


We start with an anti-concentration property of Gaussian polynomials that will be useful for bounding the numerator in the definition of $K_t$.%

\begin{lemma}\label{lem:anti-conc}
Let $C$ be a sufficiently large constant.
    For every polynomial $p : \R \to \R$ of degree at most $t$ with $\E_{x \sim \cN(0,1)}[p(x)]=0$ and for every $\eps > 0$ it holds
    \begin{align*}
        \Pr_{x \sim \cN(0,1)}(p(x) > \eps \|p\|_1) \geq \left(\frac{1}{2}\frac{\|p\|_1}{\|p\|_2} (1-C t\eps^{1+1/t}) \right)^2 \;.
    \end{align*}
\end{lemma}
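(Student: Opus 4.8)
The plan is to prove the claimed lower bound on $\Pr_{x\sim\cN(0,1)}(p(x)>\eps\|p\|_1)$ by combining a second-moment (Paley–Zygmund-type) argument with the Carbery–Wright anti-concentration estimate (\Cref{eq:CW_applied}). First I would normalize: without loss of generality assume $\|p\|_1 = 1$, so the goal becomes a lower bound on $\Pr(p(x)>\eps)$ in terms of $1/\|p\|_2$. Since $\E[p(x)]=0$, we have $\E[p^+(x)] = \E[p^-(x)] = \tfrac12\E|p(x)| = \tfrac12$, where $p^+ = \max(p,0)$ and $p^- = \max(-p,0)$. The key quantity to control from below is $\E[p^+(x)\,\mathbf 1(p(x)>\eps)] = \E[p^+(x)] - \E[p^+(x)\,\mathbf 1(0<p(x)\le\eps)] \ge \tfrac12 - \eps\cdot\Pr(0<p(x)\le \eps)$.

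Next, I would bound $\Pr(0<p(x)\le\eps) \le \Pr(|p(x)|\le\eps) = \Pr(|p(x)|\le\eps\|p\|_1)$ and apply \Cref{eq:CW_applied} to get this is at most $O(t\eps^{1/t})$; absorbing constants, $\E[p^+(x)\,\mathbf 1(p(x)>\eps)] \ge \tfrac12(1 - Ct\eps^{1+1/t})$ for a suitable constant $C$. Then Cauchy–Schwarz gives
\[
\E[p^+(x)\,\mathbf 1(p(x)>\eps)] \le \sqrt{\E[p^+(x)^2]}\cdot\sqrt{\Pr(p(x)>\eps)} \le \|p\|_2\cdot\sqrt{\Pr(p(x)>\eps)}\,,
\]
since $\E[p^+(x)^2]\le\E[p(x)^2] = \|p\|_2^2$. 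Rearranging,
\[
\Pr(p(x)>\eps) \ge \left(\frac{\E[p^+(x)\,\mathbf 1(p(x)>\eps)]}{\|p\|_2}\right)^2 \ge \left(\frac{1}{2}\cdot\frac{1-Ct\eps^{1+1/t}}{\|p\|_2}\right)^2\,,
\]
which, after restoring the normalization $\|p\|_1$ (so $\|p\|_2$ becomes $\|p\|_2/\|p\|_1$), is exactly the stated inequality.

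The argument is mostly routine once the decomposition is set up; the one place requiring a little care is making sure the exponent on $\eps$ comes out as $1+1/t$ rather than just $1/t$. This is handled by the extra factor of $\eps$ out front of $\Pr(0<p(x)\le\eps)$ in the bound on $\E[p^+\mathbf 1(p>\eps)]$: the $\eps$ from truncating the integrand multiplies the $\eps^{1/t}$ from Carbery–Wright. I should also double-check that \Cref{eq:CW_applied} is applied to the original polynomial $p$ (degree $t$) and not to $p^+$, which is not a polynomial — the truncation happens only inside the probability estimate, where we bound $\Pr(0<p\le\eps)$ by $\Pr(|p|\le\eps\|p\|_1)$, a statement purely about the polynomial $p$. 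I do not anticipate a serious obstacle: the only subtlety is bookkeeping the constant $C$ so that the same $C$ works uniformly, which is immediate since Carbery–Wright's constant is absolute and $r=t$ here.
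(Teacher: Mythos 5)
Your proposal is correct and follows essentially the same argument as the paper's proof: using $\E[p(x)]=0$ to get $\E[p^+(x)]=\tfrac12\|p\|_1$, splitting at the threshold $\eps\|p\|_1$, bounding the large-value part via Cauchy--Schwarz by $\|p\|_2\sqrt{\Pr(p>\eps\|p\|_1)}$, and bounding the small-value part by $\eps\|p\|_1$ times the Carbery--Wright estimate (\Cref{eq:CW_applied}), then rearranging. The normalization $\|p\|_1=1$ and the $p^+$ notation are only cosmetic differences from the paper's presentation.
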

\begin{proof}
Denote by $\phi(x)$ the pdf of $\cN(0,1)$. We have the following (each step is explained below):
    \begin{align*}
        \|p\|_1  
        &=\int_{p(x) > 0}  p(x) \phi(x)   \d x   -  \int_{p(x) \leq 0}  p(x) \phi(x)   \d x \\
        &=  2  \int_{p(x) > 0} p(x) \phi(x)  \d x \\
        &= 2 \left( \int_{p(x) \geq \eps \|p\|_1} p(x) \phi(x)  \, \d x + \int_{0 \leq p(x) < \eps \|p\|_1} p(x) \phi(x) \d x \right)\\
        &\leq 2 \|p\|_2 \Pr_{x \sim \cN(0,1)}(p(x) \geq \eps \|p\|_1)^{1/2}+ 2 \eps \|p\|_1\Pr_{x \sim \cN(0,1)}(|p(x)| \leq \eps \|p\|_1) \\
        &\leq 2 \|p\|_2 \Pr_{x \sim \cN(0,1)}(p(x) \geq \eps \|p\|_1)^{1/2} + 2  \eps \|p\|_1 C t \eps^{1/t},
    \end{align*}
    where the first line used that $\E_{x \sim \cN(0,1)}[p(x)] = 0$, the
    penultimate inequality used the Cauchy–Schwarz inequality for the first term and the bound $p(x) \leq \eps \|p\|_1$ for the second term, and the last line used the Carbery-Wright inequality (\Cref{eq:CW_applied}).
    Rearranging, we obtain
    $\sqrt{\Pr_{x \sim \cN(0,1)}(p(x) > \eps \|p\|_1)} \geq \frac{1}{2}\frac{\|p\|_1}{\|p\|_2} (1-2C t\eps^{1+1/t})$. We rename the constant $2C$ to $C$.
\end{proof}

We now  bound $K_t$ from \Cref{thm:design} with $I = [-C \sqrt{t}, C  \sqrt{t}]$.%
\begin{lemma}\label{lem:bound-K}
    Let $t>C$ be an integer, where $C$ is a sufficiently large constant, and define $I = [-C\sqrt{t}, + C\sqrt{t}]$.
    For every polynomial $p$ of degree at most $t$ with $\E_{x \sim \cN(0,1)}[p(x)] = 0$ it holds
    $$\frac{\sup_{x \in I} p(x)}{| \inf_{x \in I} p(x) | } \leq 2^{O(t)}.$$ 
\end{lemma}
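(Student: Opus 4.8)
### Plan to bound $K_t$ on the interval $I = [-C\sqrt t, C\sqrt t]$

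The goal is to show that for every degree-$\le t$ polynomial $p$ with $\E_{x\sim\cN(0,1)}[p(x)]=0$, the ratio $\sup_{x\in I}p(x)/|\inf_{x\in I}p(x)|$ is at most $2^{O(t)}$. The plan is to normalize by the $L^1$ norm $\|p\|_1 := \E_{x\sim\cN(0,1)}[|p(x)|]$ and to prove two matching one-sided estimates: an \emph{upper bound} $\sup_{x\in I}p(x) \le 2^{O(t)}\|p\|_1$ (a pointwise bound on the interval), and a \emph{lower bound} $|\inf_{x\in I}p(x)| \ge 2^{-O(t)}\|p\|_1$ (saying $p$ must dip noticeably below zero somewhere in $I$). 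Dividing the two gives the claim. By symmetry of the hypotheses under $p \mapsto -p$, the same two estimates also control $-p$, so we may freely swap the roles of $\sup$ and $\inf$ as convenient; the point is just that $p$ attains both a value $\gtrsim -2^{-O(t)}\|p\|_1$-or-smaller and a value $\le 2^{O(t)}\|p\|_1$ on $I$.

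For the upper bound, I would expand $p$ in the normalized Hermite basis as $p = \sum_{i=0}^t a_i h_i$. Since the $h_i$ are orthonormal with respect to the Gaussian measure, $\|p\|_2^2 = \sum_i a_i^2$, hence $|a_i| \le \|p\|_2$ for each $i$. On the interval $I$ one has a crude pointwise bound $\sup_{x\in I}|h_i(x)| \le 2^{O(t)}$ for all $i\le t$: the normalized probabilist's Hermite polynomial $h_i$ has leading coefficient $1/\sqrt{i!}$ and all coefficients bounded in absolute value by roughly $1/\sqrt{i!}$ times binomial factors, so on $|x|\le C\sqrt t$ each term is at most $(C\sqrt t)^i/\sqrt{i!}\cdot \mathrm{poly}(t) = 2^{O(t)}$ (using $(C\sqrt t)^i/\sqrt{i!} \le e^{O(t)}$ by Stirling), and summing $\le t+1$ such terms keeps us at $2^{O(t)}$. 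Therefore $\sup_{x\in I}|p(x)| \le \sum_i |a_i|\,\sup_{x\in I}|h_i(x)| \le (t+1)\|p\|_2\cdot 2^{O(t)} = 2^{O(t)}\|p\|_2$. Finally convert $\|p\|_2$ to $\|p\|_1$ using \Cref{cl:norms-relation}, which gives $\|p\|_2 \le 3^t\|p\|_1$; this yields $\sup_{x\in I}|p(x)| \le 2^{O(t)}\|p\|_1$, which handles the numerator (and, applied to $-p$, shows the denominator term is at least meaningful to talk about).

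For the lower bound, I would apply the anti-concentration estimate \Cref{lem:anti-conc} to $-p$ (which still has Gaussian mean $0$ and the same $L^1,L^2$ norms), with a fixed small constant $\eps = \eps_0$ chosen so that $C t \eps_0^{1+1/t} < 1/2$ — concretely $\eps_0$ of the form $c/t^{1+1/t} \asymp c/t$ for a small absolute constant $c$. Then \Cref{lem:anti-conc} gives $\Pr_{x\sim\cN(0,1)}(-p(x) > \eps_0\|p\|_1) \ge \big(\tfrac14 \|p\|_1/\|p\|_2\big)^2 \ge \tfrac{1}{16}\cdot 9^{-t} = 2^{-O(t)}$, again using \Cref{cl:norms-relation} for $\|p\|_1/\|p\|_2 \ge 3^{-t}$. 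So the event $\{p(x) < -\eps_0\|p\|_1\}$ has positive probability; but more is needed — we must know this happens \emph{inside} the bounded interval $I$, not merely somewhere on $\R$. Here I would use Gaussian tail concentration: $\Pr_{x\sim\cN(0,1)}(|x| > C\sqrt t) \le e^{-\Omega(C^2 t)}$, which for $C$ a sufficiently large absolute constant is smaller than $2^{-O(t)}$ lower bound above. Hence the event $\{p(x) < -\eps_0\|p\|_1\} \cap \{x \in I\}$ still has positive probability, so there exists $x^\star \in I$ with $p(x^\star) < -\eps_0\|p\|_1 = -2^{-O(t)}\|p\|_1$, giving $|\inf_{x\in I}p(x)| \ge 2^{-O(t)}\|p\|_1$.

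The main obstacle I anticipate is getting the two bounds on the \emph{same} normalization cleanly and making sure the interval-truncation step in the lower bound really survives: the probability of the "good" anti-concentration event is only $2^{-\Theta(t)}$, so I need the Gaussian tail mass outside $I$ to be genuinely smaller than that, which forces a quantitative relationship between the constant $C$ in the definition of $I$ and the implied constant in $\Omega(\log k)$ / the $2^{-O(t)}$. Concretely $C$ must be chosen large enough (as an absolute constant) that $e^{-C^2 t/2}$ beats $\tfrac{1}{16}9^{-t}$, e.g. $C^2/2 > \ln 9$, which is harmless — $C$ was already "sufficiently large" in the statement. The Hermite pointwise bound is routine but should be done with a little care to confirm the exponential-in-$t$ (not worse) dependence; invoking Stirling to control $(C\sqrt t)^i/\sqrt{i!}$ uniformly over $i\le t$ is the cleanest route. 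Combining: $\dfrac{\sup_{x\in I}p(x)}{|\inf_{x\in I}p(x)|} \le \dfrac{2^{O(t)}\|p\|_1}{2^{-O(t)}\|p\|_1} = 2^{O(t)}$, as desired. $\qed$
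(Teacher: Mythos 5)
Your proposal is correct and follows essentially the same route as the paper's proof: upper-bound the numerator by $2^{O(t)}\|p\|_1$ via the Hermite expansion together with $\|p\|_1 \geq 3^{-t}\|p\|_2$ (\Cref{cl:norms-relation}), and lower-bound the denominator by applying \Cref{lem:anti-conc} to $-p$ and subtracting the Gaussian tail mass outside $I$ to place the negative value inside the interval. The only cosmetic differences are that the paper controls $\sup_{x\in I}|h_i(x)|$ via Krasikov's estimate (\Cref{fact:krasikov}) rather than Stirling bounds on the Hermite coefficients (where your per-term ``$\mathrm{poly}(t)$'' factor should really be $2^{O(t)}$, though the $2^{O(t)}$ conclusion is unaffected), and it takes $\eps = 2^{-t}$ instead of $\eps \asymp 1/t$ in the anti-concentration step.
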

\begin{proof}
It suffices to upper bound the numerator by $2^{O(t)}\|p\|_1 $ and lower bound the denominator by $2^{-O(t)} \|p\|_1$.\looseness=-1
 Combining these two with \Cref{cl:norms-relation}, the proof is concluded.

 \paragraph{Upper bound on numerator} 
    We require the following bound on Hermite polynomials:
\begin{fact}[\cite{Krasikov2004}]\label{fact:krasikov}
    For the $k$-th normalized probabilist’s Hermite polynomial $h_k$, the following holds: $$\sup_{x \in \R}h_k^2(x) e^{-x^2/2} = O(k^{-1/6}).$$
\end{fact}

    Consider a polynomial $p$ which has degree at most $t$ and satisfies $\E_{x \sim \cN(0,1)}[p(x)] = 0$. We first expand the polynomial in the Hermite basis: $p(x) = \sum_{k=1}^t a_k h_k(x)$, where the summation starts from $k=1$ because $a_0 = \E_{x \sim \cN(0,1)}[p(x)]=0$. For any $x \in I$ we have (the first step uses Cauchy–Schwarz inequality):
    \begin{align*}
        |p(x)| &= \left|\sum_{k=1}^t a_k h_k(x)  \right|
        \leq \sqrt{\sum_{k=1}^t a_k^2} \sqrt{\sum_{k=1}^t h_k^2(x)} \\
        &\lesssim \|p\|_2 \, \sqrt{\sum_{k=1}^t e^{x^2/2}  k^{-1/6}} \tag{by \Cref{fact:krasikov}}\\
        &\leq  \|p\|_2  \, 2^{O(t)} \sqrt{\sum_{k=1}^t k^{-1/6}}  \tag{using $|x| = O(\sqrt{t})$}\\
        &\leq \|p\|_2  \, 2^{O(t)} t^{O(1)} \tag{using $\sum_{k=1}^t k^{-1/6} = t^{O(1)}$ }\\
        &\leq \|p\|_2  \, 2^{O(t)} \leq \|p\|_1 2^{O(t)}\;. \tag{using \Cref{cl:norms-relation}}
    \end{align*}

    \noindent \textbf{Lower bound on the denominator} 
    From \Cref{lem:anti-conc} with $-p$ in place of $p$ and $\eps = 2^{-t}$, and \Cref{cl:norms-relation} we get that 
    \begin{align*}
        \Pr_{x \sim \cN(0,1)}&(p(x) < - 2^{-t} \|p\|_1) \geq \left(\frac{1}{2}\frac{\|p\|_1}{\|p\|_2} (1{-}C t2^{-t-1}) \right)^2 
        \geq \left(\frac{1}{2}3^{-t} (1-C t2^{-t-1}) \right)^2 > 2^{-4 t} \;.
    \end{align*}
    where we used that $t$ is big enough so that $C \tfrac{t}{2^{t}} {<}0.5$. Then,%
    \begin{align*}
         &\Pr_{x \sim \cN(0,1)}(p(x) < - 2^{-t} \|p\|_1, x \in I) \\
         &=    \Pr_{x \sim \cN(0,1)}(p(x) < - 2^{-t} \|p\|_1) -   \Pr_{x \sim \cN(0,1)}(p(x) < - 2^{-t} \|p\|_1, x \not\in I) \\
         &\geq  \Pr_{x \sim \cN(0,1)}(p(x) < - 2^{-t} \|p\|_1) -  \Pr_{x \sim \cN(0,1)}(x \not\in I)\\
         &\geq 2^{-4 t} - 2^{-100 t} > 0 \;,
    \end{align*}
    where in the last line we used that $I = [-C\sqrt{t}, + C\sqrt{t}]$ for a large constant $C$. We have thus shown that $\inf_{x \in I} p(x) \leq -2^{-t} \|p\|_1$ and therefore $| \inf_{x \in I} p(x) | > 2^{-t} \|p\|_1$.
\end{proof}

\section{The Mostly Equal Weights Case}\label{sec:second_result}
This section focuses on \Cref{thm:main2} and is organized as follows. The key structural result is the following impossibility of moment matching: if $A$ is a distribution on $k$ points, with $k’$ points having unconstrained weights and the remaining $k-k’$ equal, then $A$ cannot match more than $O(\log k + k’)$ moments with the standard Gaussian.

\begin{restatable}{proposition}{MAINSPROP} \label{prop:main2}
    Let $k' < k$ be positive integers, and let $A$ be a discrete distribution on $k$ points in $\R$. Suppose $k - k'$ of the points have equal probability masses, while the remaining $k'$ points have unrestricted probability masses. 
    Denote by $m$ the highest degree for which every degree-$m'\leq m$ polynomial $g$ satisfies $\left| \E_{x \sim A}[g(x)] - \E_{x \sim \cN(0,1)}[g(x)] \right| \leq   2^{ - C \cdot m} \| g\|_2$,
     then $m$ must satisfy $m \leq O(\log k) + O(k')$.
\end{restatable}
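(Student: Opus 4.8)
The plan is to follow the strategy sketched in \Cref{sec:techniques}: reduce \Cref{prop:main2} to the case where all but $k'$ points have weight bounded below by some explicit $w_0$, and then invoke (a sharpened version of) \Cref{prop:main}. Concretely, write the support of $A$ as $\{\mu_1,\dots,\mu_{k'}\}$ (the unconstrained points) together with $k-k'$ points of common weight $w_0 = (1 - \sum_{i\le k'} w_i)/(k-k')$. The first step is to lower bound $w_0$. Let $p(x) = \prod_{j=1}^{k'}(x-\mu_j)^2$, a nonnegative polynomial of degree $2k'$ that vanishes at every $\mu_j$. Then $\E_{x\sim A}[p(x)] = w_0 \sum_{i > k'} p(x_i) \le w_0 \cdot (k-k') \cdot \max_i p(x_i)$; more usefully, by Cauchy--Schwarz applied to the equal-weight part, $\E_{x\sim A}[p(x)]^2 \le \big(\sum_{i>k'} w_0\big)\big(\sum_{i>k'} w_0 p(x_i)^2\big) = \big(1-\sum_{i\le k'}w_i\big)\,\E_{x\sim A}[p^2(x)] \le \E_{x\sim A}[p^2(x)]$. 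Assuming $m \ge 4k'$ (otherwise we are already done since $m = O(k')$), the approximate moment-matching hypothesis with the degree-$4k'$ polynomials $p$ and $p^2$ lets us replace $\E_{x\sim A}$ by $\E_{x\sim\cN(0,1)}$ up to a $2^{-\Omega(m)}$ additive error. Combined with $\|p\|_1 \ge 3^{-2k'}\|p\|_2$ (\Cref{cl:norms-relation}) and hypercontractivity, this yields $\E_{x\sim\cN(0,1)}[p(x)] \ge 2^{-O(k')}\|p\|_2 \ge 2^{-O(k')}(\E_{x\sim A}[p^2])^{1/2}$ after absorbing the error term, and hence $\sum_{i>k'} w_i \ge 2^{-O(k')}$, so $w_0 \ge 2^{-O(k')}/k$.

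The second step is the core impossibility bound (\Cref{prop:main}): given that every point outside the $k'$ exceptional ones has weight at least $w_0$, the distribution cannot match more than $O(\log(1/w_0)+k')$ moments with $\cN(0,1)$. The mechanism is that for any nonnegative $f$ supported away from $\{\mu_j\}$, $\E_{x\sim A}[f^2] \ge w_0 \sum_i f(x_i)^2$ and $\E_{x\sim A}[f] = w_0\sum_i f(x_i)$, so by power-mean/Cauchy--Schwarz $\E_{x\sim A}[f^2]/\E_{x\sim A}[f]^2 \le 1/w_0^2 \cdot (\text{something} \le 1)$, i.e. this ratio is at most $1/w_0^2$ — wait, more precisely $\E_{x\sim A}[f]^2 = w_0^2(\sum f(x_i))^2 \ge w_0^2 \sum f(x_i)^2 \cdot$ is false; instead one uses $\E_{x\sim A}[f^2] \le (\max_i f(x_i))\E_{x\sim A}[f]$ is also not what is wanted. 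The correct inequality is: since the $k-k'$ masses are equal to $w_0$ and sum to at most $1$, $\E_{x\sim A}[f^2] = w_0\sum f(x_i)^2 \le w_0 (\sum f(x_i))^2 = \tfrac{1}{w_0}\,\E_{x\sim A}[f]^2$, giving $r := \E_{x\sim A}[f^2]/\E_{x\sim A}[f]^2 \le 1/w_0$. Now take $f(x) = x^{t}p(x)$ with $p$ as above, degree $t + 2k'$; we need $f \ge 0$, so take $t$ even. Using approximate moment matching at degree $\Theta(t+k')$ to pass to Gaussian expectations, the denominator is bounded above by H\"older: $\E_{\cN}[f] \le \E_{\cN}[x^{3t/2}]^{2/3}\E_{\cN}[p^3]^{1/3} \le (3t/2e)^{t/2}\|p\|_2\,2^{O(k')}$ via \Cref{fact:Gaussian_moments} and hypercontractivity; the numerator is bounded below by restricting the Gaussian integral to $x \in [0.9\sqrt{2t},1.1\sqrt{2t}]$, where $x^{2t} \ge (3.6t)^t$ and where $p(x) \ge 2^{-O(k')}\|p\|_2$ — the latter being \Cref{cor:gm}, a consequence of the continuous AM--GM inequality (\Cref{fact:AM-GM}) applied to $\ln p$ over that interval, using that $p$ has only $2k'$ roots. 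Assembling these, $r \ge (3.6t)^t/(3t/2e)^t \cdot 2^{-O(k')} = 2^{\Omega(t)}2^{-O(k')}$, while $r \le 1/w_0 = k\,2^{O(k')}$; hence $2^{\Omega(t) - O(k')} \le k\,2^{O(k')}$, forcing $t = O(\log k + k')$. Since $m = \Theta(t+k')$ was the available moment budget, we conclude $m = O(\log k + k')$.

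The main obstacle, and the step requiring the most care, is the numerator lower bound for $r$ — specifically controlling $p(x)$ from below on the window around $\sqrt{2t}$. One does not have pointwise control of $\prod_j(x-\mu_j)^2$ at a single point (some $\mu_j$ could sit exactly at $\sqrt{2t}$), which is why the argument must integrate: \Cref{cor:gm}/\Cref{fact:AM-GM} say that the \emph{average} of $\ln p$ over an interval of length $\Theta(\sqrt t)$ is not much smaller than $\ln\|p\|_2$, and one must then convert this average statement into a lower bound on $\int_{0.9\sqrt{2t}}^{1.1\sqrt{2t}} x^{2t}p(x)\phi(x)\,dx$. A related delicate point is bookkeeping the additive $2^{-Cm}\|g\|_2$ errors from approximate moment matching through products of polynomials: one must check that the polynomials $x^{3t/2}$, $p^3$, $x^{2t}p$, $p^2$ all have degree $O(m)$ (so the hypothesis applies) and that their $\|\cdot\|_2$ norms, after multiplication by the leading $(3.6t)^t$-type factors, dominate the $2^{-Cm}$ error for $C$ large enough — this is where the freedom to choose the absolute constant $C$ in the statement is used. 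Finally, one should handle the trivial regime $m < 4k'$ (or $m$ below any fixed multiple of $k'$) separately at the outset, and treat $k' = 0$ as the already-established uniform case, so that the reduction $w_0 \ge 2^{-O(k')}/k$ is only invoked when $t$ is comfortably larger than $k'$.
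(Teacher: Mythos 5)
Your proposal is correct and follows essentially the same route as the paper: lower-bounding the common weight via Cauchy--Schwarz with the vanishing polynomial $p(x)=\prod_j(x-\mu_j)^2$ to get $w_0 \ge 2^{-O(k')}/k$, and then bounding the ratio $r$ for $f(x)=x^t p(x)$ from above by $\mathrm{poly}(1/w_0)$ and from below by $2^{\Omega(t)-O(k')}$ using H\"older plus hypercontractivity for the denominator and the window restriction with the AM--GM/\Cref{cor:gm} bound for the numerator, exactly as in \Cref{prop:main} and \Cref{lem:fraction_bound}. The only blemishes are cosmetic: the constant $(3.6t)^t$ should be $\min_{x\in I}x^{2t}e^{-x^2/2}\approx(1.62t)^t e^{-0.81t}$ (the final ratio $2^{\Omega(t)}$ still holds), and the moment-matching error bookkeeping is flagged rather than carried out, as the paper does explicitly.
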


\Cref{sec:proof_of_thm} explains how \Cref{prop:main2} leads to a testing algorithm 
(the full proof appears in \Cref{sec:testing_full}). 
\Cref{sec:structural} provides the proof of \Cref{prop:main2}.%

    \subsection{Proof Sketch of \Cref{thm:main2}}\label{sec:proof_of_thm}

    This section contains a high-level overview of how \Cref{thm:main2} follows from the impossibility of moment matching result (\Cref{prop:main2}). The full version of this is deferred to \Cref{sec:testing_full}.
    Consider the parallel pancakes problem from \Cref{thm:main2}, which is equivalent to the NGCA problem (\Cref{prob:generic_hypothesis_testing}) with the 1-d GMM $B = \sum_{i \in [k]} w_i \cN(\mu_i, 1{-}\delta)$.  
If $B$ approximately matches $m$ moments of $\cN(0,1)$, we aim to show that $m \leq O(\log k + k')$, enabling a testing algorithm that detects significant deviations in moment tensors. Specifically, suppose every polynomial $p$ of degree $m' {\leq} m$ with $\|p\|_2 = 1$ satisfies  
$\left| \E_{x \sim B}[p(x)] -  \E_{x \sim \cN(0,1)}[p(x)] \right| \leq (\delta/2)^{C m}$  
for some large constant $C \gg 1$. Now, consider the discrete distribution $A$, which assigns weight $w_i$ to each center $\mu_i/\sqrt{\delta}$. By \Cref{lem:moment-relationship}, $A$ also approximately matches the moments of $\cN(0,1)$, but with an error of $2^{-O(m)}$ instead of $(\delta/2)^{O(m)}$. Then \Cref{prop:main2}  yields $m \leq O(\log k {+} k')$, as desired.

    We just showed that there is a polynomial $p$ of degree at most $m=O(\log(k) + k’)$, where the expectations under $B$ and $\cN(0,1)$ differ significantly: $\lambda := \left| \E_{x \sim B}[p(x)] -  \E_{x \sim \cN(0,1)}[p(x)] \right| > (\delta/2)^{C m}$.
    An averaging argument further implies that a gap holds even for some monomial $x^i$.
    Lifting this to the $d$-dimensional parallel pancakes, we have the moment tensor gap $\E_{x \sim P_{B,v}}[x^{\otimes i}] - \E_{x \sim \cN(0,I)}[x^{\otimes i}] = \pm \lambda v^{\otimes i}$.

    The Frobenius norm of the gap is $\lambda > (\delta/2)^{C m}$, implying that between the (expected) moment tensors, at least one entry differs by at least $\eps :=  (d/\delta)^{(C-1)m}$.
    We will test by searching for such an entry in the empirical tensor.%

\begin{algorithm}
\caption{Testing Algorithm (simplified)}
\label{alg:testing_simplified}
\begin{algorithmic}[1]
\State \textbf{Input}: $k,n$. \\
\textbf{Output}: $\hat H \in \{H_0, H_1 \}$.
\For{$i=1,2,3,\ldots,  C\cdot ( \log(k)+k')$}
    \State Draw $x_1,\ldots,x_n \sim D$.
    \State Define $M \gets \tfrac{1}{n} \sum_{i=1}^n x^{\otimes i}$.
    \State Define $M' := \E_{x \sim \cN(0,I)}[x^{\otimes i}]$.
    \If{$\exists a{=}(j_1,\ldots,j_i)$ such that $|M_{a}{-}M_{a}'| {>} d^{-C  m}\lambda_{ m}$} \label{line:checktensors_main_body}
        \State \Return $H_1$.
    \EndIf
\EndFor
\State \Return $H_0$.
\end{algorithmic}
\end{algorithm}

    The tester above is a simplified version. However, it is not fully correct, as we must ensure the concentration of the empirical tensor to bound the sample complexity.
    The Gaussian $\cN(0,I)$'s empirical tensor is well-concentrated.
    While the parallel pancake's tensor might not concentrate well, this happens only when there is a Gaussian component much farther than $\sqrt{d}$ from the origin --- otherwise every sample from the parallel pancake is within $O(\sqrt{d})$ in norm in high probability, and the empirical tensor is entrywise well-concentrated (e.g.~by Hoeffding).
    This is also a testable condition: with $\gg \log(k)/w_{\min}$ samples, we will be able to check if every component is centered at most $O(\sqrt{d})$ from the origin.
    The full version of the algorithm with this additional preliminary check, along with its correctness proof, are provided in \Cref{sec:testing_full}.

\subsection{Proof of \Cref{prop:main2}}\label{sec:structural}
We now show \Cref{prop:main2}. 
We will actually show a slightly different version below, where $k'$ of the points have arbitrary weights and the rest have weight at least $w_0$. 

\begin{proposition}\label{prop:main}
    Let $C$ be a sufficiently large constant, and let $k' < k$ be positive integers. Let $A$ be a discrete distribution on $k$ points in $\mathbb{R}$ with probability masses $w_1, \ldots, w_k$, where $w_i \geq w_0$ for $i = k'+1, \ldots, k$ (i.e., the last $k-k'$ weights are lower bounded by $w_0$, while the first $k'$ weights are unrestricted). 
    Let $m$ be the largest degree such that every polynomial $g$ of degree $m' \leq m$ satisfies
    \begin{align}\label{eq:approx_matching}
        \left| \E_{x \sim A}[g(x)] - \E_{x \sim \mathcal{N}(0,1)}[g(x)] \right| \leq w_0 \, 2^{-C \cdot m} \|g\|_2 \;.
    \end{align}
    Then $m \leq O(\log (1/w_0)) + O(k')$.
\end{proposition}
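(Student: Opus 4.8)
The plan is to establish \Cref{prop:main} by lower-bounding the ratio $r := \E_{x\sim A}[f^2(x)] / \E_{x\sim A}[f(x)]^2$ for a cleverly chosen non-negative test function $f$, and combining this with a trivial upper bound $r \le 1/w_0^2$. The trivial upper bound holds for any $f\ge 0$ that vanishes on the $k'$ unconstrained-weight points: indeed $\E_{x\sim A}[f(x)] = \sum_{i\ge k'+1} w_i f(\mu_i) \ge w_0 \sum_{i\ge k'+1} f(\mu_i)$, while $\E_{x\sim A}[f^2(x)] = \sum_{i\ge k'+1} w_i f^2(\mu_i) \le \sum_{i\ge k'+1} f^2(\mu_i) \le \left(\sum_{i\ge k'+1} f(\mu_i)\right)^2$, so $r \le 1/w_0^2$ (using $w_i\le 1$). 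Taking $f(x) = x^t p(x)$ where $p(x) = \prod_{j=1}^{k'}(x-\mu_j)^2$ is degree $2k'$ and non-negative, and where $t$ is an even integer of size $\Theta(m)$ chosen so that $f$ has degree at most $m$, the claim $r \le 1/w_0^2$ yields $2^{\Omega(t-k')} \le 1/w_0^2$, i.e. $t = O(\log(1/w_0) + k')$, which gives $m = O(\log(1/w_0)+k')$ since $m \le 2t + 2k'$ or so.

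The heart of the argument is therefore the lower bound $r \ge 2^{\Omega(t-k')}$. Since $f^2$ has degree $2t+4k' \le O(m)$ and $f$ has degree $t+2k'\le O(m)$, the approximate-moment-matching hypothesis \eqref{eq:approx_matching} (applied with the constant $C$ large enough to absorb the $\|g\|_2$ factors and the $2^{-Cm}$ slack, noting $w_0\le 1$) lets me replace $\E_{x\sim A}$ by $\E_{x\sim\cN(0,1)}$ in both numerator and denominator up to multiplicative $(1\pm 2^{-\Omega(m)})$ errors — I will need to be slightly careful here that the additive slack is dominated by the Gaussian expectations themselves, which it is since e.g. $\E_{\cN(0,1)}[f(x)]\ge \|p\|_2 \cdot (\text{something})\gg 2^{-Cm}\|f\|_2$ after the estimates below. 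So it suffices to lower bound $\E_{x\sim\cN(0,1)}[x^{2t}p^2(x)] / \E_{x\sim\cN(0,1)}[x^t p(x)]^2$.

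For the numerator I would restrict the Gaussian integral to the window $x\in[0.9\sqrt{2t},1.1\sqrt{2t}]$, on which $x^{2t}\ge (0.9)^{2t}(2t)^t = (1.62t)^t \gtrsim (3.6t)^t \cdot 2^{-O(t)}$ and the Gaussian density contributes a factor $2^{-O(t)}$ (the window has constant relative width and the density at $\sqrt{2t}$ is $e^{-t}/\sqrt{2\pi}$, itself $2^{-O(t)}$). For the factor $p^2(x)$ on this window, I would invoke \Cref{cor:gm} (the AM–GM-type statement, \Cref{fact:AM-GM} applied to $\ln p$) to conclude that the geometric mean of $p$ over the window, hence $p(x)$ for \emph{some} constant fraction of $x$ in the window, is at least $2^{-O(k')}\|p\|_2$; more carefully one shows $p(x)\ge 2^{-O(k')}\|p\|_2$ on a sub-window of comparable measure — this uses that $p$ is a degree-$2k'$ polynomial so $\ln p$ cannot dip too far below its Gaussian-average on a macroscopic set, together with $\E_{\cN(0,1)}[p]\asymp \|p\|_2$ up to $2^{O(k')}$ by hypercontractivity. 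This gives numerator $\ge (3.6t)^t \|p\|_2^2 \, 2^{-O(t+k')}$. For the denominator, H\"older's inequality gives $\E_{\cN(0,1)}[x^t p(x)] \le \E_{\cN(0,1)}[x^{3t/2}]^{2/3}\E_{\cN(0,1)}[p^3(x)]^{1/3}$; by \Cref{fact:Gaussian_moments} the first factor is $\le ((3t/2)/e)^{t/2}$ raised to $2/3$, i.e. $(3t/(2e))^{t/2}$, and by Gaussian hypercontractivity (\Cref{fact:hypercontractivity}) $\E_{\cN(0,1)}[p^3]^{1/3} = \|p\|_3 \le 2^{O(k')}\|p\|_2$. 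So denominator$^{\,}\le (3t/(2e))^{t}\|p\|_2^2 2^{O(k')}$, and the ratio is at least $(3.6t)^t/(3t/(2e))^t \cdot 2^{-O(t+k')} = (3.6\cdot 2e/3)^t 2^{-O(t+k')} \ge 2^{\Omega(t)}\cdot 2^{-O(k')}$ since $3.6\cdot 2e/3 \approx 6.5 > 2$ leaves room for the $2^{-O(t)}$ losses. This is the $r\ge 2^{\Omega(t-k')}$ we wanted.

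The main obstacle I anticipate is the lower bound on $p$ within the window, i.e. turning the geometric-mean statement into a pointwise-on-a-large-set statement that survives multiplication with the Gaussian density and $x^{2t}$; one must ensure the "bad" set where $p$ is small is a genuinely small fraction of the window, and that the constant $0.9/1.1$ choice of window is compatible both with the $x^{2t}$ gain beating the denominator and with $|x|=\Theta(\sqrt t)$ being in the regime where the AM–GM bound is effective. A secondary technical point is bookkeeping the constant $C$: it must be chosen \emph{after} fixing the $O(\cdot)$ constants in the numerator/denominator estimates so that $w_0 2^{-Cm}\|g\|_2$ is negligible against the true Gaussian expectations of $f$ and $f^2$ (which are themselves only $2^{-O(m)}$-scale relative to $\|f\|_2$, so this needs the exponent in the slack to dominate), and so that the final inequality $2^{\Omega(t)}2^{-O(k')}\le 1/w_0^2$ cleanly rearranges. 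Once \Cref{prop:main} is in hand, \Cref{prop:main2} follows by the separate claim (sketched in the overview) that the equal weights must total at least $2^{-O(k')}$, forcing $w_0\ge 2^{-O(k')}/k$ and hence $\log(1/w_0) = O(\log k + k')$.
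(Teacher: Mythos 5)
Your proposal follows essentially the paper's route: the same test function $f(x)=x^tp(x)$ with $p(x)=\prod_{j\le k'}(x-\mu_j)^2$, the same ratio $r$, the same denominator estimate (H\"older, \Cref{fact:Gaussian_moments}, \Cref{fact:hypercontractivity}), and the same numerator estimate (restrict to a window around $\sqrt{2t}$, then AM--GM plus \Cref{cor:gm}). Your packaging of the upper bound $r\le 1/w_0^2$ directly on the $A$-expectations (using nonnegativity of $f$ and that $f$ vanishes on the unconstrained points), followed by a multiplicative transfer to Gaussian expectations via \cref{eq:approx_matching} and \Cref{cl:norms-relation}, is an equivalent and arguably cleaner arrangement of what the paper does by carrying the $\lambda$-error terms through the Gaussian-expectation ratio; your check that the additive slack is dominated ($\|f\|_1\ge 3^{-m}\|f\|_2$, $\|f^2\|_2\le 3^{2m}\|f\|_2^2$) is the right one. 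Also, your anticipated obstacle about upgrading the geometric-mean bound to a pointwise bound on a large sub-window is unnecessary: since AM $\ge$ GM (\Cref{fact:AM-GM} applied to $p^2$) gives $\tfrac1{|I|}\int_I p^2 \ge \exp\bigl(\tfrac{2}{|I|}\int_I\ln|p|\bigr)\ge \|p\|_2^2\,2^{-O(k')}$ directly from \Cref{cor:gm}, no statement about the measure of the set where $p$ is large is needed.

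The one step that does not survive as written is the constant bookkeeping in the window. The final gain is razor-thin (the paper gets only $(1.3)^t$), so generic ``$2^{-O(t)}$ losses'' cannot be waved through by ``$3.6\cdot 2e/3\approx 6.5>2$.'' Concretely: if you lower bound $x^{2t}\ge(1.62t)^t$ and the density separately by its worst value on $I=[0.9\sqrt{2t},1.1\sqrt{2t}]$ (namely $e^{-1.21t}$ at the right endpoint; your quoted value $e^{-t}$ at the center is not a valid uniform bound over $I$), the per-$t$ factor in the ratio becomes $1.62\cdot\tfrac{2e}{3}\cdot e^{-1.21}\approx 0.87<1$, and the argument collapses. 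The fix is exactly what the paper does in \cref{eq:tmp132}: bound the product $x^{2t}e^{-x^2/2}$ jointly over $I$ (it is increasing up to $\sqrt{2t}$, so the minimum is at an endpoint and equals $(1.62t)^te^{-0.81t}$ up to negligible difference), which yields the ratio lower bound $\bigl(1.62\cdot\tfrac{2e}{3}\cdot e^{-0.81}\bigr)^t 2^{-O(k')}\approx(1.3)^t 2^{-O(k')}$; alternatively a narrower window works. With that repair (and the minor correction that the degree constraint forces $m=2t+4k'$ rather than $2t+2k'$, which is harmless), your argument matches the paper's proof of \Cref{prop:main}, modulo the fact that you invoke \Cref{cor:gm} as a black box while the paper proves it (via \Cref{lem:gm}) as the technical core of this very proposition.
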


\Cref{prop:main2} can be derived with the following argument (which is sketched in this paragraph and formally shown below): in the setting of \Cref{prop:main}, let $p(x) = \prod_{i=1}^{k'}(x-\mu_i)^2$, where $\mu_1,\ldots,\mu_{k'}$ are the points in the support of $A$ with the unconstrained weights. Then, the weights of the $k-k'$ points with uniform weights is always $\sum_{i = k'+1}^k w_i \geq  \frac{\E_{x \sim A}[p(x)]^2}{\E_{x \sim A}[p^2(x)]}  \gtrsim \frac{\|p\|_1^2}{\|p\|_2^2} \geq 3^{-2 k'}$, where the first step uses Cauchy-Schwarz inequality, the second uses the (approximate) moment matching, and the third is a consequence of Gaussian hypercontractivity (\Cref{cl:norms-relation}). This means that every such weight is $w_i \geq 3^{-2 k'}/k$, which when plugged into \Cref{prop:main} gives \Cref{prop:main2}. This proof sketch is formalized below.

\begin{proof}[Proof of \Cref{prop:main2} using \Cref{prop:main}]
    Suppose that the order $m$ is bigger than $C \log   k + C k'$. If $C$ is sufficiently large, we will show that this moment matching is impossible.

    Let $\mu_1,\ldots,\mu_k$ be the points on which $A$ is supported, and by $w_1,\ldots,w_k$ the probability masses of the points. Without loss of generality, assume that the first $k'$ points are the ones which do not have any restriction on their probability mass, and the remaining $k-k'$ are the points with equal probability masses ($w_i=w_j$ for all $i,j \in \{ k'+1,\ldots, k \}$  with $i\neq j$). 
    Let $p(x) = (x-\mu_1)^2\cdots(x - \mu_{k'})^2$ be the polynomial whose roots are the first $k'$ points.

    We will show the following series of inequalities (we use the notation $\|p\|_r = \E_{x\sim \cN(0,1)}[|p(x)|^r]^{1/r}$):
    \begin{align}\label{eq:firstclaim}
        \sum_{i = k'+1}^k w_i \geq  \left(\frac{\E_{x \sim A}[p(x)]}{ \E_{x \sim A}[p^2(x)]^{1/2}}  \right)^2 \gtrsim  \left(\frac{\|p\|_1}{\|p\|_2} \right)^2 \geq 3^{-4 k'} \;.
    \end{align}
    The third step is \Cref{cl:norms-relation}.  To see how the first step is derived, let $\cE$ be the event that $i \in \{k' + 1,\ldots,k \}$. Then
    \begin{align*}
        \|p\|_1 &= \E_{\mu_i \sim A}[p(\mu_i)] = \E_{\mu_i \sim A}[p(\mu_i) \mathbbm{1}(\cE)] 
        \leq \sqrt{\E_{\mu_i \sim A}[p^2(\mu_i)]}\sqrt{\Pr[\cE]}  
        =  \|p\|_2 \sqrt{\Pr[\cE]} =  \|p\|_2 \sqrt{\sum_{i = k' + 1}^k w_i} \;,
    \end{align*}
    where the first step above uses the fact that $\mu_1,\ldots,\mu_{k'}$ are roots of $p$.
    Rearranging gives $\sum_{i = k'}^k w_i \geq (\|p\|_1/\|p\|_2)^2$.

    It remains to show the second step in \Cref{eq:firstclaim}, which is due to the approximate moment matching property: Let $\lambda_m := 2^{-C m}$ to save space. 
    For the numerator, we have $\E_{x \sim A}[p(x)] \geq \|p\|_1 - \lambda_m \|p\|_2 \geq  \|p\|_1(1- \lambda_m 3^{2k'})\geq \|p\|_1/2$, where the first step uses the approximate moment matching, the second step uses \Cref{cl:norms-relation} and the last part uses that $\lambda_m := w_0 2^{-C m}$ with $C$ being sufficiently large constant and $m > k'$. We can work similarly for the denominator to get $\E_{x \sim A}[p^2(x)] \leq \|p\|_2^2 + \lambda_m \|p\|_4^2 \leq  \|p\|_2^2(1 + \lambda_m 3^{2k'})\leq 2\|p\|_2^2$, where we used \Cref{fact:hypercontractivity} in the penultimate step. Combining the bounds for numerator and denominator conclude the proof of the second step in \Cref{eq:firstclaim}.

    We can now conclude the proof of \Cref{thm:main2}. Since we have assumed that the weights for the last $k-k'$ points are equal to each other, \Cref{eq:firstclaim} implies that $\min_{i =k'+1,\ldots,k} w_i \geq 3^{-4 k'}/k$. 
    Using \Cref{prop:main} with $w_0 = 3^{-4 k'}/k$ concludes the proof.
\end{proof}

The remainder of this section now is dedicated on showing \Cref{prop:main}. We will follow a top-down presentation, starting with the proof strategy and concluding with a derivation of the necessary bounds.%

    We will use a reparameterization $m = 2t + 4k'$ with $t$ even.
     The goal is to show that if $A$ is assumed to approximately match the first $m = 2t + 4k'$ moments with $\cN(0,1)$ (in the sense of \Cref{eq:approx_matching}), then $t$ must be at most $O(\log   (1/w_0)  +   k')$.
     Let $\mu_1,\ldots,\mu_k$ be the points on which $A$ is supported, where the first $k'$ points are the ones with the unrestricted weights, and consider the polynomial $f(x) = x^t p(x)$, where $p(x) = (x - \mu_1)^2(x-\mu_2)^2  \cdots (x - \mu_{k'})^2$.
    The proof strategy is the following: if the expectation of $f$ under $A$ approximately matches that of $\cN(0,1)$, then the value of $f$ on every point $\mu_i$ cannot be too large, which will cause the expectations of $f^2$ to deviate.%

    Because of \Cref{eq:approx_matching} with $g(x) = f(x)$, we have:
    \begin{align}
        &\sum_{i=k' + 1}^{k} w_i \, \mu_i^t \, p(\mu_i) = \E_{x \sim A} \left[ x^t p(x)  \right] 
        \leq \E_{x \sim \cN(0,1)} \left[ x^t p(x) \right] + w_0 \frac{ \|x^t p(x)\|_2}{2^{C(2t+ 4k')}} \;.
    \end{align}
    This, together with the lower bound $w_i \geq w_0$ for the points $i = k' + 1,\ldots, k$ and the fact that $t$ is even, implies that for all $i = k' + 1,\ldots, k$ it holds
    \begin{align}\label{eq:magnitute_bound_terms}
        \mu_i^t \, p(\mu_i) \leq \frac{1}{w_0} \E_{x \sim \cN(0,1)} \left[ x^t p(x)  \right] + \frac{\|x^t p(x)\|_2}{2^{C(2t+ 4k')}}  \;.
    \end{align}

    We now examine the expectations of the square of $f(x)$. Because of \Cref{eq:approx_matching} with $g(x) = f^2(x)$, we have the following:
    \begin{align*}
        \E_{x \sim \cN(0,1)} \left[ x^{2t} p^2(x) \right] 
        &\leq \E_{x \sim A} \left[ x^{2t} p^2(x) \right] + \frac{\|x^{2t}p^2(x)\|_2}{2^{C(2t + 4k')}}  \\
        &= \sum_{i=k' + 1}^k \hspace{-6pt}w_i \,  \left(\mu_i^{t}\, p(\mu_i) \right)^2  + \frac{\|x^{2t}p^2(x)\|_2}{2^{C(2t + 4k')}} \;.
    \end{align*}
    Next, we can combine this with \Cref{eq:magnitute_bound_terms}, divide both sides by $\E_{x \sim \cN(0,1)} \left[ x^t p(x)  \right]^2$ (and use  $ \sum_{i=k' + 1}^k w_i \leq 1$) to obtain the following, where $\lambda := 2^{-C(2t+4k')}$:
    \begin{align*} 
    &\frac{\E_{x \sim \cN(0,1)} \left[ x^{2t} p^2(x) \right] }{\E_{x \sim \cN(0,1)} \left[ x^t p(x)  \right]^2}
         \leq \left( \frac{1}{w_0} \right)^2   
         +  \lambda^2  \frac{\E_{x \sim \cN(0,1)} \left[ x^{2t} p^2(x) \right] }{\E_{x \sim \cN(0,1)} \left[ x^t p(x)  \right]^2} +   \lambda \frac{\E_{x \sim \cN(0,1)} \left[ x^{4t} p^4(x) \right]^{\frac{1}{2}} }{\E_{x \sim \cN(0,1)} \left[ x^t p(x)  \right]^2}.
    \end{align*}
   
    Let us simplify this inequality. Let $r$ be the ratio on the LHS. The second term on the RHS is $\lambda^2 \cdot r$. The third term is at most $3^{t + 2k’} \lambda r$, by applying Gaussian hypercontractivity (\Cref{fact:hypercontractivity}) to the polynomial $x^t p(x)$. Thus, the inequality becomes $r(1 - \lambda^2 - \lambda 3^{t + 2k’}) \lesssim 1/w_0^2$. Since $\lambda = 2^{-C(2t + 4k’)}$ for large constant $C$, the expression inside the parentheses is greater than 0.5. Therefore, the inequality implies that $r \lesssim 1/w_0^2$.

The next step is to establish a lower bound for $r$, specifically to show that $r \geq 2^{\Omega(t)}/2^{O(k’)}$. If this can be done, combining the two bounds $2^{\Omega(t)}/2^{O(k’)} \leq 1/w_0^2$ and taking logarithms yields $t = O(\log(1/w_0)) + O(k’)$, completing the proof of \Cref{prop:main}.

    \subsubsection{Lower Bounding the ratio $r$} \label{sec:lowerboundratio}
    We want to establish the following, which was the missing piece in the proof of \Cref{prop:main} above.

    \begin{lemma}\label{lem:fraction_bound}
        Let $p : \R \to \R_0^+$ be a polynomial of the form $p(x) = (x - \mu_1)^2(x - \mu_2)^2\cdots(x - \mu_{k'})^2$. Then
        \begin{align*}
             \frac{\E_{x \sim \cN(0,1)} \left[ x^{2t} p^2(x) \right] }{\E_{x \sim \cN(0,1)} \left[ x^t p(x)  \right]^2} \gtrsim \frac{2^{\Omega(t)}}{2^{O(k')}}\;.
        \end{align*}
    \end{lemma}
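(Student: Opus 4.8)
The plan is to lower bound the numerator by restricting the Gaussian expectation to an interval around $x = \sqrt{2t}$ where both $x^{2t}$ and $p^2(x)$ are simultaneously large, and to upper bound the denominator using Hölder's inequality together with hypercontractivity. For the numerator, I would work on the interval $J = [0.9\sqrt{2t}, 1.1\sqrt{2t}]$. On this interval $x^{2t} \ge (0.9\sqrt{2t})^{2t} = (1.62\,t)^{t} 2^{\Omega(t)}$ — more precisely I'd want the clean bound $x^{2t} \gtrsim (3.6\,t)^t$ stated in the sketch, so I should take the interval slightly wider, say around $\sqrt{2t}$ with constants chosen so that $(c\sqrt{2t})^{2t} \ge (3.6 t)^t$; the exact constant is routine. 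The Gaussian density on $J$ is $\ge 2^{-O(t)}$ times $|J| = \Theta(\sqrt t)$, contributing another $2^{-O(t)}$ factor (harmless). The key input is that $p(x) \ge 2^{-O(k')}\|p\|_2$ for $x$ in (a subinterval of) $J$, which is exactly \Cref{cor:gm} referenced in the overview; combined, this gives $\E_{x\sim\cN(0,1)}[x^{2t}p^2(x)] \gtrsim (3.6t)^t \|p\|_2^2\, 2^{-O(t+k')}$, and since I only need a $2^{\Omega(t)}/2^{O(k')}$ lower bound overall, the $2^{-O(t)}$ slack in the numerator is affordable as long as it is recovered from the denominator.

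For the denominator, apply \Cref{fact:one-third-two-thirds}-style Hölder splitting: $\E_{x\sim\cN(0,1)}[x^t p(x)] \le \|x^{3t/2}\|_{2/3}\cdot\|p\|_3 = \E[x^{3t/2}]^{2/3}\|p\|_3$ after writing $x^t p(x) = x^t \cdot p(x)$ and using exponents $3/2$ and $3$. Wait — more carefully, Hölder with exponents $3/2$ and $3$ gives $\E[x^t p] \le \E[(x^t)^{3/2}]^{2/3}\E[p^3]^{1/3} = \E[x^{3t/2}]^{2/3}\|p\|_3$. By \Cref{fact:Gaussian_moments}, $\E[x^{3t/2}] \lesssim (3t/(2e))^{3t/4}$ (since $t$ is even, $3t/2$ is an integer), so $\E[x^{3t/2}]^{2/3} \lesssim (3t/(2e))^{t/2}$. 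By hypercontractivity (\Cref{fact:hypercontractivity}) applied to $p$, which has degree $2k'$, we get $\|p\|_3 \le 2^{k'}\|p\|_2$, hence $\|p\|_3 \le 2^{O(k')}\|p\|_2$. Squaring, the denominator is at most $(3t/(2e))^t \|p\|_2^2\, 2^{O(k')}$.

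Dividing, $r \gtrsim \frac{(3.6t)^t \|p\|_2^2\, 2^{-O(t+k')}}{(3t/(2e))^t \|p\|_2^2\, 2^{O(k')}} = \big(\frac{3.6t}{3t/(2e)}\big)^t 2^{-O(t+k')} = (2.4 e)^t\, 2^{-O(t)}\, 2^{-O(k')}$. Since $2.4e \approx 6.5 > 2$, even after absorbing the $2^{-O(t)}$ slack this is $2^{\Omega(t)}/2^{O(k')}$, as claimed. The main obstacle I anticipate is getting the pointwise lower bound $p(x)\gtrsim 2^{-O(k')}\|p\|_2$ to hold on a fixed-proportion subinterval of $J$ rather than just at a single point: \Cref{cor:gm} presumably delivers this via the continuous AM-GM inequality (\Cref{fact:AM-GM}) applied to $\ln p$, but one must check that the interval on which the geometric-mean bound holds can be taken to overlap $[0.9\sqrt{2t},1.1\sqrt{2t}]$ in a set of length $\Omega(\sqrt t)$ and that the $\mu_i$ don't conspire to make $p$ tiny across all of it — the AM-GM bound handles exactly this since it controls the average of $\ln p$, not its minimum. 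The bookkeeping of constants (ensuring the numerator interval is where $x^{2t}$ beats $(3t/(2e))^t$ by a genuine $2^{\Omega(t)}$ margin) is the other place to be careful, but it is routine.
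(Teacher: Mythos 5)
Your plan follows the paper's route in all its main ingredients (restrict the numerator to an interval near $\sqrt{2t}$, control the average of $p^2$ there via the continuous AM-GM inequality \Cref{fact:AM-GM} plus \Cref{cor:gm}, and bound the denominator by H\"older with exponents $3/2,3$, the Gaussian moment bound, and hypercontractivity --- your denominator computation is exactly \Cref{eq:bound_denominator} and is correct). The gap is in the numerator, and it is not mere bookkeeping: in this lemma every competing quantity is of the form $c^t$, so you cannot treat the Gaussian density on the interval as an unquantified ``$2^{-O(t)}$ slack (harmless).'' Concretely, on $I=[0.9\sqrt{2t},1.1\sqrt{2t}]$ the separate minima give $x^{2t}\ge(1.62t)^t$ and $e^{-x^2/2}\ge e^{-1.21t}$, and dividing by the squared denominator bound $(3t/(2e))^t\,2^{O(k')}$ yields $(1.62/1.5)^t e^{(1-1.21)t}\approx(0.88)^t$, which tends to zero --- so the ``product of separate minima'' version of your numerator bound fails outright. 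Your proposed repair, shifting/widening the interval so that $x^{2t}\ge(3.6t)^t$ pointwise, forces the interval to start at $\sqrt{1.8}\,\sqrt{2t}$, where the density is already $e^{-1.8t}\approx 6.05^{-t}$; the net margin against $(2.4e)^t\approx 6.5^t$ is then about $(1.08)^t$ at best and is lost entirely for natural choices of the right endpoint (e.g.\ cutting at $1.5\sqrt{2t}$ gives $2.4^t e^{-1.25t}<1$). So the claim that the slack is absorbed ``since $2.4e>2$'' is a non sequitur, and moreover \Cref{cor:gm}/\Cref{lem:gm} are proved specifically for $I=[0.9\sqrt{2t},1.1\sqrt{2t}]$ (the case analysis and the bound $|y-a|\le 2.6\sqrt{t}$ use those endpoints), so a shifted interval would require re-deriving them. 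The step you label ``routine'' is exactly the missing step.

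The paper's fix is simple: stay on $I=[0.9\sqrt{2t},1.1\sqrt{2t}]$ and lower bound the \emph{product} $x^{2t}e^{-x^2/2}$ jointly; since this function is unimodal with maximum at $\sqrt{2t}$, its minimum over $I$ is attained at an endpoint, giving $\min_{x\in I}x^{2t}e^{-x^2/2}\ge(1.62t)^t e^{-0.81t}$ (see \Cref{eq:tmp132}), and against $(3t/(2e))^t$ this leaves the genuine margin $(1.62/1.5)^t e^{0.19t}\approx(1.3)^t$. Everything else in your proposal --- the AM-GM step controlling the average of $\ln|p|$ rather than its minimum, the use of \Cref{cor:gm} for the $\|p\|_2\,2^{-O(k')}$ lower bound, the parity remark for $\E[x^{3t/2}]$, and $\|p\|_3\le 2^{k'}\|p\|_2$ for the degree-$2k'$ polynomial via \Cref{fact:hypercontractivity} --- matches the paper's argument.
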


    The most difficult part involves lower bounding the numerator. To this end, we will show the following bound:

    \begin{restatable}{lemma}{LEMGM}\label{lem:gm}
    Let $p : \R \to \R$ be a polynomial of the form $p(x) = (x-\mu_1) (x-\mu_2)\cdots (x-\mu_{k'})$ where $\mu_1,\ldots, \mu_{k'}\in  \R$, and define $I := [0.9\sqrt{2t}, 1.1\sqrt{2t}]$. For every  $t > 0$ and for every $\mu_1,\ldots,\mu_{k'} \in \R$, the following holds:
    \begin{align}\label{eq:desired_original}
        \exp\left( \frac{1}{|I|} \int_{x \in I} \ln |p(x)| \mathrm{d} x  \right) \geq \max_{y \in \R: |y| \leq \sqrt{t}} \frac{|p(y)|}{2^{O(k')}}  \;.
    \end{align}
\end{restatable}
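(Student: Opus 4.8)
\textbf{Proof plan for \Cref{lem:gm}.}

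The plan is to reduce \Cref{eq:desired_original} to a statement about a single linear factor $(x-\mu)$ and then multiply the resulting bounds over the $k'$ roots. Concretely, since $\ln|p(x)| = \sum_{j=1}^{k'} \ln|x-\mu_j|$, the left-hand side of \Cref{eq:desired_original} factors as $\prod_{j=1}^{k'} \exp\bigl( \tfrac{1}{|I|}\int_I \ln|x-\mu_j|\,\d x\bigr)$, while the right-hand side (before the $2^{O(k')}$ slack) is $\max_{|y|\le\sqrt t}\prod_j |y-\mu_j| \le \prod_j \max_{|y|\le\sqrt t}|y-\mu_j|$. So it suffices to prove a per-factor inequality: for every $\mu \in \R$,
\begin{align*}
    \exp\Paren{ \frac{1}{|I|}\int_I \ln|x-\mu|\,\d x } \;\geq\; \frac{1}{c}\max_{|y|\le\sqrt t}|y-\mu|
\end{align*}
for an absolute constant $c$; taking the product over the $k'$ factors then yields the $c^{k'}=2^{O(k')}$ loss. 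Here $I = [0.9\sqrt{2t},\,1.1\sqrt{2t}]$ has length $|I| = 0.2\sqrt{2t} = \Theta(\sqrt t)$, and for $|y|\le \sqrt t < 0.9\sqrt{2t}$ the point $y$ lies to the left of $I$, so $\max_{|y|\le\sqrt t}|y-\mu|$ is comparable to $\mathrm{dist}(\mu, [-\sqrt t,\sqrt t])$ plus at most $O(\sqrt t)$ — i.e. it is $\Theta(\sqrt t + \mathrm{dist}(\mu,I))$ up to constants when $\mu$ is far, and $O(\sqrt t)$ when $\mu$ is near.

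The one-dimensional estimate is a direct computation of $\int_I \ln|x-\mu|\,\d x$. Writing $I=[a,b]$ with $a=0.9\sqrt{2t}$, $b=1.1\sqrt{2t}$, the antiderivative of $\ln|x-\mu|$ is $(x-\mu)\ln|x-\mu| - (x-\mu)$, so the average $\tfrac1{|I|}\int_I \ln|x-\mu|\,\d x$ is explicit and one checks it is at least $\ln\bigl(c^{-1}\max(|a-\mu|,|b-\mu|)\bigr)$ for a universal $c$. The mechanism is: when $\mu$ is far from $I$ (distance $\gg |I|$), $\ln|x-\mu|$ is nearly constant on $I$ and equals $\ln(\mathrm{dist}\pm O(|I|))$, so the geometric mean is within a constant factor of the max; when $\mu$ is inside or near $I$, the only danger is the logarithmic singularity at $x=\mu$, but $\int_I \ln|x-\mu|\,\d x \ge \int_{|x-\mu|\le |I|} \ln|x-\mu|\,\d x \ge 2|I|(\ln|I| - 1)$, which divided by $|I|$ gives $\ge \ln|I| - 1 - \ln 2 = \ln(c^{-1}|I|)$, and $|I| = \Theta(\sqrt t)$ dominates $\max_{|y|\le\sqrt t}|y-\mu|$ up to a constant in this regime (since $\mu$ near $I$ forces $|\mu| = \Theta(\sqrt t)$ and hence $|y-\mu| = O(\sqrt t)$). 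This is exactly where \Cref{fact:AM-GM} is invoked: it converts the $\exp$-of-average lower bound on the left into the geometric mean, and the content is the elementary integral bound just described.

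The main obstacle is handling the three regimes of $\mu$ uniformly and in particular the near/inside case where the log singularity could a priori drive the integral to $-\infty$; the key realization is that the singularity is integrable and contributes only an $O(|I|)$ additive term to $\int_I \ln|x-\mu|\,\d x$, which after dividing by $|I| = \Theta(\sqrt t)$ costs only an additive $O(1)$ in the exponent, i.e. a constant multiplicative factor per root. A secondary technical point is confirming that $\max_{|y|\le\sqrt t}|y-\mu|$ is controlled by $O(\sqrt t + \mathrm{dist}(\mu,I))$ so that it matches the per-factor lower bound $c^{-1}\max(|a-\mu|,|b-\mu|)$ up to a constant; this follows since $[-\sqrt t,\sqrt t]$ and $I$ are both contained in an interval of length $O(\sqrt t)$. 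Once the per-factor inequality is in hand, multiplying over $j=1,\dots,k'$ and pulling the constant $c^{k'}$ out as $2^{O(k')}$ completes the proof.
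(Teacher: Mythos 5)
Your plan is essentially the paper's own proof: write $\ln|p(x)|=\sum_j\ln|x-\mu_j|$, prove a per-root inequality comparing the average of $\ln|x-\mu|$ over $I$ with $\ln|y-\mu|$ by splitting into the cases ``$\mu$ far from $I$'' (where $|x-\mu|$ and $|y-\mu|$ are comparable up to constants) and ``$\mu$ in or near $I$'' (where the explicit antiderivative of $\ln|x-\mu|$ is used; the paper's Case 2 minimizes over the root's position and finds the worst case at the midpoint $a=\sqrt{2t}$, giving an average of about $\ln(|I|/2)-1=\ln\sqrt{t}-O(1)$, which dominates $\ln|y-\mu|-O(1)$ since $|y-\mu|=O(\sqrt t)$), and then multiply over the $k'$ roots to collect the $2^{O(k')}$ loss. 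One correction to the interior case: the displayed step $\int_I \ln|x-\mu|\,\mathrm{d}x \ge \int_{|x-\mu|\le |I|}\ln|x-\mu|\,\mathrm{d}x$ is false in general (e.g.\ for $\mu$ at an endpoint of $I$ with $|I|>e$ the left side is $|I|(\ln|I|-1)$ while the right side is $2|I|(\ln|I|-1)$), and the subsequent ``divide by $|I|$'' arithmetic does not follow from it; the correct route is the exact two-sided computation you also describe, $\int_I\ln|x-\mu|\,\mathrm{d}x = s(\ln s-1)+u(\ln u-1)$ with $s+u=|I|$, minimized at $s=u=|I|/2$, which yields precisely the bound $\tfrac1{|I|}\int_I\ln|x-\mu|\,\mathrm{d}x\ge\ln|I|-1-\ln 2$ you state and matches the paper's Case 2.
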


We will actually apply \Cref{lem:gm} after taking expectations of both sides. This version is presented below, and its proof follows by taking expectations and performing some manipulations (see \Cref{sec:appendix_lowerboundratio} for a detailed proof).

\begin{restatable}{corollary}{AMGMCOR}\label{cor:gm}
    Let $p : \R \to \R$ be a polynomial of the form $p(x) = (x-\mu_1) (x-\mu_2)\cdots (x-\mu_{k'})$ where $\mu_1,\ldots, \mu_{k'}\in  \R$ are arbitrary parameters. Define $I = [0.9\sqrt{2t}, 1.1\sqrt{2t}]$. For all $t \geq 1$ we have 
    \begin{align*}
        \exp\left( \tfrac{1}{|I|} \int_{x \in I} \ln |p(x)| \mathrm{d} x  \right) \geq \frac{\|p\|_2 }{2^{O(k')}}.
    \end{align*}
\end{restatable}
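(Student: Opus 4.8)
The plan is to combine \Cref{lem:gm} with the Gaussian expectation that defines $\|p\|_2$, the point being that $\|p\|_2^2$ is dominated (up to a $2^{O(k')}$ factor, provided $t$ is not too small) by the contribution of $p(y)$ for $y$ inside the band $[-\sqrt t,\sqrt t]$, and on that band \Cref{lem:gm} already controls $|p(y)|$ by the geometric mean of $|p|$ over $I$.

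First, observe that \Cref{lem:gm} immediately gives the pointwise bound: for every $y$ with $|y|\le\sqrt t$,
\[
|p(y)|\ \le\ \max_{|y'|\le\sqrt t}|p(y')|\ \le\ 2^{O(k')}\,\exp\!\Big(\tfrac1{|I|}\int_{x\in I}\ln|p(x)|\,\d x\Big).
\]
The right-hand side does not depend on $y$, so integrating this against the standard Gaussian density restricted to $\{|y|\le\sqrt t\}$ (and using $\Pr_{y\sim\cN(0,1)}[|y|\le\sqrt t]\le 1$) yields
\[
\exp\!\Big(\tfrac1{|I|}\int_{x\in I}\ln|p(x)|\,\d x\Big)\ \ge\ 2^{-O(k')}\,\E_{y\sim\cN(0,1)}\!\big[|p(y)|\,\mathbf 1(|y|\le\sqrt t)\big].
\]
It then remains to show $\E_{y\sim\cN(0,1)}[|p(y)|\,\mathbf 1(|y|\le\sqrt t)]\ge 2^{-O(k')}\|p\|_2$. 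I would write this truncated moment as $\|p\|_1-\E_{y\sim\cN(0,1)}[|p(y)|\,\mathbf 1(|y|>\sqrt t)]$: the first term is at least $3^{-k'}\|p\|_2$ by \Cref{cl:norms-relation}, and the tail term is at most $\|p\|_2\,\Pr_{y\sim\cN(0,1)}[|y|>\sqrt t]^{1/2}\le\sqrt 2\,\|p\|_2\,e^{-t/4}$ by Cauchy--Schwarz and the standard Gaussian tail bound (one can also use H\"older together with hypercontractivity, \Cref{fact:hypercontractivity}, for a sharper exponent). Hence, once $t$ exceeds a suitable absolute-constant multiple of $k'$, the tail term is at most $\tfrac12\,3^{-k'}\|p\|_2$ and the truncated moment is at least $\tfrac12\,3^{-k'}\|p\|_2=2^{-O(k')}\|p\|_2$; plugging into the displayed inequality completes the proof. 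In the complementary regime $t=O(k')$ the conclusion of the surrounding \Cref{prop:main} already holds with room to spare, so this restriction costs nothing.

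The delicate point --- and the only place where a $2^{O(k')}$ factor is genuinely unavoidable --- is the comparison between $\|p\|_1$ (equivalently, the truncated first moment) and $\|p\|_2$. This is where hypercontractivity enters, through \Cref{cl:norms-relation}, and it is also why the band must be taken as $[-\sqrt t,\sqrt t]$ with $t$ not too small: for $p(x)=x^{k'}$ one has $\|p\|_2=((2k'-1)!!)^{1/2}$, which exceeds the mass of $|p|$ on $[-\sqrt t,\sqrt t]$ by a factor of order $(k'/t)^{\Theta(k')}$, so the truncation step cannot be made unconditional --- it inherently needs $t\gtrsim k'$. Everything else in the argument (the pointwise reduction from \Cref{lem:gm}, the averaging, and the Gaussian tail estimate) is routine.
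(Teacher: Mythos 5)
Your argument is essentially the paper's own proof: the paper likewise integrates the conclusion of \Cref{lem:gm} against the Gaussian density on $[-\sqrt t,\sqrt t]$, bounds the tail $\E_{y\sim\cN(0,1)}[|p(y)|\,\mathbf 1(|y|>\sqrt t)]$ by Cauchy--Schwarz plus a Gaussian tail estimate, and invokes \Cref{cl:norms-relation} to pass from $\|p\|_1$ to $\|p\|_2$; your pointwise-max formulation of the first step is the same computation. The one place you diverge is that you impose the restriction $t\geq Ck'$ explicitly, and you are right to do so: your counterexample $p(x)=x^{k'}$ at $t=1$ shows that the statement as printed (``for all $t\geq 1$'', with an absolute constant in the $2^{O(k')}$) cannot hold, and the paper's own derivation hides the same requirement --- the tail term is really $\lesssim \|p\|_2\,e^{-t/4}$ (the displayed $e^{-t^2/2}$ comes from writing $\Pr[|y|>t]$ where $\Pr[|y|>\sqrt t]$ is meant), and the last two steps of the paper's chain need $3^{-k'}$ to dominate that tail, i.e.\ $t\gtrsim k'$, not merely $t\geq 1$. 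Your further observation that the regime $t=O(k')$ is vacuous for \Cref{prop:main} (there $m=2t+4k'$, so the conclusion holds trivially) is exactly the right way to discharge the restriction; so your proposal is correct and, on this point, more careful than the paper's write-up.
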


To see why the above bound is needed to prove \Cref{lem:fraction_bound}, we will first prove \Cref{lem:fraction_bound} assuming \Cref{cor:gm}. Then, we will prove \Cref{lem:gm}.

\begin{proof}[Proof of \Cref{lem:fraction_bound}]
        First, for the denominator, we have the following bounds:
    \begin{align}
        &\E_{x \sim \cN(0,1)}[x^t p(x)] \leq \E_{x \sim \cN(0,1)}[p^3(x)]^{1/3} \E_{x \sim \cN(0,1)}[x^{3t/2}]^{2/3} 
        \lesssim \| p \|_3 \left( \frac{3t}{2e} \right)^{t/2} 
        \lesssim 2^{k'} \|p\|_2 \left( \frac{3t}{2e} \right)^{t/2} \;, \label{eq:bound_denominator}
    \end{align}
    where the first step uses Hölder’s inequality, the second step uses the Gaussian moments bound (\Cref{fact:Gaussian_moments}), and the final step uses Gaussian hypercontractivity (\Cref{fact:hypercontractivity}).

    We now lower bound the numerator. Define $I:=[0.9\sqrt{2t}, 1.1 \sqrt{2t}]$. We have the following (see below for explanations of each step):
    \begin{align}
        \E_{x \sim \cN(0,1)}&\left[ x^{2t} p^2(x) \right]
         \gtrsim \int_{-\infty}^{+ \infty} x^{2t} e^{-x^2/2} p^2(x) \, \d x \notag \\
         &\geq \int_{x \in I} x^{2t} e^{-x^2/2} p^2(x)  \, \d x \notag \\
         &\geq (1.62 t )^t e^{-0.81t} \int_{x \in I}  p^2(x)  \, \d x \nonumber \\
         &= (1.62 t )^t  e^{-0.81t} |I|  \left( \frac{1}{|I|}\int_{x \in I}  p^2(x)  \, \d x \right)  \notag \\
         &\gtrsim (1.62 t )^t  e^{-0.81t}    \left( \frac{1}{|I|}\int_{x \in I}  p^2(x)  \, \d x \right), \label{eq:tmp132}
    \end{align}
    where the third inequality uses that $\min_{x \in I}x^{2t}e^{-x^2/2} \geq (1.62 t )^t e^{-0.81t}$, and the final inequality uses that $|I| = 0.2\sqrt{2t} = \Omega(1)$.
We now focus on the root mean square term $ \frac{1}{|I|}\int_{x \in I}  p^2(x) \, \d x$, which we will bound using the AM-GM inequality (\Cref{fact:AM-GM}) and the geometric mean bound from \Cref{lem:gm}. The first step below applies \Cref{fact:AM-GM} with $f(x) := p^2(x)$, and the next step uses \Cref{cor:gm}.
    \begin{align*}
        \frac{1}{|I|}\int_{x \in I} \hspace{-7pt} p^2(x)  \, \d x
        \geq \exp\left( \frac{1}{|I|} \int_{x \in I} \hspace{-7pt}\ln |p(x)|  \, \d x  \right)^2
        \geq \frac{\|p\|_2^2}{2^{O(k')}  }.
    \end{align*}
    Combining with \Cref{eq:tmp132}, we obtain the following bound for the numerator:
    \begin{align} \label{eq:bound_numerator}
        \E_{x \sim \cN(0,1)}\left[ x^{2t} p^2(x) \right] \gtrsim  (1.62 t )^t e^{-0.81t}   \frac{\|p\|_2^2}{2^{O(k')}} \;.
    \end{align}

    \noindent Combining \Cref{eq:bound_denominator} and \Cref{eq:bound_numerator}, we conclude
    \begin{align*}
         \hspace{-7pt} \frac{\E_{x \sim \cN(0,1)} \left[ x^{2t} p^2(x) \right] }{\E_{x \sim \cN(0,1)} \left[ x^t p(x)  \right]^2}  \gtrsim  \frac{(1.62)^t }{(\tfrac{1.5}{e})^t \, e^{0.81 t} \; 2^{O(k')}}  \geq  \frac{(1.3)^t}{2^{O(k')}}. 
    \end{align*}
    \end{proof}

    We conclude this section by proving \Cref{lem:gm}.

\begin{proof}[Proof of \Cref{lem:gm}]
Fix an arbitrary $y \in \R$ with $|y| \leq \sqrt{t}$.
    First, note that by the property of logarithms and sums, we can write the left hand side as
    \begin{align*}
        \exp\left(\sum_{i=1}^{k'} \frac{1}{|I|} \int_{x \in I} \ln |x - \mu_i| \mathrm{d} x  \right) \;.
    \end{align*}
    In order to show \Cref{eq:desired_original}, it suffices to work with each term and show the following for each $i \in [k']$:
    \begin{align*}
        \frac{1}{|I|} \int_{x \in I} \ln |x - \mu_i| \geq   \ln | y - \mu_i| - O(1) \;.
    \end{align*}
    Equivalently, it suffices to show that \Cref{eq:desired_original} holds for every linear polynomial of the form $p(x) = x - a$. Therefore, the goal for the rest of this proof is to show that 
    \begin{align}\label{eq:new_goal_appendix}
        \exp\left( \frac{1}{|I|} \int_{x \in I} \ln |x - a| \mathrm{d} x  \right) \geq |y - a|/O(1) \;,
    \end{align}
    holds for every $a \in \R$ and $y \in \R$ with $|y| \leq \sqrt{t}$. We will examine two cases.

\item \paragraph{Case 1} The first case is when the root $a$ of the polynomial is outside the interval $I$. In this case, we can show that $|x-a|/|y-a| = \Theta(1)$, which implies $\ln |x-a| \geq \ln|y-a| - O(1)$, and the desired conclusion (\Cref{eq:new_goal_appendix}) follows by integrating both sides and applying the $\exp(\cdot)$ function. 

To show the earlier claim that $|x-a|/|y-a| = \Theta(1)$, we can consider the following sub-cases:

\begin{enumerate}
    \item Case $a \geq 1.1 \sqrt{2t}$ (i.e., $a$ is to the right of $I$): Suppose $a = 1.1 \sqrt{2t} + u$ for some non-negative $u$. Then, $a-x = (1.1\sqrt{2t} - x) + u = \Theta(\sqrt{t}) + u$ and $a-y = (1.1\sqrt{2t} - y) + u = \Theta(\sqrt{t}) + u$. Therefore, for any $u \geq 0$, the ratio $|x-a|/|y-a| = (\Theta(\sqrt{t}) + u)/(\Theta(\sqrt{t}) + u) = \Theta(1)$.
    \item The cases $a < -\sqrt{t}$ and $a \in [\sqrt{t}, 0.9\sqrt{2t}]$ can be shown in a similar manner.
\end{enumerate}

\item \paragraph{Case 2} The complementary case is when the root $a$ of the polynomial $p$ belongs in the interval $I$.
In that case, 
\begin{align*}
  \frac{1}{|I|} \int_{x \in I} \ln |x - a| \mathrm{d} x 
=  \frac{1}{|I|} \int_{a}^{1.1\sqrt{2t}} \ln (x-a) \mathrm{d} x  +  \frac{1}{|I|} \int_{0.9\sqrt{2t}}^{a} \ln (a-x) \mathrm{d} x\;.
\end{align*}
Define $A :=  \frac{1}{0.2 \sqrt{2t}} \int_{a}^{1.1\sqrt{2t}} \ln (x-a) \mathrm{d} x $ and $B := \frac{1}{0.2 \sqrt{2t}} \int_{0.9\sqrt{2t}}^{a} \ln (a-x) \mathrm{d} x $. We will work with each integral separately.
For $A$, we have the following (after a change of variable in the integral):
\begin{align*}
    A &= \frac{1}{0.2 \sqrt{2t}} \int_{0}^{1.1\sqrt{2t} - a} \ln z \; \d z
    = \frac{1}{0.2 \sqrt{2t}} \left[ -z + z \ln z  \right]_{z=0}^{z = 1.1\sqrt{2t} - a} \\
    &= -\left(5.5 - \frac{a}{0.2 \sqrt{2t}} \right) + \left(5.5 - \frac{a}{0.2 \sqrt{2t}} \right) \ln\left(1.1 \sqrt{2t} - a \right).
\end{align*}
Recalling that we have assumed $a \in [0.9 \sqrt{2t}, 1.1 \sqrt{2t}]$, we can rewrite the above as $A = - C_1 + C_1 \ln(1.1 \sqrt{2t} - a)$, where $C_1 = 5.5 - \frac{a}{0.2 \sqrt{2t}} \in [0, 1]$.

We now work with the integral defined as $A$ previously in a similar way:
\begin{align*}
    B &= \frac{1}{0.2 \sqrt{2t}} \int_{0}^{a - 0.9\sqrt{2t}} \ln z \; \d z
    = \frac{1}{0.2 \sqrt{2t}}  \left[ -z + z \ln z \right]_{z=0}^{z= a - 0.9 \sqrt{2t}} \\
    &= - \left( \frac{a}{0.2 \sqrt{2t}} - 4.5 \right) + \left( \frac{a}{0.2 \sqrt{2t}} -4.5 \right) \ln(a- 0.9\sqrt{2t}) \;.
\end{align*}
Taking into consideration that $a \in [0.9 \sqrt{2t}, 1.1 \sqrt{2t}]$ the above can be written as $B = - C_2 + C_2 \ln(a - 0.9 \sqrt{2t})$, where $C_2 = \frac{a}{0.2 \sqrt{2t}} - 4.5 \in [0, 1]$.

Combining the bounds for $A$ and $B$ together with the definitions $C_1 =  5.5 - \frac{a}{0.2 \sqrt{2t}}$ and $C_2 = \frac{a}{0.2 \sqrt{2t}} - 4.5$, we obtain $\exp(A + B) = \exp(f(a) - 1)$, where $f(a)$ is the function%
\begin{align*}
    f(a) &:= \left(  5.5 - \frac{a}{0.2 \sqrt{2t}} \right) \ln(1.1 \sqrt{2t} - a) + \left( \frac{a}{0.2 \sqrt{2t}} - 4.5 \right) \ln(a - 0.9 \sqrt{2t}) \;,
\end{align*}
We can verify through derivative analysis that the minimum is achieved at the midpoint of $I$, i.e., for $a = \sqrt{2t}$:
\begin{align*}
    f'(a) = \frac{5}{\sqrt{2t}} \left( \ln(10a - 9\sqrt{2t}) - \ln(11\sqrt{2t} - 10 a) \right) \;.
\end{align*}
It is easy to see that $f'(\sqrt{2t}) = 0$. Furthermore, the second derivative is  $f''(a) = 1/(t/50 - (a-\sqrt{2t})^2)$,
which is non-negative for all $a \in I$. 
Thus, the only minimizer in $I$ is $a = \sqrt{2t}$. For that point, $\exp(A + B)$ becomes:
\begin{align*}
    \exp(A + B) \geq 
    \exp\left(\frac{\ln(t/50)}{2}  - 1\right) 
    \geq \frac{\sqrt{t}}{20} 
    \geq \frac{|y - a|}{52},
\end{align*}
where we used $|y - a| \leq |y| + |a| \leq \sqrt{t} + 1.1\sqrt{2t} < 2.6 \sqrt{t}$.%
\end{proof}

\subsection{Proof of \Cref{thm:main2}}\label{sec:testing_full}

In this section we formalize the proof strategy from \Cref{sec:proof_of_thm}.
We start with the testing algorithm which can solve the parallel pancakes hypothesis testing problem whenever there is non-trivial moment mismatch.

\begin{lemma}[Testing Algorithm for Parallel Pancakes when the $m$-th Moment Deviates]\label{lem:testing_alg}
     Let $B$ be a Gaussian mixture on $\R$ of the form  $B = \sum_{i=1}^k w_i \cN(\mu_i,\sigma^2)$, where $\sigma \in (0,1)$ and $w_i \geq w_{\min}$ for all $i \in [k]$.
     For a decreasing sequence $\lambda_m$, denote by $m$ the biggest integer such that every degree-$m'\leq m$ polynomial $g$ satisfies 
    \begin{align}\label{eq:approx_matching_3}
        \left| \E_{x \sim B}[g(x)] - \E_{x \sim \cN(0,1)}[g(x)] \right| \leq   \lambda_m \sqrt{\E_{x \sim \cN(0,1)}[g^2(x)]} \;,
    \end{align}
     
     Consider the non-Gaussian component analysis hypothesis testing \Cref{prob:generic_hypothesis_testing}. 
     Let $\tilde m$ be any upper bound for $m$ i.e., $m \leq \tilde m$.
     There is an algorithm that takes as input $ \tilde m$ and $w_{\min}$, draws $n=\left((\tilde md)^{O(\tilde m)}\lambda_{\tilde m}^{-O(1)} + \log(k)w_{\min}^{-1}  \right)\log(1/\tau)$ samples, and distinguishes correctly between $H_0$ and $H_1$ with probability $1-\tau$. Moreover, the runtime of the algorithm is polynomial in $n$ and $d$.
 \end{lemma}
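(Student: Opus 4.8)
The plan is to carry out the three moves sketched in \Cref{sec:proof_of_thm}: (i) turn the defining property of $m$ into a single low-degree \emph{monomial} whose expectation under $B$ is noticeably different from that under $\cN(0,1)$; (ii) lift this to an entrywise gap in a low-order moment tensor of the planted distribution $P_{B,v}$ from the NGCA \Cref{prob:generic_hypothesis_testing}; and (iii) build a tester that estimates the first $\tilde m+1$ empirical moment tensors and searches for such an entry, preceded by a cheap norm-truncation check to handle the case where $B$ has a far-away component.

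For (i): by maximality of $m$ there is a polynomial $g$ of degree $m'\le m+1\le\tilde m+1$ with $\|g\|_2=1$ and $\bigl|\E_{x\sim B}[g(x)]-\E_{x\sim\cN(0,1)}[g(x)]\bigr|>\lambda_{m'}\ge\lambda_{\tilde m+1}$ (using that $\lambda$ is decreasing; note \Cref{eq:approx_matching_3} fails at degree $m+1$). Expanding $g=\sum_{i=0}^{m'}c_ix^i$ and passing through the Hermite expansion of $g$ (each normalized Hermite polynomial of degree at most $r$ has monomial coefficients bounded by $r^{O(r)}$, and $\|g\|_2=1$ bounds its Hermite coefficients by $1$) gives $\max_i|c_i|\le \tilde m^{O(\tilde m)}$, so by the triangle inequality and pigeonhole some $i^\ast\le\tilde m+1$ satisfies $\bigl|\E_{x\sim B}[x^{i^\ast}]-\E_{x\sim\cN(0,1)}[x^{i^\ast}]\bigr|\ge\lambda':=\lambda_{\tilde m+1}\,\tilde m^{-O(\tilde m)}$. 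For (ii): contracting $\E_{x\sim P_{B,v}}[x^{\otimes i}]$ with the unit tensor $v^{\otimes i}$ yields $\E_{s\sim B}[s^i]$, because $v^\top x\sim B$ under $P_{B,v}$, and likewise $v^{\otimes i}$ contracts $\E_{x\sim\cN(0,I)}[x^{\otimes i}]$ to $\E_{s\sim\cN(0,1)}[s^i]$; hence $\bigl\|\E_{x\sim P_{B,v}}[x^{\otimes i^\ast}]-\E_{x\sim\cN(0,I)}[x^{\otimes i^\ast}]\bigr\|_\fr\ge\lambda'$ by Cauchy--Schwarz, and since this order-$i^\ast$ tensor has at most $d^{\tilde m+1}$ entries, some entry differs by at least $\eps:=\lambda'\,d^{-(\tilde m+1)/2}=\lambda_{\tilde m+1}(\tilde m d)^{-O(\tilde m)}$.

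For (iii): the algorithm draws $n$ samples, sets $R=\Theta\bigl(\sqrt d+\sqrt{\log(nk/\tau)}\bigr)$, and outputs $H_1$ if any sample has norm exceeding $R$; otherwise it forms $\hat M_i=\frac1n\sum_j x_j^{\otimes i}$ for $i=1,\dots,\tilde m+1$ and outputs $H_1$ iff some entry of some $\hat M_i$ differs from the corresponding entry of $\E_{x\sim\cN(0,I)}[x^{\otimes i}]$ by more than $\eps/2$, else $H_0$. Under $H_0$, Gaussian norm concentration (\Cref{fact:normConcetration}) and a union bound give that no sample exceeds $R$ with probability $1-\tau/4$; conditioned on that product event the samples are i.i.d.\ from a truncation of $\cN(0,I)$ whose moment tensors are within $\eps/10$ of the true ones (a Cauchy--Schwarz tail estimate, using that the mass beyond radius $R$ is exponentially small), so a Hoeffding bound over the $\le(\tilde m+1)d^{\tilde m+1}$ bounded coordinates shows every $\hat M_i$ entry is within $\eps/4$ of $\E_{\cN(0,I)}[x^{\otimes i}]$, and the tester outputs $H_0$. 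Under $H_1$ we split: if every $|\mu_i|=O(\sqrt d)$, then all samples have norm $\le R$ with high probability, the same truncation-plus-Hoeffding argument makes $\hat M_{i^\ast}$ within $\eps/4$ of $\E_{\cN(0,I)}[x^{\otimes i^\ast}]$, and by (ii) the flagged entry of $\hat M_{i^\ast}$ deviates by at least $\eps-\eps/4>\eps/2$, so we output $H_1$; if instead some $|\mu_i|\gg\sqrt d$, then since $w_i\ge w_{\min}$ the $n$ samples contain one drawn from that component with probability $1-\tau/4$ (a coupon-collector/union bound, which needs $n\gtrsim w_{\min}^{-1}\log(k/\tau)$), and for that sample $\|x\|\ge|v^\top x|\ge|\mu_i|-O(\sqrt{\log(n/\tau)})>R$, so we output $H_1$. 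The sample size is governed by the Hoeffding requirement $n\gtrsim R^{2(\tilde m+1)}\eps^{-2}\bigl(\tilde m\log d+\log(\tilde m/\tau)\bigr)$ together with the coupon-collector term; plugging in $R$ and $\eps$ (and absorbing $\lambda_{\tilde m+1}^{-1}\le\lambda_{\tilde m}^{-O(1)}$, valid for the geometric sequences $\lambda_m$ arising in our applications) gives $n=\bigl((\tilde m d)^{O(\tilde m)}\lambda_{\tilde m}^{-O(1)}+\log(k)w_{\min}^{-1}\bigr)\log(1/\tau)$, and forming and scanning the tensors costs $\poly(n,d)$.

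I expect the main obstacle to be not (i)--(ii), which are routine Hermite-basis and tensor-contraction bookkeeping, but the concentration of the empirical tensors in (iii): $\hat M_i$ genuinely fails to concentrate when $B$ has a component with an enormous mean, so one cannot apply Hoeffding directly under $H_1$. The resolution --- the preliminary norm test, affordable precisely because every component has weight at least $w_{\min}$ --- must be shown to simultaneously (a) never reject truly Gaussian data, (b) guarantee, whenever it does not reject, that all $|\mu_i|=O(\sqrt d)$ so the truncated moment tensors are $\eps$-close to the true ones and Hoeffding applies, and (c) reject whenever some component is far; and the truncation radius $R$, the entry gap $\eps$, and the resulting sample bound must be balanced so that everything collapses to $(\tilde m d)^{O(\tilde m)}$.
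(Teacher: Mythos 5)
Your proposal follows essentially the same route as the paper's proof: extract a violating monomial of degree at most $\tilde m+1$ via Hermite-coefficient bounds (the paper's \Cref{lem:monomial_guarantee}), lift it to an entrywise gap of order $\lambda_{\tilde m}(\tilde m d)^{-O(\tilde m)}$ in the moment tensors of $P_{B,v}$ versus $\cN(0,I)$, and test by comparing empirical tensors entrywise after a preliminary norm check that uses the $\log(k)/w_{\min}$ samples to catch a far-away component (the paper's \Cref{alg:testing,alg:testing_single_moment} and their three-case analysis). Your treatment is correct at the same level of rigor as the paper's --- indeed your contraction-with-$v^{\otimes i}$ step and explicit truncation-bias bound are slightly more careful than the corresponding steps in the paper --- so no substantive discrepancy to report.
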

 
 \begin{proof}
    We will do the proof for $\tilde m = m$. The proof trivially extends to any $\tilde m$ bigger than $m$.
    For degree $m+1$, there exists a polynomial $g$ that violates the condition in \Cref{eq:approx_matching_3}. 
    We claim that there exists a monomial $x^i$ with $i\leq m+1$ such that 
    \begin{align*}
         \tilde \lambda = \left| \E_{x \sim B}[x^i] - \E_{\cN(0,1)}[x^i] \right| >   2^{-C \cdot m}  \lambda_m \;.
    \end{align*}
    The above claim can be shown by expanding $g$ in the Hermite basis. We defer its proof to \Cref{sec:omitted_from_testing} (cf. \Cref{lem:monomial_guarantee}).
    For the corresponding $d$-dimensional distributions $P_{B,v}$ (defined in \Cref{prob:generic_hypothesis_testing}) and $\cN(0,I)$, we have
    \begin{align*}
        \E_{x \sim P_{B,v}}[x^{\otimes i}] - \E_{x \sim \cN(0,I)}[x^{\otimes i}] = \pm \tilde \lambda v^{\otimes i} \;.
    \end{align*}
    Thus, the Frobenius norm is
    \begin{align*}
        \left\| \E_{x \sim P_{B,v}}[x^{\otimes i}] - \E_{x \sim \cN(0,I)}[x^{\otimes i}] \right\|_\fr = \tilde \lambda >  2^{-C \cdot m}  \lambda_m \;.
    \end{align*}

    This means that at least one entry in the difference of the two tensors has gap at least $\epsilon := d^{-(m+1)}2^{-C \cdot m}  \lambda_m$. The idea for the testing algorithm is to approximate every entry of $\E_{x \sim P_{B,v}}[x^{\otimes i}] - \E_{x \sim \cN(0,I)}[x^{\otimes i}]$ up to absolute error $\epsilon/100$, and test whether some entry is bigger than $\epsilon/2$.
    This is done in \Cref{alg:testing} and \Cref{alg:testing_single_moment}.

\begin{algorithm}
\caption{Testing Algorithm}
\label{alg:testing}
\begin{algorithmic}[1]
\State \textbf{Input}: $k, \tilde m \in \Z_+$, $w_{\min} \in (0,1]$.
\State \textbf{Output}: $\hat H \in \{H_0, H_1 \}$.
\For{$i = 1$ to $\tilde m+1$}
    \State Run \Cref{alg:testing_single_moment} with input $k, i, \tilde m, w_{\min}$ repetitively $\log((\tilde m + 1)/\tau)$ times.
    \State Let $\hat H$ be the most frequent output.
    \If{$\hat H = H_1$}
        \State \Return $H_1$.
    \EndIf
\EndFor
\State \Return $H_0$.
\end{algorithmic}
\end{algorithm}

\begin{algorithm}
\caption{Checking the $i$-th order tensor mismatch}
\label{alg:testing_single_moment}
\begin{algorithmic}[1]
\State \textbf{Input}: $k, \tilde m \in \Z_+$, $i \in \Z_+$, $w_{\min} \in (0,1]$.
\State \textbf{Output}: $\hat H \in \{H_0, H_1 \}$.
\State Define $n = (\tilde md)^{C \tilde m}\lambda_{\tilde m}^{-C} + \log(k)w_{\min}^{-1}$ for sufficiently large $C$.
\State Draw $x_1,\ldots, x_n$ i.i.d. from the data distribution.
\If{there exists $i \in [n]$ with $\|x_i\|_2 > C\sqrt{d}$} \label{line:ifcheck}
    \State \Return $H_1$.
\Else\label{line:else}
    \State Form the empirical tensor $M = \E_{x \sim S}[x^{\otimes i}]$.
    \State Let $M'$ denote the Gaussian tensor $\E_{x \sim \cN(0,I)}[x^{\otimes i}]$.
    \If{there exists an entry in $M_{i_1,\ldots,j_i}$ with $|M_{i_1,\ldots,j_i}-M_{i_1,\ldots,j_i}'| > d^{-C \tilde m}\lambda_{\tilde m}$} \label{line:checktensors}
        \State \Return $H_1$.
    \EndIf
\EndIf
\State \Return $H_0$.
\end{algorithmic}
\end{algorithm}

We start with the correctness of the sub-routine, \Cref{alg:testing_single_moment}. We say that that the output of \Cref{alg:testing_single_moment} is ``successful'' if it always agrees with the true hypothesis, with the exception of the following case, where mistakes are permitted: this case is when the true hypothesis is $H_1$, the data distribution satisfies $\max_{i \in [k]} \|\mu_i\|_2 \leq C \sqrt{d}$ (recall that $\mu_i$'s are the centers of the $k$-GMM distribution $B$) and $\left\| \E_{x \sim P_{B,v}}[x^{\otimes i}] - \E_{x \sim \cN(0,I)}[x^{\otimes i}] \right\|_\fr \leq 2^{- C m} \lambda_m$. We will show that the output of \Cref{alg:testing_single_moment} is indeed “successful” in this sense with constant probability.

Having that claim established, \Cref{lem:testing_alg} follows straightforwardly: First, note that the probability of success can be amplified to $1 - \tau$ by repeating the subroutine $\log(1/\tau)$ times and taking the majority vote. Second, if the true hypothesis is $H_1$, there exists an $i$ such that $\left\| \E_{x \sim P_{B,v}}[x^{\otimes i}] - \E_{x \sim \cN(0,I)}[x^{\otimes i}] \right\|_\fr > 2^{-Cm} \lambda_m$. Combined with the claim of the previous paragraph about \Cref{alg:testing_single_moment}, this ensures that running \Cref{alg:testing_single_moment} for that $i$ will be $H_1$, as desired. Similarly, under $H_0$, the output is always $H_0$, which guarantees that the output of \Cref{alg:testing} matches the true hypothesis.

We now move to showing the claim that  \Cref{alg:testing_single_moment} is ``successful'' with constant probability.
We examine the following cases:

\paragraph{Case 1} The true hypothesis is $H_0$. In this case, the data distribution is $D = \cN(0,I)$. By Gaussian norm concentration (\Cref{fact:normConcetration}) we have $\Pr_{x_1,\ldots,x_n \sim \cN(0,I)}[\max_i \|x_i\| > C \sqrt{d \log n}  ] < 0.01$. This means that \Cref{alg:testing_single_moment} will enter line \ref{line:else}. Then, by standard entry-wise concentration of Gaussian tensors (see e.g., Fact 5.6 and Equation (5.4) in \cite{KotSte17}) we have that if $n > d^{C' m}/\lambda_m^{2}$ for $C' \gg C$, we will have $\| \E_{x \sim \cN(0,I)}[x^{\otimes i}] - \E_{x \sim S}[x^{\otimes i}] \|_{\infty} < d^{-C m}\lambda_m$ and thus the condition in \ref{line:checktensors} will be false, resulting in the algorithm to output $H_0$.

\paragraph{Case 2} The hypothesis under effect is $H_1$ and $\max_{i \in [k]}\|\mu_i\|_2 > C \sqrt{d \log n}$. The claim is that for $\log(k)/w_{\min}$ samples, with high constant probability, one sample from every component will be observed, and the sample that comes from the component with $\|\mu_i\|_2 > C \sqrt{d \log n}$ will satisfy $\|x\| > C \sqrt{d \log n}$ by \Cref{fact:normConcetration}. To see the first part of the claim, fix an $i\in [k]$. With $10/w_{\min}$ samples, one sample from $i$ will be observed with at least $0.9$ probability. We can boost that probability to $1-1/k$ by repeating $\log(k)$ times. Then, by union bound, this means that one sample from each component is observed with constant probability.

\paragraph{Case 3} The hypothesis under effect is $H_1$ and $\max_{i \in [k]}\|\mu_i\|_2 \leq C \sqrt{d \log n}$.
In this case the data distribution is a $k$-GMM where the center of every Gaussian component is bounded in norm by most $R = C \sqrt{d \log n}$.
By Gaussian norm concentration, if $x_1,\ldots, x_n$ are points drawn from that GMM, then with constant probability we will have $\|x_i\|_2 \leq 2C \sqrt{d \log n}$. 
Therefore the algorithm will enter \ref{line:else}, and because of the bound $\|x_i\|_2 \leq 2C \sqrt{d \log n}$, we can treat the distribution as bounded and use Hoeffding bound for the tensor concentration.
That application of Hoeffding's inequality shows that if $n > R^{C' m}d^{C' m}/\lambda_m^{C'}$ then
the estimation error is at most $d^{-C m}\lambda_m$. Thus, in this case, the algorithm will output $H_1$ if and only if $\left\| \E_{x \sim A}[x^{\otimes i}] - \E_{x \sim \cN(0,1)}[x^{\otimes i}] \right\|_\fr \leq 2^{- C m} \lambda_m$.

This completes the proof of the claim.
 \end{proof}

We now restate and proove the main result:

\MAINTHEOREMII*
\begin{proof}
First, we note that the parallel pancakes testing problem of interest is a special case of the non-Gaussian component analysis \Cref{prob:generic_hypothesis_testing} where $B = \sum_{i \in [k]} w_i \cN(\mu_i, 1 - \delta)$, where $w_i,\mu_i$ and $\delta$ the ones from \Cref{def:pancakes}, in particular $k'$ of the $w_i$'s are unconstrained and the rest are assumed to be uniform.

    The proof consists of two parts: The first part argues that this one-dimensional distribution $B$ cannot match approximately more than the first $m = O(\log k + k')$ moments with $\cN(0,I)$ (the approximate moment matching will be quantified shortly). Then, for the second part, we can show that since the $m+1$ moment deviates significantly from  that of $\cN(0,I)$, estimating the empirical moment tensor of order $m+1$ and comparing with the one from $\cN(0,I)$ yields a successful test.

    We now proceed with the quantification.  The first part is based on the following lemma, which is proved in \Cref{sec:omitted_from_testing}:
    \begin{restatable}{lemma}{MOMENTRELATIONSHIP}\label{lem:moment-relationship}
    Let $P$ be a Gaussian mixture distribution on $\R$ of the form $B = \sum_{i=1}^k w_i \cN(\mu_i,1-\delta)$, where $w_i>0$ with $\sum_{i=1}^k w_i = 1$ are the weights of each component, $\mu_1,\ldots,\mu_k \in \R$ are the centers and $\delta \in (0, 1]$ is
    the parameter associated with the common variance of the components.
    Suppose that for every polynomial of degree at most $m'$ and $\E_{x \sim \cN(0,1)}[p^2(x)]=1$ the following  holds 
    \begin{align}\label{eq:moment_match_aprox}
        \left| \E_{x \sim B}[p(x)] -  \E_{x \sim \cN(0,1)}[p(x)] \right| \leq \lambda \;.
    \end{align}
    Then, if $A$ denotes the discrete distribution on $\{\mu_1/\sqrt{\delta},\ldots,\mu_i/\sqrt{\delta} \}$ that assigns mass $w_i$ to the point $\mu_i/\sqrt{\delta}$ for $i\in [k]$, the following is true: For every polynomial with degree at most $m'$ and $\E_{x \sim \cN(0,1)}[p^2(x)]=1$ it holds 
    \begin{align}\label{eq:goal}
        \left| \E_{x \sim A}[p(x)] -  \E_{x \sim \cN(0,1)}[p(x)] \right| \leq  \sqrt{m'} \, \lambda \, \delta^{-m'/2} \;.
    \end{align}
\end{restatable}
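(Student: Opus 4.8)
The plan is to view a draw from $B$ as a Gaussian-noised draw from $A$ and to ``deconvolve'' the test polynomial through the noise operator, working in the Hermite basis. Note first that if $Y\sim A$ and $Z\sim\cN(0,1-\delta)$ are independent, then $\sqrt{\delta}\,Y+Z$ is distributed exactly as $B$ (since $\sqrt{\delta}\cdot(\mu_i/\sqrt{\delta})=\mu_i$), whereas if $Y\sim\cN(0,1)$ then $\sqrt{\delta}\,Y+Z\sim\cN(0,\delta+(1-\delta))=\cN(0,1)$. Define the degree-preserving operator $T$ on polynomials by $(Tg)(y):=\E_{Z\sim\cN(0,1-\delta)}[g(\sqrt{\delta}\,y+Z)]$. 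Then for every polynomial $g$,
\begin{equation*}
\E_{x\sim B}[g(x)]=\E_{Y\sim A}\big[(Tg)(Y)\big]\qquad\text{and}\qquad \E_{x\sim\cN(0,1)}[g(x)]=\E_{Y\sim\cN(0,1)}\big[(Tg)(Y)\big].
\end{equation*}
Hence, if for a given $p$ of degree at most $m'$ I can produce $\tilde p$ of degree at most $m'$ with $T\tilde p=p$, then applying these two identities with $g=\tilde p$ gives $\E_{x\sim A}[p]-\E_{x\sim\cN(0,1)}[p]=\E_{x\sim B}[\tilde p]-\E_{x\sim\cN(0,1)}[\tilde p]$, and the hypothesis applied to $\tilde p/\|\tilde p\|_2$ bounds the right side by $\lambda\,\|\tilde p\|_2$. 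So it remains to build $\tilde p$ and control $\|\tilde p\|_2$.

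For this I use that $T$ is diagonal in the Hermite basis. With the generating function $\sum_n \tfrac{t^n}{n!}He_n(x)=e^{xt-t^2/2}$ and $\E_{Z\sim\cN(0,1-\delta)}[e^{Zt}]=e^{(1-\delta)t^2/2}$, one computes
\begin{equation*}
\sum_n\frac{t^n}{n!}\,\E_{Z}\big[He_n(\sqrt{\delta}\,y+Z)\big]=e^{\sqrt{\delta}\,y\,t-\delta t^2/2}=\sum_n\frac{t^n}{n!}\,\delta^{n/2}He_n(y),
\end{equation*}
so, in terms of the normalized polynomials $h_n=He_n/\sqrt{n!}$, we get $T h_n=\delta^{n/2}h_n$; this is an identity of polynomials, hence valid pointwise — in particular it may be composed with the non-Gaussian law $A$, not just with $\cN(0,1)$. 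Writing $p=\sum_{n=0}^{m'}a_n h_n$ with $\sum_n a_n^2=\|p\|_2^2=1$, set $\tilde p:=\sum_{n=0}^{m'}a_n\delta^{-n/2}h_n$; this has degree at most $m'$, satisfies $T\tilde p=p$, and
\begin{equation*}
\|\tilde p\|_2^2=\sum_{n=0}^{m'}a_n^2\,\delta^{-n}\le\delta^{-m'}\sum_{n=0}^{m'}a_n^2=\delta^{-m'},
\end{equation*}
using $0<\delta\le1$ and $n\le m'$. Combining with the first paragraph, $|\E_{x\sim A}[p]-\E_{x\sim\cN(0,1)}[p]|\le\lambda\,\|\tilde p\|_2\le\lambda\,\delta^{-m'/2}$, which is at most the claimed $\sqrt{m'}\,\lambda\,\delta^{-m'/2}$ (the case $m'=0$ is trivial, since then $p\equiv\pm1$).

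The only real content is the eigenrelation $T h_n=\delta^{n/2}h_n$, i.e.\ the classical fact that convolution with a Gaussian acts as the Ornstein--Uhlenbeck/Mehler semigroup, diagonally on Hermite polynomials; the main care needed is bookkeeping the normalization $h_n=He_n/\sqrt{n!}$ and recording that the relation is a polynomial identity. A Hermite-free alternative expands $p$ and $\tilde p$ in monomials, observes that $\E_Z[(\sqrt{\delta}\,y+Z)^j]$ is upper triangular in $j$ with diagonal $\delta^{j/2}$, and inverts that triangular system, arriving at the same $\delta^{-m'/2}$-type factor, but the norm bound is cleanest in the Hermite basis. (Note the argument above actually gives the stronger constant $\lambda\,\delta^{-m'/2}$; applying the hypothesis to each $h_n$ separately and using Cauchy--Schwarz on $\sum_n|a_n|$ instead recovers precisely the stated $\sqrt{m'+1}\,\lambda\,\delta^{-m'/2}$.)
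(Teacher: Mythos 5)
Your proof is correct and rests on the same key fact as the paper's: the Gaussian smoothing relation $B=U_{\sqrt{\delta}}A$ acts diagonally on Hermite polynomials with eigenvalues $\delta^{n/2}$ (the paper invokes this as \Cref{fact:eigenfunction}; you re-derive it via the generating function and correctly note it is a polynomial identity, hence applicable under the non-Gaussian law $A$). The only difference is packaging: the paper applies the hypothesis to each normalized $h_i$ separately and then uses Cauchy--Schwarz over the Hermite coefficients, incurring the $\sqrt{m'}$ factor, whereas your single deconvolved polynomial $\tilde p$ with $T\tilde p=p$ applies the hypothesis once and yields the slightly sharper bound $\lambda\,\delta^{-m'/2}$, which implies the stated inequality.
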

    We will apply this lemma as follows to show the first part of our proof. Let $C$ be a sufficiently large constant, and define $m$ to be the largest integer such that for every polynomial $p$ of degree $m' \leq m$ and $\|p\|_2 = 1$ we have
    \begin{align}\label{eq:moment_match_aprox1321}
        \left| \E_{x \sim B}[p(x)] -  \E_{x \sim \cN(0,1)}[p(x)] \right| \leq \lambda_m \;.
    \end{align}
    To prove our claim by contradiction, suppose that $m > C(k' + \log k)$. For each degree $m'\leq m$ we will use \Cref{lem:moment-relationship} with $\lambda = (\delta/2)^{C m}$.
    The application of \Cref{lem:moment-relationship} yields that the discrete distribution $A$ supported on a scaled version of the centers $\mu_i$ and using the same weights $w_i$ approximately matches the same $m$ first moments with $\cN(0,1)$, i.e.,
    for every polynomial $p$ of degree $m' \leq m$ and $\|p\|_2 = 1$ we have
    \begin{align}\label{eq:moment_match_aprox132}
        \left| \E_{x \sim D}[p(x)] -  \E_{x \sim \cN(0,1)}[p(x)] \right| \leq \sqrt{m} \lambda_m \delta^{-m/2} \leq 2^{-C m/2} \;.
    \end{align}
    where the last step uses that $\lambda = (2\delta)^{-C m}$.

    The conclusion of \Cref{eq:moment_match_aprox132} contradicts \Cref{prop:main2}. This is because the discrete distribution $D$ from above, is of the form that \Cref{prop:main2} considers: supported on $k$ points, with $k'$ of the points having arbitrary mass and the remaining $k-k'$ having equal masses.

    Thus far, we have shown that if $m$ is the largest degree for which all moments $m' \leq m$ of the distribution $A$ match with $\cN(0,1)$ in the sense of \Cref{eq:moment_match_aprox1321}, then $m = O(\log(k) + k')$.

    The result then follows by \Cref{lem:testing_alg} with $\tilde m = C (\log(k) + k')$ for a sufficiently large constant $C$, $\sigma^2 = 1-\delta$, and $\lambda_{\tilde m} = (\delta/2)^{C \tilde m}$. 
   
\end{proof}

\section{Conclusions and Future Work}

Our work makes progress in understanding the complexity of learning parallel pancake GMMs, in terms of both lower and upper bounds. We establish the tightness of existing algorithms for uniform weights and provide an improved testing algorithm for uneven weights. A number of interesting open problems remain:
\begin{itemize}[itemsep=0.1em, topsep=0pt, partopsep=0em]
    \item Can we extend our testing algorithm to learning the unknown direction of the parallel pancakes? More broadly, can we characterize the complexity of learning GMMs with common covariance and not necessarily collinear means as a function of the weights distribution?
    \item Can we obtain an algorithm with quasi-polynomial (i.e., $d^{O(\log(1/w_{\min})}$) complexity for GMMs whose components have unknown (and potentially different) covariances? 
\end{itemize}

\newpage

\printbibliography

\newpage 
\appendix

\section*{Appendix}

\paragraph{Organization} \Cref{sec:relatedwork} discusses additional related work on Gaussian mixture models, \Cref{sec:appendix_prelim} provides additional preliminaries, \Cref{sec:omitted_from_lower_bound} provides the missing details from the proof of our first main result, \Cref{thm:main1}, and \Cref{sec:omitted_from_testing} provides the missing details from our second main result, \Cref{thm:main2}.

\section{Additional Related Work} \label{sec:relatedwork}

Learning Gaussian Mixture Models (GMMs) is one of the most studied problems in statistics, dating back to \cite{Pea94}. Over the years, a plethora of works has explored this area \cite{Das99,AroKan01,VW02,AchMcs05,KanSV05,brubaker2008isotropic,MV10,belkin2015polynomial,suresh2014near,daskalakis2014faster,hardt2015tight,DKS18,DK20,DHKK20,BDH+20,DiaKKLT22-cluster,LM21,BDJ+20}. Here, we provide a brief exposition of part of this literature, though this is not a comprehensive survey.

A key starting point was \cite{Das99}, which 
studied learning GMMs with well-separated, 
spherical covariance components. Subsequent work 
by \cite{VW02, AchMcs05, KanSV05} improved the 
separation condition to be dimension-
independent. \cite{HL18} and \cite{KSS18} later 
refined the separation assumption to the 
information-theoretic limit, achieving this with 
quasi-polynomial-time algorithms. Notably, most 
of the aforementioned works extend beyond 
spherical Gaussians; however, they measure the 
pairwise mean separation between components 
relative to the largest eigenvalue of the 
components’ covariance matrices. More recently, 
\cite{LL22, diakonikolas2024implicit} improved the runtime to polynomial 
for spherical Gaussians.

The case of arbitrary Gaussians with unknown component covariances has also been extensively studied \cite{belkin2015polynomial,MV10,BK20,DHKK20,LiuMoi22,BDJ+20}. While the first works in this list had complexities doubly exponential in $k$, the number of components, the most recent have reduced this to $d^{O(k)}$. As noted in \Cref{sec:intro}, hardness results from \cite{DiaKS17,BRST21,GVV22} showed this complexity is necessary. However, certain special cases of GMMs can circumvent these lower bounds. This was the focus of \cite{buhai2023beyond} and \cite{anderson2024dimension}, who studied GMMs with a minimum mixing weight $w_{\min} \geq 1/\poly(k)$ and unknown but shared covariance across components. These papers motivated us to study whether further improvements are possible for this family of GMMs.

The methodology of establishing SQ lower bounds 
via moment-matching and Non-Gaussian Component 
Analysis was developed in~\cite{DiaKS17} (see also~\cite{DK20-Massart-hard, diakonikolas2024sq}) and 
leveraged in a number of subsequent works; see~\cite{diakonikolas2023algorithmic} for a thorough treatment of this topic. 

\section{Additional Preliminaries}\label{sec:appendix_prelim}

\subsection{Hermite Analysis} \label{sec:hermite}
Hermite polynomials form a complete orthogonal basis of the vector space $L^2(\R,\cN(0,1))$ of all functions $f:\R \to \R$ such that $\E_{x\sim \cN(0,1)}[f^2(x)]< \infty$. 
	We will use the \emph{normalized probabilist's} Hermite polynomials, which have unit norm and are pairwise orthogonal with respect to the Gaussian measure, i.e., $\int_\R h_k(x) h_{m}(x) e^{-x^2/2} \d x = \sqrt{2\pi} \mathbf{1}(k=m)$.
	These polynomials are the ones obtained by Gram-Schmidt orthonormalization of the basis $\{1,x,x^2,\ldots\}$ with respect to the inner product $\langle f,g \rangle_{\cN(0,1)}:=\E_{x \sim \cN(0,1)}[f(x)g(x)]$. Every function  $f \in L^2(\R,\cN(0,1))$ can be uniquely written as $f(x) = \sum_{i =0}^\infty a_i h_i(x)$ and we have $\lim_{n \rightarrow \infty}\E_{x \sim \cN(0,1)}[(f(x)- \sum_{i =0}^n a_i h_i(x))^2] = 0$ (see \cite{andrews_askey_roy_1999} for a more detailed exposition of Hermite polynomial's properties). We have the following closed form formula (see, e.g., \cite{Szego89}):
\begin{align}\label{eq:hermite_formula}
        h_n(x) = 
            \sqrt{n!}\sum_{j=0}^{\lfloor n/2 \rfloor} \frac{(-1)^j}{j! (n-2j)! 2^j}x^{n-2j} \;.
    \end{align}

We will use the following fact, stating that the largest coefficient in the above expansion cannot be too large.
\begin{fact}[Upper Bound on Hermite Polynomial Coefficients]\label{fact:max_coeff}
    Let $h_n(x)$ denote the normalized probabilist's Hermite polynomial of order $n$. 
    In the monomial expansion $h_n(x) = \sum_{j=1}^n a_j x^j$, it holds $|a_j| \leq 2^{O(n)}$ for all $j \in [n]$.
\end{fact}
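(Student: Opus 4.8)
\medskip

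\noindent\textbf{Proof proposal.} The plan is to argue directly from the explicit monomial expansion in \eqref{eq:hermite_formula}, which already lists every nonzero coefficient: the coefficient of $x^{n-2j}$ in $h_n$ equals $\sqrt{n!}\,(-1)^j/(j!\,(n-2j)!\,2^j)$ for $j=0,1,\dots,\lfloor n/2\rfloor$, and all remaining coefficients vanish. Thus the whole statement reduces to the elementary inequality $\sqrt{n!}/(j!\,(n-2j)!\,2^j)\le 2^{O(n)}$, uniformly in $j$. The one pitfall to avoid at the outset is the crude estimate $\sqrt{n!}\le n^{n/2}$, which only yields $2^{O(n\log n)}$; the real content is that the factorials in the denominator cancel essentially all of the super-exponential growth of $\sqrt{n!}$.

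To carry this out, I would set $\ell := n-2j\ge 0$ (so $\ell+2j=n$) and work with the square of the quantity of interest. Expanding and regrouping,
\[
\left(\frac{\sqrt{n!}}{j!\,\ell!\,2^j}\right)^2 \;=\; \frac{n!}{(j!)^2(\ell!)^2\,4^j}\;=\;\binom{n}{\ell}\binom{2j}{j}\cdot\frac{1}{\ell!\,4^j}\,,
\]
where the last equality uses $\binom{n}{\ell}\binom{2j}{j}=\frac{n!}{\ell!\,(2j)!}\cdot\frac{(2j)!}{(j!)^2}=\frac{n!}{\ell!\,(j!)^2}$ (valid because $n-\ell=2j$). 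Then $\binom{n}{\ell}\le 2^n$, the factor $\binom{2j}{j}\le 4^j$ exactly absorbs the $4^j$ in the denominator, and $\ell!\ge 1$, so the displayed quantity is at most $2^n$. Hence every coefficient of $h_n$ has absolute value at most $2^{n/2}=2^{O(n)}$. An equivalent route that avoids squaring is to write $n!=\binom{n}{2j}(2j)!\,(n-2j)!$, so the coefficient magnitude equals $\sqrt{\binom{n}{2j}}\cdot\frac{\sqrt{(2j)!}}{j!\,2^j}\cdot\frac{1}{\sqrt{(n-2j)!}}$, and then use $\sqrt{(2j)!}/j!=\sqrt{\binom{2j}{j}}\le 2^j$ together with $\sqrt{\binom{n}{2j}}\le 2^{n/2}$.

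I do not expect a genuine obstacle here: once the explicit coefficient formula is used rather than bounds on $\sqrt{n!}$ in isolation, the estimate is a one-line computation, and the boundary indices $j=0$ (coefficient $1/\sqrt{n!}$) and $j=\lfloor n/2\rfloor$ (where $\ell\in\{0,1\}$) are handled automatically by the same inequality. The only mild ``idea'' is recognizing the $\binom{n}{\ell}\binom{2j}{j}$ factorization so as not to lose a spurious $2^{\Theta(n\log n)}$ factor.
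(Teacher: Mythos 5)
Your proposal is correct, and it proves the same bound from the same starting point as the paper (the explicit expansion \cref{eq:hermite_formula}), but the way you control the factorial ratio is genuinely different. The paper's proof bounds $\sqrt{n!}/(j!(n-2j)!2^j)$ by plugging in the elementary estimates $e(k/e)^k \le k! \le ek(k/e)^k$ and then optimizing the resulting expression over $j$ by derivative analysis, locating the two critical points $j=\tfrac14(1+2n\pm\sqrt{1+4n})$ and checking that the value stays below $2^n$ asymptotically. You instead square the coefficient and regroup it as
\[
\frac{n!}{(j!)^2\,(\ell!)^2\,4^j}=\binom{n}{\ell}\binom{2j}{j}\frac{1}{\ell!\,4^j},\qquad \ell=n-2j,
\]
and then the bounds $\binom{n}{\ell}\le 2^n$, $\binom{2j}{j}\le 4^j$, $\ell!\ge 1$ immediately give $|a_{n-2j}|\le 2^{n/2}$, uniformly in $j$ and $n$. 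Your route buys an explicit constant and avoids both Stirling-type approximations and the calculus/limit step (which in the paper is only sketched, with the check "in the limit $n\to\infty$"), handling the boundary indices automatically; the paper's route is more mechanical in that it does not require spotting the binomial factorization and would apply verbatim to other factorial expressions of the same shape. Either argument suffices for the fact as stated, since only $2^{O(n)}$ is needed.
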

\begin{proof}
    This follows by \Cref{eq:hermite_formula}. The $j$-th coefficient is $\left| \sqrt{n!}(-1)^j / (j!(n-2j)!2^j) \right| \leq \sqrt{n!} / (j!(n-2j)!2^j)$.
    Then one can use the elementary inequalities $ e (k/e)^k \leq k! \leq e k (k/e)^k$ to bound the factorials that appear in the numerator and denominator and optimize the resulting function. The derivative analysis of the resulting function gives two points of zero derivative: $j = \tfrac{1}{4}(1 + 2n - \sqrt{1 + 4n})$ and $j = \tfrac{1}{4}(1 + 2n + \sqrt{1 + 4n})$. For each point, it can be checked that the function is smaller than $2^n$ in the limit $n \to \infty$.
\end{proof}

\begin{definition}[Ornstein-Uhlenbeck Operator]\label{def:ornstein}
    For a $\rho \in [-1,1]$, we define the \emph{Ornstein-Uhlenbeck}  (or \emph{Gaussian noise}) operator $U_\rho$ as the operator that maps a distribution $F$ on $\R$ to the distribution of the random variable $\rho X + \sqrt{1-\rho^2}Z$, where $X \sim F$ and $Z \sim \cN(0,1)$ independently of $X$. 
\end{definition}

A well-known property of \emph{Ornstein–Uhlenbeck} operator is that it operates diagonally with respect to Hermite polynomials.
\begin{fact}[see, e.g.,~\cite{AoBF14}] \label{fact:eigenfunction} 
    For any normalized Hermite polynomial $h_i$, any distribution $F$ on $\R$, and $\delta\in [-1,1]$, it holds that $\E_{X \sim U_\rho F}[h_i(X)] = \rho^i \E_{X \sim F}[h_i(X)]$.
\end{fact}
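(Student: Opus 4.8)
The plan is to reduce the claim to the pointwise identity
\[
\E_{Z \sim \cN(0,1)}\big[h_i(\rho x + \sqrt{1-\rho^2}\,Z)\big] = \rho^i\, h_i(x) \qquad \text{for every } x \in \R,
\]
and then integrate over $X \sim F$. Indeed, by the definition of $U_\rho$ (\Cref{def:ornstein}) and Fubini's theorem --- justified since $h_i$ has polynomial growth and (as implicit in the statement) the relevant moments of $F$ are finite --- one has $\E_{X \sim U_\rho F}[h_i(X)] = \E_{X \sim F}\big[\E_{Z \sim \cN(0,1)}[h_i(\rho X + \sqrt{1-\rho^2}\,Z)]\big]$, so once the pointwise identity is established the result follows by pulling the constant $\rho^i$ out of the outer expectation.

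To prove the pointwise identity I would use the exponential generating function of the unnormalized probabilist's Hermite polynomials, namely $\sum_{n \ge 0}\frac{s^n}{n!}He_n(x) = \exp(sx - s^2/2)$, together with the relation $h_n = He_n/\sqrt{n!}$ coming from the normalization recalled in \Cref{sec:hermite}. Fixing $x$ and $\rho$ and taking $Z \sim \cN(0,1)$, the Gaussian moment generating function gives
\[
\E_Z\!\left[\exp\!\big(s(\rho x + \sqrt{1-\rho^2}\,Z) - \tfrac{s^2}{2}\big)\right] = e^{s\rho x - s^2/2}\,\E_Z\big[e^{s\sqrt{1-\rho^2}\,Z}\big] = e^{s\rho x - s^2/2}\,e^{s^2(1-\rho^2)/2} = e^{s\rho x - s^2\rho^2/2}.
\]
The left-hand side expands as $\sum_{n}\frac{s^n}{n!}\,\E_Z[He_n(\rho x + \sqrt{1-\rho^2}\,Z)]$, while the right-hand side equals $\sum_n \frac{(s\rho)^n}{n!}He_n(x) = \sum_n \frac{s^n}{n!}\,\rho^n He_n(x)$. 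Matching the coefficient of $s^n/n!$ yields $\E_Z[He_n(\rho x + \sqrt{1-\rho^2}\,Z)] = \rho^n He_n(x)$, and dividing by $\sqrt{n!}$ gives the normalized version, which is exactly the pointwise identity above with $i = n$.

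The only slightly delicate points are the interchange of summation and expectation inside the generating-function computation (handled by dominated convergence, using that $\sum_n \frac{|s|^n}{n!}|He_n(x)|$ converges and that all exponential moments of a Gaussian are finite) and the Fubini step in the first paragraph; neither is a real obstacle. An essentially equivalent alternative is to prove $\E_Z[h_n(\rho x + \sqrt{1-\rho^2}\,Z)] = \rho^n h_n(x)$ directly by induction on $n$, using the three-term recurrence for $h_n$ and the derivative identity $h_n' = \sqrt{n}\,h_{n-1}$, but the generating-function argument is more compact.
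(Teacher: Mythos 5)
Your proof is correct. Note that the paper does not actually prove this statement: it is recorded as a known fact with a citation to the literature, so there is no internal argument to compare against. Your generating-function computation is the standard way to verify it --- the identity $\E_{Z\sim\cN(0,1)}\bigl[He_n(\rho x+\sqrt{1-\rho^2}\,Z)\bigr]=\rho^n He_n(x)$ obtained by matching coefficients of $e^{s\rho x-s^2\rho^2/2}$ is exactly the eigenfunction property of the Ornstein--Uhlenbeck semigroup, and passing to the normalized $h_n=He_n/\sqrt{n!}$ and integrating over $X\sim F$ gives the stated fact. The only point worth tightening is the dominated-convergence step for the term-by-term expectation in $Z$: the domination has to hold for the random argument $\rho x+\sqrt{1-\rho^2}\,Z$, not just for fixed $x$, which is handled by a Cram\'er-type bound $|He_n(y)|\lesssim \sqrt{n!}\,e^{y^2/4}$ together with the fact that $e^{(1-\rho^2)y^2/4}$ is Gaussian-integrable; alternatively, since $h_i$ is a single polynomial of fixed degree, you can bypass the generating function altogether and compute $\E_Z[h_i(\rho x+\sqrt{1-\rho^2}\,Z)]$ directly from the binomial expansion and Gaussian moments, which removes any interchange issue. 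The Fubini step in your first paragraph does require $\E_{X\sim F}|X|^i<\infty$, which is implicitly assumed wherever the paper uses the fact.
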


\section{Omitted Details from \Cref{sec:main1}} \label{sec:omitted_from_lower_bound}

We restate and prove \Cref{thm:main1}.

\MAINRESULTI*

\begin{proof}   
    Let $S$ be the set from \Cref{prop:main} and $A$ be the uniform distribution on $S$. That is, $A$ is a discrete distribution supported on $k$ points and is guaranteed to match the first $m = \Omega(\log k)$ moments with $\cN(0,1)$. Let $B = U_{\rho} A$ be the distribution which is obtained by applying the Ornstein-Uhlenbeck operator (\Cref{def:ornstein}) with $\rho = \sqrt{\delta}$.
    Then $B$ is a $k$-GMM with uniform weights and variance $1-\delta$ for each component. Moreover, for every $t = 0,1,\ldots,m$ we have the following for the $i$-th Hermite polynomial
    \begin{align} \label{eq:momentequality}
        \E_{x \sim B}[h_i(x)] = \E_{x \sim U_{\rho} A}[h_i(x)] = \rho^i \E_{x \sim A}[h_i(x)] = \rho^i \E_{x \sim \cN(0,1)}[h_i(x)] \;,
    \end{align}
    where the above uses \Cref{fact:eigenfunction} and the moment matching property of $A$. Since  $\E_{x \sim \cN(0,1)}[h_i(x)] = 1$ for $i=0$ and $\E_{x \sim \cN(0,1)}[h_i(x)] = 0$ for all $i > 0$, \Cref{eq:momentequality} means that $\E_{x \sim B}[h_i(x)] = \E_{x \sim \cN(0,1)}[h_i(x)]$, i.e., $B$ matches the first $m$ moments with $\cN(0,1)$.

    An application of \Cref{prop:SQhardness_NGCA} with $\nu=0$ shows that the NGCA \Cref{prob:generic_hypothesis_testing} that uses the distribution $B$ from above has SQ complexity $d^{\Omega(\log k)}$. Noting that this problem is equivalent to \Cref{def:pancakes} completes the proof of \Cref{thm:main1}.

    We conclude by addressing an edge case. The proof above implicitly assumes that the set $S$ contains distinct points (as otherwise, the weights in the corresponding GMM might not all be exactly $1/k$). Here, we argue that \Cref{thm:main1} still holds even if $S$ contains duplicates. Specifically, one can perturb each point in $S$ by a at most an arbitrarily small amount $\Delta$, ensuring that the points become distinct and that the moments in the resulting GMM distribution $B$ are being matched up to an error of $\nu$ rather than exactly (note that for any $\nu$ we can find a perturbation so that the moment gap is no more than $\nu$). The SQ lower bound from \Cref{prop:SQhardness_NGCA} then implies that we either require $2^{d^{\Omega(1)}}$ queries or at least one query with accuracy $d^{-m/16} + (1 + o(1))\nu$. By choosing $\Delta$ appropriately small, we can ensure that $\nu < d^{-m/16}$.
\end{proof}

\section{Omitted Details from \Cref{sec:lowerboundratio}}\label{sec:appendix_lowerboundratio}

We restate and prove the following corollary of \Cref{lem:gm}.
\AMGMCOR*
\begin{proof}
    We can multiply both sides of the conclusion of  \Cref{lem:gm} (\Cref{eq:desired_original}) with the Gaussian density $e^{-y^2/2}/\sqrt{2\pi}$ and then integrate both sides. This yields
    \begin{align*}
        \int_{-\sqrt{t}}^{\sqrt{t}} &\frac{1}{\sqrt{2\pi}}e^{-y^2/2} \exp\left( \frac{1}{|I|} \int_{x \in I}  \ln |p(x)| \mathrm{d} x  \right) \d y \geq \int_{-\sqrt{t}}^{\sqrt{t}} \frac{|p(y)|}{2^{O(k')}} \frac{1}{\sqrt{2\pi}}e^{-y^2/2}\d y \;.
    \end{align*}
    The left hand side is $\Theta\left(\exp\left( (1/|I|)\int_{x \in I}  \ln |p(x)| \mathrm{d} x  \right)\right)$. The right hand side is
    \begin{align*}
        \frac{1}{\sqrt{2\pi}} &\int_{-\sqrt{t}}^{\sqrt{t}} e^{-y^2/2}\frac{|p(y)|}{2^{O(k')}} \d y
        =  \left( \E_{y \sim \cN(0,1)}[ |p(y)| ] - \E_{y \sim \cN(0,1)}[ |p(y)| \mathbbm{1}(|y| > \sqrt{t})]  \right)2^{-O(k')} \\
        &\geq   \left( \E_{y \sim \cN(0,1)}[ |p(y)| ] - \|p\|_2 \sqrt{\Pr_{y \sim \cN(0,1)}[|y| > t]}\right) 2^{-O(k')}   \tag{using the Cauchy–Schwarz inequality}\\
        &\geq  \left(\E_{y \sim \cN(0,1)}[ |p(y)| ]  - \|p\|_2 e^{-t^2/2} \right)2^{-O(k')} \\
        &\geq  \left(\|p\|_2  - \|p\|_2 e^{-t^2/2} \right)2^{-O(k')} \tag{using \Cref{cl:norms-relation}}\\
        &= \|p\|_2/2^{O(k')} \;. \tag{using $t \geq 1$}
    \end{align*}
\end{proof}

\section{Omitted Details from \Cref{sec:testing_full}} \label{sec:omitted_from_testing}

The lemma below shows that if the approximate momement matching condition is violated, then it has to be violated by a monomial (up to a small deterioration of parameters).

\begin{lemma}\label{lem:monomial_guarantee}
Let $C$ be a sufficiently large absolute constant.
If there exists a polynomial $g : \R \to \R$ of degree $r$ and unit norm ($\E_{x \sim \cN(0,1)}[g^2(x)] = 1$) such that 
\begin{align*}
    \left| \E_{x \sim A}[g(x)] - \E_{x \sim \cN(0,1)}[g(x)] \right| >   \gamma  \;,
\end{align*}
then there exists a monomial $x^i$ with $i \leq r$ for which
\begin{align*}
    \left| \E_{x \sim A}[x^i] - \E_{x \sim \cN(0,1)}[x^i] \right| >   2^{-C \cdot r}\gamma \;.
\end{align*}
\end{lemma}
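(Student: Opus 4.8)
The statement says: if \emph{some} unit-norm degree-$r$ polynomial $g$ witnesses a moment-matching violation of size $\gamma$ between $A$ and $\cN(0,1)$, then \emph{some} monomial $x^i$ with $i\le r$ witnesses a violation of size $\gtrsim 2^{-Cr}\gamma$. The natural approach is to expand $g$ in the monomial basis, $g(x)=\sum_{i=0}^r c_i x^i$, and use that the map between the Hermite coefficients of $g$ and its monomial coefficients has entries of magnitude $2^{O(r)}$ (this is exactly \Cref{fact:max_coeff}, combined with the fact that the Hermite polynomials themselves form an orthonormal family so $\|g\|_2=1$ bounds the Hermite coefficients by $1$). By linearity of expectation, the violation for $g$ is $\big|\sum_i c_i(\E_{x\sim A}[x^i]-\E_{x\sim\cN(0,1)}[x^i])\big| > \gamma$. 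By the triangle inequality, some term $|c_i|\cdot|\E_{x\sim A}[x^i]-\E_{x\sim\cN(0,1)}[x^i]| > \gamma/(r+1)$, so it remains only to show each $|c_i|$ is at most $2^{O(r)}$, which gives the monomial violation $|\E_{x\sim A}[x^i]-\E_{x\sim\cN(0,1)}[x^i]| > \gamma/((r+1)2^{O(r)}) \ge 2^{-Cr}\gamma$ for $C$ large enough.

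\textbf{Bounding the monomial coefficients of a unit-norm polynomial.} The one substantive step is: if $\|g\|_2 = 1$ and $\deg g \le r$, then writing $g(x)=\sum_{i=0}^r c_i x^i$ we have $|c_i| \le 2^{O(r)}$ for all $i$. To see this, write $g = \sum_{j=0}^r a_j h_j$ in the normalized Hermite basis; orthonormality gives $\sum_j a_j^2 = \|g\|_2^2 = 1$, so $|a_j|\le 1$ for every $j$. Now substitute the monomial expansions of the $h_j$ (from \Cref{eq:hermite_formula}, with coefficient bound $2^{O(r)}$ from \Cref{fact:max_coeff}): the $x^i$-coefficient of $g$ is $c_i = \sum_{j} a_j (\text{coeff of } x^i \text{ in } h_j)$, a sum of at most $r+1$ terms each of absolute value at most $1\cdot 2^{O(r)}$, hence $|c_i| \le (r+1)2^{O(r)} = 2^{O(r)}$.

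\textbf{Assembling the argument and a caveat on $i=0$.} Concretely: pick $i$ maximizing $|c_i|\cdot|\E_{x\sim A}[x^i]-\E_{x\sim\cN(0,1)}[x^i]|$; this quantity is $> \gamma/(r+1)$ by the triangle inequality applied to $\gamma < |\sum_i c_i(\E_{x\sim A}[x^i]-\E_{x\sim\cN(0,1)}[x^i])|$, so $|\E_{x\sim A}[x^i]-\E_{x\sim\cN(0,1)}[x^i]| > \gamma/((r+1)|c_i|) \ge \gamma/((r+1)2^{O(r)}) = 2^{-C' r}\gamma$ for a suitable constant $C'$; choosing the constant $C$ in the statement at least $C'$ finishes it. I do not anticipate a serious obstacle here — \Cref{fact:max_coeff} does all the heavy lifting — but one minor bookkeeping point: the $i=0$ term contributes $c_0(\E_{x\sim A}[1]-\E_{x\sim\cN(0,1)}[1]) = c_0\cdot 0 = 0$ since both are probability distributions, so it drops out harmlessly; alternatively one notes $g$ and $g - c_0$ have the same violation and the latter has no constant term. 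Either way the conclusion $i\le r$ (and in fact $1\le i\le r$) holds as stated.
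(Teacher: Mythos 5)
Your proof is correct and is essentially the paper's argument in direct (rather than contrapositive) form: the paper assumes all monomial deviations are small and uses Parseval plus the Hermite coefficient bound (\Cref{fact:max_coeff}) with Cauchy--Schwarz to contradict the $\gamma$-violation of $g$, while you run the same basis-change estimate forward, pigeonholing over the monomial coefficients of $g$ and bounding those coefficients by $2^{O(r)}$ via the identical two facts. Your treatment of the $i=0$ term is a correct (and slightly more explicit) handling of a point the paper glosses over.
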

\begin{proof}
    We will show this by contradiction. Suppose that every monomial of degree $i \leq r$ satisfies $ \left| \E_{x \sim A}[x^i] - \E_{x \sim \cN(0,1)}[x^i] \right| \leq   2^{-C r}\gamma$. Then, if we expand $g(x)$ in the hermite basis, i.e., $g(x) = \sum_{i=1}^r a_i h_i(x)$, we have
    \begin{align}
        \left| \E_{x \sim A}[g(x)] - \E_{x \sim \cN(0,1)}[g(x)] \right|
          &\leq \sum_{i=1}^r |a_i| \left| \E_{x \sim A}[h_i(x)] - \E_{x \sim\cN(0,1)}[h_i(x)] \right| \notag \\
          &\leq \sqrt{\sum_{i=1}^r |a_i|^2} \sqrt{\sum_{i=1}^r\left| \E_{x \sim A}[h_i(x)] - \E_{x \sim\cN(0,1)}[h_i(x)] \right|^2} \;, \label{eq:firstpartofproof}
    \end{align}
    where the second step uses Cauchy-Schwarz inequality. 
    To further upper bound this, first note that $\sqrt{\sum_{i=1}^r |a_i|^2}= \|g\|_2 = 1$, by Parseval's identity and our assumption of unit norm. 
    For the other factor above, we can write $h_i(x) = \sum_{j=1}^i b_{ij} x^j$ and use the fact that $|b_{ij}| \leq 2^{O(i)}$ (\Cref{fact:max_coeff}). Then,
    \begin{align*}
        &\left| \E_{x \sim A}[h_i(x)] - \E_{\cN(0,1)}[h_i(x)]  \right|  
         \leq \sum_{j=1}^i |b_{ij}|  \left| \E_{x \sim A}[x^i] - \E_{\cN(0,1)}[x^i] \right| \\
        &\leq 2^{O(r) } \sum_{j=1}^i     \left| \E_{x \sim A}[x^i] - \E_{\cN(0,1)}[x^i] \right| 
         \leq 2^{O(r) }r 2^{-C r}\gamma < 2^{-C r /2}\gamma \;,
    \end{align*}
    Plugging that in \Cref{eq:firstpartofproof}, we obtain $| \E_{x \sim A}[g(x)] - \E_{\cN(0,1)}[g(x)] | \leq  \sqrt{r} 2^{-C r /2}\gamma < \gamma$, which gives the desired contradiction.
\end{proof}

Our main result, \Cref{thm:main2} will be based on \Cref{lem:testing_alg} and our impossibility of matching result, \Cref{prop:main} that will allow us to use $\tilde m = O(\log(k) + k')$ in \Cref{lem:testing_alg}.
However, \Cref{prop:main} concerns only discrete distributions, while the parallel pancakes uses a Gaussian mixture. In order to bridge this difference, we show the following lemma, which states that the impossibility of moment matching can be indeed extended to Gaussian mixtures.

\MOMENTRELATIONSHIP*
\begin{proof}

    We can write the distribution $B$ as the result of applying the Ornstein-Uhlenbeck (\Cref{def:ornstein}) operator to $A$, i.e., $B = U_{\rho} A$ with $\rho = \sqrt{\delta}$. By \Cref{fact:eigenfunction}, we have the following for every $i=1,2,\ldots$:
    \begin{align}\label{eq:diagonal}
        \E_{x \sim A}[h_i(x)] = \delta^{-i/2} \E_{x \sim B}[h_i(x)] \;.
    \end{align}

    Fix $i \in [m']$. Using the above and the fact that $B$ matches approximately the $m$ first moments with $\cN(0,1)$ (in the sense of \Cref{{eq:moment_match_aprox}}) we have the following for the gap between the expectations of the Hermite polynomial $h_i$ under $A$ and $\cN(0,1)$:
    \begin{align*}
        \left| \E_{x \sim A}[h_i(x)] - \E_{x \sim \cN(0,1)}[h_i(x)]  \right|
        &= \left| \delta^{-i/2} \E_{x \sim B}[h_i(x)] - \E_{x \sim \cN(0,1)}[h_i(x)]  \right| \tag{using \Cref{eq:diagonal}}\\
        &= \left| \delta^{-i/2} \E_{x \sim B}[h_i(x)]   \right| \tag{using $\E_{x \sim \cN(0,1)}[h_i(x)]=0$ for $i \geq 1$}\\
        &\leq  \delta^{-i/2} \left(\left| \E_{x \sim \cN(0,1)}[h_i(x)] \right| + \lambda  \right) \tag{using \Cref{eq:moment_match_aprox}}\\
        &= \delta^{-i/2} \lambda \tag{using $\E_{x \sim \cN(0,1)}[h_i(x)]=0$ for $i \geq 1$}
    \end{align*}
    For the special case $i=0$, we have exact matching, $\E_{x \sim A}[h_0(x)] = \E_{x \sim \cN(0,1)}[h_0(x)]$ since $h_0(x)=1$.

    Now, in order to show \Cref{eq:goal}, consider a general polynomial $p(x)$ with $\E_{x \sim \cN(0,1)}[p^2(x)] = 1$. Expanding in the Hermite basis, we can write $p(x) = \sum_{i \in [m']} a_i h_i(x)$ with $\sum_{i\in [m']}a_i^2 = 1$ (which means that $\E_{x \sim \cN(0,1)}[p^2(x)]=1$ by Parseval's identity). We have
    \begin{align*}
        \left| \E_{x \sim A}[p(x)] - \E_{x \sim \cN(0,1)}[p(x)] \right|
        &\leq \sqrt{\sum_{i=1}^{m'} a_i^2} \sqrt{ \sum_{i=1}^{m'} \left| \E_{x \sim A}[h_i(x)] - \E_{x \sim \cN(0,1)}[h_i(x)] \right|^2} \\
        &\leq \sqrt{m'}  \;\max_{i \in [m']}\left| \E_{x \sim A}[h_i(x)] - \E_{x \sim \cN(0,1)}[h_i(x)] \right| \\
        &\leq \sqrt{m'} \; \delta^{-m'/2} \; \lambda \;.
    \end{align*}

\end{proof}

We now combine the previous statements to show our main theorem.

\end{document}